\newcommand{\modelname}{DG-Mamba}
\newcommand{\ie}{\textit{i.e.}}
\newcommand{\eg}{\textit{e.g.}}
\newcommand{\etc}{\textit{etc.}}
\theoremstyle{definition}
\newtheorem{asm}{Assumption}
\newtheorem{mydef}{Definition}
\newtheorem{Prop}{Proposition}
\newtheorem{Lemma}{Lemma}
\title{DG-Mamba: Robust and Efficient Dynamic Graph Structure Learning with Selective State Space Models}
\author{
    Haonan Yuan\textsuperscript{\rm 1}, Qingyun Sun\textsuperscript{\rm 1}, Zhaonan Wang\textsuperscript{\rm 2}, Xingcheng Fu\textsuperscript{\rm 3}, Cheng Ji\textsuperscript{\rm 1},\\Yongjian Wang\textsuperscript{\rm 4}, Bo Jin\textsuperscript{\rm 4}, Jianxin Li\textsuperscript{\rm 1}\thanks{Corresponding author.}
}
\begin{document}

\maketitle

\begin{abstract}
Dynamic graphs exhibit intertwined spatio-temporal evolutionary patterns, widely existing in the real world. Nevertheless, the structure incompleteness, noise, and redundancy result in poor robustness for Dynamic Graph Neural Networks (DGNNs).
Dynamic Graph Structure Learning (DGSL) offers a promising way to optimize graph structures. However, aside from encountering unacceptable quadratic complexity, it overly relies on heuristic priors, making it hard to discover underlying predictive patterns.
How to efficiently refine the dynamic structures, capture intrinsic dependencies, and learn robust representations, remains under-explored.
In this work, we propose the novel \textbf{\modelname}, a robust and efficient \underline{\textbf{D}}ynamic \underline{\textbf{G}}raph structure learning framework with the Selective State Space Models (\underline{\textbf{Mamba}}).
To accelerate the spatio-temporal structure learning, we propose a kernelized dynamic message-passing operator that reduces the quadratic time complexity to linear.
To capture global intrinsic dynamics, we establish the dynamic graph as a self-contained system with State Space Model.
By discretizing the system states with the cross-snapshot graph adjacency, we enable the long-distance dependencies capturing with the selective snapshot scan. 
To endow learned dynamic structures more expressive with informativeness, we propose the self-supervised Principle of Relevant Information for DGSL to regularize the most relevant yet least redundant information, enhancing global robustness.
Extensive experiments demonstrate the superiority of the robustness and efficiency of our \modelname~compared with the state-of-the-art baselines against adversarial attacks.
\end{abstract}


\section{Introduction}
Dynamic graphs are ubiquitous in real world, spanning domains such as social media~\cite{sun2022aligning}, traffic networks~\cite{guo2021learning}, financial transactions~\cite{pareja2020evolvegcn}, and human mobility~\cite{zhou2023predicting}, \etc~Their complex spatial and temporal correlation patterns present significant challenges across various downstream deployments. Leveraging exceptional expressive capabilities, Dynamic Graph Neural Networks (DGNNs) intrinsically excel at dynamic graph representation learning by modeling both spatial and temporal predictive patterns, which enjoy the combined merits of both GNNs and sequential models.

\begin{figure}[t]
\begin{minipage}{\textwidth}
    \raggedright
    \begin{adjustbox}{width=0.485\linewidth}
    \input{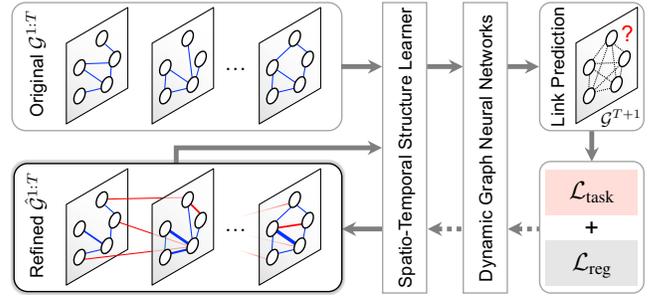}
    \end{adjustbox}
\end{minipage}
\vspace{-0.5em}
\caption{A general paradigm of DGSL.}
\label{fig:dgsl}
\vspace{-1.23em}
\end{figure}

Recently, there has been a growing research trend on enhancing the efficacy of DGNNs~\cite{zhang2022dynamic, zhu2023wingnn, yuan2024dynamic}. This includes a specific focus on improving their ability to capture the intricate spatio-temporal correlations that surpass the first-order Weisfeiler-Leman (1-WL) graph isomorphism test~\cite{XuHLJ19}. Most of the DGNNs perform spatio-temporal message-passing over potentially flawed graph structures, assuming the observed graph structures can reflect the ground-truth relationships between nodes. However, this fundamental assumption often leads to suboptimal robustness and generalization performance due to the inherent incompleteness, noise, and redundancy in graph structures, which also make the learned representations susceptible to noise and intended adversarial attacks~\cite{zugner2018adversarial}.


Graph Structure Learning (GSL) has emerged as a crucial graph learning paradigm for iteratively optimizing structures and representations~\cite{sun2022graph,sun2022position,wei2024poincare,fu2023hyperbolic}. Similarly, the GSL for dynamic graphs (DGSL) aims to refine spatial- and temporal-wise structures (Figure~\ref{fig:dgsl}). Despite its potential, DGSL faces several challenges. On one hand, the intricate coupling of spatial and temporal dimensions makes it particularly vulnerable to noise and adversarial attacks in open data environments, which significantly hampers its performance and robustness. Moreover, the predictive patterns indicate the causal behind real-world decision-making. However, current approaches often emphasize the optimization of local structures, overlooking latent long-range dependencies. This oversight results in a notable decline in performance on long-sequence dynamic graph prediction tasks as the sequence length increases.

On the other hand, DGSL demonstrates vast complexity challenges. In addition to the overwhelming quadratic complexity within individual graphs caused by node-pair probabilistic measuring~\cite{wu2022nodeformer}, attention-based sequential models (\eg, Transformer~\cite{vaswani2017attention}) also encounter quadratic complexity when computing step-pair attentions~\cite{shen2021efficient}. As the scale of nodes and sequence length grow explosively, this complexity bottleneck severely hinders the advancement of existing DGSL frameworks.
Several works have attempted to address the complexity problems from spatial and temporal perspectives~\cite{wu2022nodeformer,gu2023mamba}.
However, dynamic graphs function as a system with intricate spatio-temporal couplings, necessitating a holistic framework for complexity reduction across both dimensions.
Notably, the complexities of both dimensions are not orthogonal but interdependent. Attempting to isolate and reduce spatial complexity without considering temporal complexity, or vice versa, is insufficient. Their interdependence significantly amplifies the overall computational burden.
This multifaceted framework is essential for achieving efficient and robust dynamic graph learning, ensuring it can not only handle quadratic complexity but also capture insightful correlations.

\textbf{Research Question:} \textit{How to capture long-range intrinsic dependency of underlying predictive patterns to derive robust representations against adversarial attacks over the denoised structures while \textbf{simultaneously} reduce both spatial and temporal time complexity from quadratic to linear?}

\textbf{Present Work.}
In this work, we introduce \textbf{\modelname}, a robust and efficient \underline{\textbf{D}}ynamic \underline{\textbf{G}}raph structure learning framework with the selective state space models (\underline{\textbf{Mamba}}).
We propose a kernelized dynamic message-passing operator that reduces the quadratic time complexity to linear to accelerate the spatio-temporal structure learning. To break the local Markovian dependence assumption limitations and capture global intrinsic dynamics, we model the dynamic graph with the State Space Model as a system, and discretize the system states with the cross-snapshot graph adjacency. To endow the learned dynamic structures with informative expressiveness, we propose the self-supervised Principle of Relevant Information for DGSL to regularize the most relevant yet least redundant information, enhancing global robustness for downstream tasks. \textbf{Our contributions are:}
\begin{itemize}[leftmargin=*]
    \item We propose a robust and efficient dynamic graph structure learning framework \modelname~with linear time invariance property for robust representations against adversarial attacks. To the best of our knowledge, this is the first trial in which the spatio-temporal computational complexity of DGSL has been simultaneously reduced to linear.
    \item The kernelized dynamic graph message-passing operator behaves efficient DGSL with the help of state-discretized SSM. The structural information between the original and the learned is regularized with the proposed PRI for DGSL to enhance the robustness of the global representation. 
    \item Experiments on real-world and synthetic dynamic graphs validate the effectiveness, robustness, and efficiency of the proposed \modelname, demonstrating its superiority over 12 state-of-the-art baselines against adversarial attacks.
\end{itemize}

\section{Related Work}
\label{sec:related_work}
\subsection{Robust Dynamic Graph Learning}
Dynamic Graph Neural Networks (DGNNs) are prevalent in learning representations by inherently modeling both spatial and temporal features~\cite{han2021dynamic}. However, dynamic graphs naturally contain noise and redundant features irrelevant to the target, which compromises DGNN performance. Additionally, DGNNs are prone to the over-smoothing phenomenon, making them less robust and vulnerable to perturbations and adversarial attacks~\cite{zhu2023wingnn}.
Compared to robust GNNs for static graphs, there are no DGNNs tailored for efficient robust representation learning, currently.

\subsection{Dynamic Graph Structure Learning}
Graph Structure Learning (GSL) has gained much attention in recent years, aiming to simultaneously learn a denoised structure and robust representations~\cite{zhu2021deep}, where existing works have successfully investigated GSL methods for static graphs. However, structure learning for dynamic graphs (DGSL) remains largely under-explored, which faces the significant challenge of the computational efficiency bottleneck, as existing methods exhibit quadratic complexity in both spatial and temporal dimensions, rendering them impractical for large-scale and long-sequence dynamic graphs.

\subsection{Graph Modeling with State Space Models}
State Space Models (SSMs) are foundations for modeling dynamic systems. Recently, Mamba~\cite{gu2023mamba} has shown promising performance in efficient sequence modeling. Intuitively, there are several explorations of applying SSMs to graph modeling by converting the non-Euclidean structures to token sequence~\cite{behrouz2024graph, wang2024graph, huang2024can} but present
unique challenges due to the lack of canonical node ordering. Further, simply transforming dynamic graphs into sequences for handling by SSMs is less satisfying~\cite{behrouz2024graph, wang2024graph}, as the spatio-temporal coupling of long-range dependencies is difficult to capture, and informative feature patterns are overlooked.
\section{Preliminary}
\textbf{Notation.}
We primarily consider the discrete dynamic representation learning. A discrete dynamic graph is denoted as $\mathbf{DG} = \{ \mathcal{G}^t \}_{t=1}^{T}$, where $T$ is the time length. $\mathcal{G}^{t} = \left(\mathcal{V}^{t}, \mathcal{E}^{t}\right)$ is the graph at time $t$, where $\mathcal{V}^{t}$ is the node set and $\mathcal{E}^{t} $ is the edge set. Let $\mathbf{A}^{t} \in \{0,1\}^{N \times N}$ be the adjacency matrix and $\mathbf{X}^{t} \in \mathbb{R}^{N \times d}$ be the node features, where $N = | \mathcal{V}^t |$ denotes the number of nodes and $d$ denotes the feature dimension. 

\textbf{Dynamic Graph Representation Learning.}
As the most challenging task of dynamic graph representation learning, the future link prediction aims to train a model $f_{{\boldsymbol{\theta}}}: \mathcal{V} \times \mathcal{V} \mapsto \{0,1\}^{N \times N}$ that predicts the existence of edges at $T+1$ given historical graphs $\mathcal{G}^{1:T}$ and next-step node features $\mathbf{X}^{T+1}$. Concretely, the $f_{{\boldsymbol{\theta}}} = w\circ g$ is compound of a encoder $w(\cdot)$ and a link predictor $g(\cdot)$, \ie, $\mathbf{Z}^{T+1} = w(\mathcal{G}^{1:T}, \mathbf{X}^{T+1})$ and $\hat{\mathbf{Y}}^{T+1} = g(\mathbf{Z}^{T+1})$. The target is to iteratively learn the refined graph $\hat{\mathcal{G}}^{1:T}$ with corresponding robust dynamic graph representations for downstream tasks efficiently.
\begin{figure*}[!t]
\begin{minipage}{\textwidth}
    \hspace{-0.2cm}
    \begin{adjustbox}{width=1.015\linewidth}
    \input{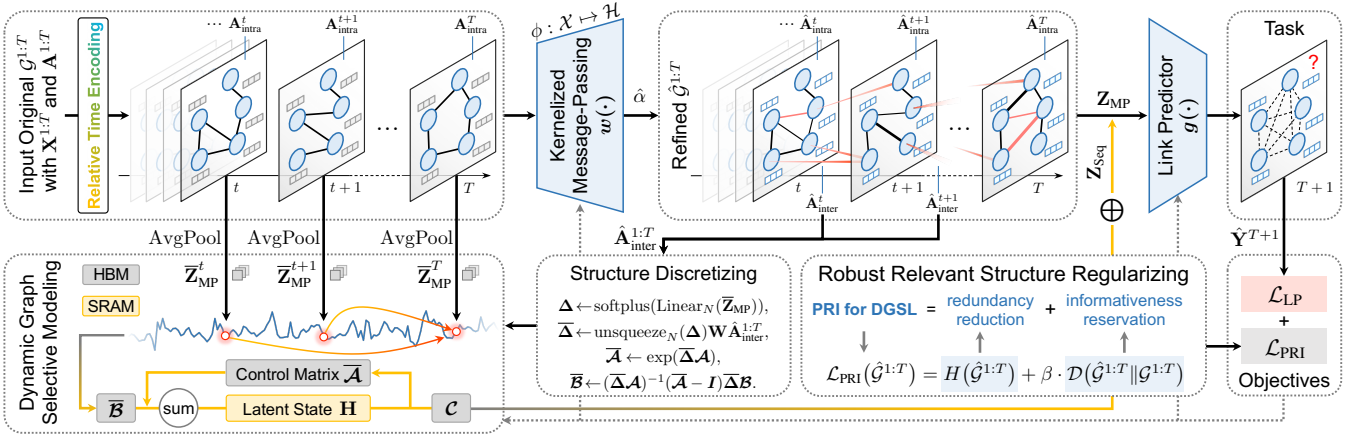}
    \end{adjustbox}
\end{minipage}
\vspace{-0.35cm}
\caption{The framework of \modelname. (a) Kernelized message-passing mechanism learns both intra- and inter-graph weights with linear time complexity. (b) Long-range dependencies are strengthened by selective modeling with parameters discretized by learned inter-graph structures. (c) PRI for DGSL is proposed to guarantee robustness against noise and adversarial attacks.}
\label{fig:framework}
\vspace{-0.305cm}
\end{figure*}
\section{\modelname: Robust and Efficient\\Dynamic Graph Structure Learning}
This section elaborates on \modelname~with its framework shown in Figure~\ref{fig:framework}.
First, we propose a kernelized dynamic graph message-passing operator to accelerate both the spatial and temporal structure learning.
Then, we model and discretize the dynamic graph system with inter-graph structures to capture long-range dependencies and intrinsic dynamics while retaining sequential linear complexity.
Lastly, we promote the robustness of representations by the self-supervised Principle of Relevant Information, trading off between the most relevant yet least redundant structural information.

\subsection{Kernelized Message-Passing for Efficient Dynamic Graph Structure Learning}
\label{sec:kernel}
To efficiently learn spatio-temporal structures, we propose the kernelized message-passing mechanism performing on a dynamic graph attention network, where the learnable edge weights play a role in both structure refinement and attentive feature aggregation.
As most literature presumed, we make the following assumption.

\begin{asm}[\textbf{Dynamic Graph Markov Dependence}]
\label{asm:markov}
    Assume the $\mathbf{DG} = \{ \mathcal{G}^t \}_{t=1}^{T}$ follows the Markov Chain: $\langle\mathcal{G}^1 \to \cdots \to \mathcal{G}^T\rangle$. Given graph $\mathcal{G}^t$ at present, the next-step graph $\mathcal{G}^{t+1}$ is conditionally independent of the past $\mathcal{G}^{<t}$, \ie,
\begin{equation}
    \mathbb{P}(\mathcal{G}^{t+1} \mid \mathcal{G}^{1:t}) = \mathbb{P}(\mathcal{G}^{t+1} \mid \mathcal{G}^{t}).
\end{equation}
\end{asm}
Assumption~\ref{asm:markov} declares the local dependencies that sculpt the weighted message-passing routes between graph pairs. Given node $u$ in $\mathcal{G}^t$ at the $l$-th layer, the attentive aggregation at the next ($l+$1)-th layer is,
\begin{equation}
    \mathbf{z}_u^{t(l+1)} = \sum_v \hat{\alpha}_{uv}^{t-1:t(l)}(\mathbf{W}\mathbf{z}_v^{\cdot(l)}), \text{for all}~v \!\in\! \mathcal{N}(u)^{t-1:t},
\label{eq:update}
\end{equation}
where $\mathbf{W}$ is learnable matrix. $\mathcal{N}(u)^{t-1:t}$ denotes $u$'s neighbors in $\mathcal{G}^{t-1}$ and $\mathcal{G}^t$. $\hat{\alpha}_{uv}^{t-1:t(l)}$ contains structure weights for $\mathcal{G}^t$ and message-passing routes between $\mathcal{G}^{t-1}$ and $\mathcal{G}^t$, \ie,
\begin{equation}
    \hat{\alpha}_{uv}^{t-1:t(l)}=\frac{\exp(\sigma((\mathbf{W}\mathbf{z}_u^{t(l)})^\top(\mathbf{W}\mathbf{z}_v^{\cdot(l)})))}{\sum_m \exp(\sigma((\mathbf{W}\mathbf{z}_u^{t(l)})^\top(\mathbf{W}\mathbf{z}_m^{\cdot(l)})))}.
\label{eq:strcuture_learning}
\end{equation}
Note that, we omit the limits of the summations always for nodes in the $\mathcal{N}(u)^{t-1:t}$ for brevity.
Intuitively, Softmax pair-wise edge weights updating and representation aggregation in Eq.~\eqref{eq:update} and Eq.~\eqref{eq:strcuture_learning} contribute to unacceptable quadratic complexity for dynamic graph structure learning. Inspired by kernel-based methods that employ kernel functions to model edge weights~\cite{zhu2021deep}, we combine Eq.~\eqref{eq:update} and Eq.~\eqref{eq:strcuture_learning} with kernel $k(\cdot,\cdot)$ for measuring similarity, \ie,
\begin{equation}
    \mathbf{z}_u^{t(l+1)} = \sum_v \frac{k(\mathbf{W}\mathbf{z}_u^{t(l)}, \mathbf{W}\mathbf{z}_v^{\cdot(l)})}{\sum_m k(\mathbf{W}\mathbf{z}_u^{t(l)}, \mathbf{W}\mathbf{z}_m^{\cdot(l)})} \cdot \mathbf{W}\mathbf{z}_v^{\cdot(l)}.
\label{eq:update_2}
\end{equation}

\textbf{Kernel Estimation by Random Features.}
Instead of explicitly finding a feature map $\varphi$ from representation space $\mathcal{X}$ to the reproducing kernel Hilbert space $\mathcal{H}$ and calculate the kernel $k(\cdot,\cdot)$ by inner production $\langle\cdot,\cdot\rangle_\mathcal{H}$, the Mercer's theorem guarantees an implicitly defined function $\phi$ exists if and only if $k(\cdot,\cdot)$ is a positive definite kernel~\cite{mercer1909xvi}, \ie,
\begin{equation}
    k(\mathbf{x}_1,\mathbf{x}_2)_\mathcal{X} = \langle\varphi(\mathbf{x}_1), \varphi(\mathbf{x}_2)\rangle_\mathcal{H} \doteq \phi(\mathbf{x}_1)^\top \phi(\mathbf{x}_2).
\end{equation}
In this way, Eq.~\eqref{eq:update_2} can be converted into a simpler form,
\begin{align}
    \mathbf{z}_u^{t(l+1)} &= \sum_v \frac{\phi(\mathbf{W}\mathbf{z}_u^{t(l)})^\top\phi(\mathbf{W}\mathbf{z}_v^{\cdot(l)})}{\sum_m \phi(\mathbf{W}\mathbf{z}_u^{t(l)})^\top\phi(\mathbf{W}\mathbf{z}_m^{\cdot(l)})}\cdot \mathbf{W}\mathbf{z}_v^{\cdot(l)}\\
    &=\frac{\phi(\mathbf{W}\mathbf{z}_u^{t(l)})^\top\sum_v\phi(\mathbf{W}\mathbf{z}_v^{\cdot(l)})\mathbf{W}\mathbf{z}_v^{\cdot(l)\top}}{\phi(\mathbf{W}\mathbf{z}_u^{t(l)})^\top\sum_m\phi(\mathbf{W}\mathbf{z}_m^{\cdot(l)})}.
\label{eq:update_3}
\end{align}
Note that, $\phi(\mathbf{W}\mathbf{z}_u^{t(l)})^\top$ is irreducible as it is the matrices operations. It is noteworthy that the two summations greatly contribute to decreasing the quadratic complexity as they can be computed once and stored for each $u$. Intuitively, kernel $k(\cdot,\cdot)$ can be estimated by the Positive Random Features (PRF)~\cite{choromanski2020rethinking} for Softmax approximation in Lamma 1 that satisfies the Mercer's theorem, \ie,
\begin{equation}
    \phi(\mathbf{x}) = \sum_{i=1}^m \frac{1}{\sqrt{m}}\exp\left(\boldsymbol{\omega}_i^\top\mathbf{x}-\frac{\|\mathbf{x}\|_2^2}{2}\right),
\label{eq:prf}
\end{equation}
where $m$ is the projection dimension of kernel $k(\cdot,\cdot)$, and $\boldsymbol{\omega}_i$ is the random feature shifting to the target embedding space. Under such settings, the structure updating and representation aggregation are differentiable by the Gumbel-Softmax reparameterization trick~\cite{wu2022nodeformer}. We integrate it to Eq.~\eqref{eq:update_3} with approximation error proofs in Appendix~\ref{sec:proof_1}.

\textbf{Intra- and Inter-Structure Efficient Query.}
As Eq.~\eqref{eq:update_3} merges both the edge reweighting and message-passing process in a unified and implicit manner, we can still explicitly obtain the optimized edge weights $\hat{\alpha}_{uv}^{t-1:t}$ by efficiently querying the approximated kernels of any node pairs with details in Appendix \ref{sec:proof_1}. We decompose overall $\hat{\alpha}_{uv}^{t-1:t}$ into intra-graph and inter-graph weights, which constructed two types of adjacency matrices for the consequent refining, \ie,
\begin{align}
    \hat{\mathbf{A}}^{t}_\text{intra} &= \{\hat{\alpha}_{uv}^{t-1:t}\}^{N\times N}, \text{where}~u,v\in \mathcal{V}^t,\label{eq:intra_structure}\\
    \hat{\mathbf{A}}^{t}_\text{inter} &= \{\hat{\alpha}_{uv}^{t-1:t}\}^{N\times N}, \text{where}~u\in \mathcal{V}^t, v\in \mathcal{V}^{t-1},\label{eq:inter_structure}
\end{align}
where $\hat{\mathbf{A}}^{t}_\text{intra}$ and $\hat{\mathbf{A}}^{t}_\text{inter}$ are then for the intra-graph and inter-graph structure regularizing (Eq.~\eqref{eq:structure_pri}), respectively.

\subsection{Long-range Dependencies Selective Modeling}
\label{sec:ssm}
Though the kernelized spatio-temporal message-passing has reduced the quadratic complexity, this success is contingent upon Assumption~\ref{asm:markov}, which substantially compromises with the Markov condition. However, real-world dynamic graphs can be exceedingly long and exhibit uncertain periodic variations, characterized by long-range dependencies between graph snapshots, where the local dependencies constraints significantly hinder their selective feature capturing.

\textbf{Dynamic Graph System Modeling.}
To strengthen the global long-range dependencies selective modeling without increasing the spatial computational complexity, we propose constructing the dynamic graph as a self-contained system with the State Space Models. Specifically, SSMs are defined with the state transition matrix $\boldsymbol{\mathcal{A}} \in \mathbb{R}^{n\times n}$ and two projection matrices $\boldsymbol{\mathcal{B}} \in \mathbb{R}^{n\times 1}$, $\boldsymbol{\mathcal{C}} \in \mathbb{R}^{1\times n}$. Given the continuous input sequence $\mathbf{x}(t) \in \mathbb{R}^l$, the SSM updates the latent state $\mathbf{h}(t) \in \mathbb{R}^{n\times l}$ and output $\mathbf{y}(t) \in \mathbb{R}^l$, \ie,
\begin{equation}
    \mathbf{h}^\prime(t) = \boldsymbol{\mathcal{A}}\mathbf{h}(t) + \boldsymbol{\mathcal{B}}\mathbf{x}(t), \;\mathbf{y}(t) = \boldsymbol{\mathcal{C}}\mathbf{h}(t),
\label{eq:ssm}
\end{equation}
where $\boldsymbol{\mathcal{A}}$ controls how current state evolves over time in a global view, $\boldsymbol{\mathcal{B}}$ describes how the input influences the state, and $\boldsymbol{\mathcal{C}}$ responses how the current state translate to the output.

To effectively integrate Eq.~\eqref{eq:ssm} within the deep learning settings, it is essential to discretize the continuous system. However, there are two critical problems to address: How to enable SSMs attention-aware to each time step in replacing the self-attention mechanism that consumes quadratic complexity? And how we incorporate the refined inter-graph structures into the state updating process such that weighted message-passing routes between graphs can be considered?

\textbf{Selective Discretizing and Parameterizing with Inter-Graph Structures.}
To address the aforementioned problems, we propose the Dynamic Graph Selective Scan Mechanism that discretizes the system parameters ($\boldsymbol{\mathcal{A}}$, $\boldsymbol{\mathcal{B}}$, $\boldsymbol{\mathcal{C}}$, \etc) function of each step input to selectively control which part of the graph sequence with how much attention can flow into the hidden state, and parameterized system states with the inter-graph structures $\hat{\mathbf{A}}^{1:T}_\text{inter}$ to integrate local dependencies into the long-range global dependencies capturing.

Denote $\mathbf{Z}_\text{MP} \in \mathbb{R}^{B\times T \times N\times D_0}$ as the learned node embeddings after spatio-temporal message-passing once in Eq.~\eqref{eq:update_3}, where $B$ means the batch size, $T$ denotes the dynamic graph length, and $D_0$ represents the latent feature dimension. Such that, the sequential input $\overline{\mathbf{Z}}_\text{MP} \in \mathbb{R}^{B\times T\times N}$ is the average pooling on $\mathbf{Z}_\text{MP}$, which implies the current latent state for each step. $\boldsymbol{\mathcal{A}} \in \mathbb{R}^{N\times D}$ is randomly initialized. $\boldsymbol{\mathcal{B}}$ and $\boldsymbol{\mathcal{C}}$ are further parameterized to each step input, \ie, 
\begin{equation}
    \boldsymbol{\mathcal{B}}, \boldsymbol{\mathcal{C}} \in \mathbb{R}^{B\times T \times D} \gets \operatorname{Linear}_{D}(\overline{\mathbf{Z}}_\text{MP}).
\end{equation}
Additionally, a timestep-wise parameter $\boldsymbol{\Delta} \in \mathbb{R}^{B\times T \times N}$ initialized by the input $\overline{\mathbf{Z}}_\text{MP}$ is utilized to discrete the dynamic graph system with the learned inter-graph structures, \ie,
\begin{align}
    &\boldsymbol{\Delta} \in \mathbb{R}^{B\times T \times N} \gets \operatorname{softplus}(\operatorname{Linear}_{N}(\overline{\mathbf{Z}}_\text{MP})),\\
    &\overline{\boldsymbol{\Delta}} \in \mathbb{R}^{B\times T \times N} \gets \operatorname{unsqueeze}_N(\boldsymbol{\Delta})\cdot \mathbf{W} \hat{\mathbf{A}}^{1:T}_\text{inter}.
\end{align}
Following the continuous signal reconstructing zero-order hold (ZOH) rules, parameters $\boldsymbol{\mathcal{A}}$ and $\boldsymbol{\mathcal{B}}$ are discretized by,
\begin{equation}
    \overline{\boldsymbol{\mathcal{A}}}\!\gets\! \exp(\overline{\boldsymbol{\Delta}}\boldsymbol{\mathcal{A}}),\overline{\boldsymbol{\mathcal{B}}}\!\gets\!(\overline{\boldsymbol{\Delta}} \boldsymbol{\mathcal{A}})^{-1}(\exp (\overline{\boldsymbol{\Delta}}\boldsymbol{\mathcal{A}})-\boldsymbol{I})\overline{\boldsymbol{\Delta}}\boldsymbol{\mathcal{B}}.
\end{equation}
Consequently, the output of the dynamic graph system is,
\begin{equation}
    \mathbf{H}_t = \overline{\boldsymbol{\mathcal{A}}} \mathbf{H}_{t-1} + \overline{\boldsymbol{\mathcal{B}}} (\overline{\mathbf{Z}}_\text{MP})_t, \;(\mathbf{Z}_\text{Seq})_t = \boldsymbol{\mathcal{C}}\mathbf{H}_t,
\label{eq:conv}
\end{equation}
where $\mathbf{H}$ is the latent states. The detailed dynamic graph selective scan is described in line 2 to line 11 in Algorithm~\ref{alg:alg_selective}. The output $\mathbf{Z}_\text{Seq} \in \mathbb{R}^{B\times T\times N}$ selectively emerged the temporal semantics with its long-range dependencies, which is then acting as the supervision on $\mathbf{Z}_\text{MP}$, leading to the combined node representation, \ie,
\vspace{-0.05cm}
\begin{equation}
    \hat{\mathbf{Z}} = \mathbf{Z}_\text{MP} + \lambda\cdot\operatorname{unsqueeze}_{D_0}(\mathbf{Z}_\text{Seq}),
\label{eq:merge}
\vspace{-0.05cm}
\end{equation}
where $\lambda$ is the hyperparameter. The convolution update in Eq.~\eqref{eq:conv} can scale up linearly in length with the hardware-aware algorithm described in Appendix~\ref{sec:hardware}.

\subsection{Robust Relevant Structure Regularizing}
\label{sec:pri}
The last milestone for the research goal is to strengthen the robustness of the updated representations against potential noise and adversarial attacks in surrounding environments.
This is a dual-purpose objective: while the learned implicit structure represents intrinsic dependencies, the physically-structured raw graphs contain rich interpretability semantics. Robust DGSL should expect to learn minimal but sufficient structural information from an information-theoretic view.



\textbf{Principle of Relevant Information (PRI).}
To reduce redundant structural information as well as reserve critical predictive patterns, we utilize the self-supervised PRI~\cite{principe2010information} to formulate the criteria for dynamic graph structure learning, which plays the role of structural regularizers.
\begin{mydef}[\textbf{PRI for DGSL}]
\label{def:pri}
    Given dynamic graph $\mathcal{G}^{1:T}$, the Principle of Relevant Information for DGSL aims to regularize the refined graph $\hat{\mathcal{G}}^{1:T}$ by,
    \begin{equation}
        \mathcal{L}_\text{PRI}(\hat{\mathcal{G}}^{1:T}) = H(\hat{\mathcal{G}}^{1:T}) + \beta\cdot\mathcal{D}(\hat{\mathcal{G}}^{1:T}\|\mathcal{G}^{1:T}),
    \label{eq:pri}
    \end{equation}
    where $H(\cdot)$ denotes the Shannon entropy that measures the redundancy of $\hat{\mathcal{G}}^{1:T}$. $\mathcal{D}(\cdot\|\cdot)$ is the divergence that reflects the discrepancy between two terms. The hyperparameter $\beta$ plays the trade-off between the redundancy reduction and predictive patterns reservation. Larger $\beta$ leads to more information reserved from the input dynamic graphs, and vice versa.
\end{mydef}
\noindent PRI for DGSL is indispensable for strengthening structure robustness in a self-supervised manner, for it encourages \modelname~emphasizes on the informative, discriminative, and invariant structural patterns across historical graph snapshots while filtering out potential noise and redundant information that damages the parameter fitting process.

\textbf{Derivation of PRI for DGSL.}
As optimize Eq.~\eqref{eq:pri} straightforwardly is indifferentiable, we approximately decompose it into respective spatial- and temporal-wise regularizing with learned intra- and inter-graph structures, \ie,
\begin{equation}
    \mathcal{L}_\text{PRI}(\hat{\mathcal{G}}^{1:T}) \triangleq \mathcal{L}_\text{PRI}(\hat{\mathbf{A}}^{1:T}_\text{intra}) + \mathcal{L}_\text{PRI}(\hat{\mathbf{A}}^{1:T}_\text{inter}).
\label{eq:structure_pri}
\end{equation}
To regularize intra-graph structures, we transform the divergence term into edge-level constraints with the loss equivalence guarantee in Appendix~\ref{sec:proof_2}, \ie,
\begin{align}
    &\mathcal{L}_\text{PRI}(\hat{\mathbf{A}}^{1:T}_\text{intra}) \doteq H(\hat{\mathbf{A}}^{1:T}_\text{intra}) +\beta_1\cdot \mathcal{L}_\text{edge},\label{eq:L_intra}\\
    \text{and}\quad&\mathcal{L}_\text{edge} = -\frac{1}{NT}\sum_{t=1}^T \sum_{u,v\in\mathcal{E}^t} \frac{1}{d(u)}\log \hat{\alpha}^t_{uv},\label{eq:L_edge}
\end{align}
where $\beta_1$ is the hyperparameter, and $d(\cdot)$ measures the node degree. Eq.~\eqref{eq:L_edge} is the maximum likelihood estimation for edges in $\mathcal{E}^t$. For inter-graph structure regularizing, as there is no feasible gound-truth supervision for the original ${\mathbf{A}}^{1:T}_\text{inter}$, we utilize the structure-aware $\mathbf{Z}_\text{Seq}$ and $\overline{\mathbf{Z}}_\text{MP}$ instead, \ie,
\begin{equation}
    \mathcal{L}_\text{PRI}(\hat{\mathbf{A}}^{1:T}_\text{inter}) \doteq 
    H(\mathbf{Z}_\text{Seq}) + \beta_2\cdot\mathcal{D}(\mathbf{Z}_\text{Seq}\|\overline{\mathbf{Z}}_\text{MP}),
\label{eq:L_inter}
\end{equation}
where $\beta_2$ is a hyperparameter, and the KL-divergence is implemented to the divergence term $\mathcal{D}(\cdot,\cdot)$.

\subsection{Optimization and Complexity Analysis}
The overall optimization objective of \modelname~is,
\begin{equation}
    \mathcal{L} = \mathcal{L}_\text{LP}(\mathbf{Y}^{T+1},\hat{\mathbf{Y}}^{T+1}) + \mu\cdot\mathcal{L}_\text{PRI}(\hat{\mathcal{G}}^{1:T}),
\label{eq:loss_all}
\end{equation}
where $\mathcal{L}_\text{LP}$ is implemented by the cross-entropy loss for future link prediction, $\mathcal{L}_\text{PRI}(\hat{\mathcal{G}}^{1:T})$ is derived by Eq.~\eqref{eq:L_intra} and Eq.~\eqref{eq:L_inter}. $\mu$ is the Lagrangian hyperparameter. The training pipeline is illustrated in Algorithm~\ref{alg:alg_overall} and \ref{alg:alg_selective} (Appendix~\ref{sec:alg}).

\subsubsection{Computational Complexity.}
We denote $|\mathcal{V}|$ and $|\mathcal{E}|$ as the average number of nodes and edges in each graph snapshot, respectively, and $T$ denotes the graph length.
The computational complexity of the kernelized message-passing (Eq.~\eqref{eq:update_3}) is $\mathcal{O}(T|\mathcal{V}|)$, and the intra- (Eq.~\eqref{eq:intra_structure}) and inter-graph structure query (Eq.~\eqref{eq:inter_structure}) contributes $\mathcal{O}(T|\mathcal{E}|)$. For dynamic graph selective scan, we implement a hardware-aware algorithm to accelerate the long-range dependencies modeling by the kernel fusion and recomputation~\cite{gu2023mamba}, which approximately requires $\mathcal{O}(T)$. We omit the computational complexity brought by feature projection and aggregation for brevity as the feature dimensions are significantly smaller than $|\mathcal{V}|$ and $|\mathcal{E}|$. Such that, the overall computational complexity of \modelname~is linear with the length, averaged number of nodes, and edges, \ie,
\begin{equation}
    \mathcal{O}(T(|\mathcal{V}|+|\mathcal{E}|)),
\end{equation}
which is superior efficient than state-of-the-art DGNNs, especially when the original graphs are less dense. Detailed complexity analysis can be found in Appendix~\ref{sec:alg}.
\begin{table*}[t]
  \centering
  \setlength{\tabcolsep}{3pt}
  \hspace{-0.17cm}
  \resizebox{1.001\textwidth}{!}{
    \begin{tabular}{lccccccccccccccc}
    \toprule
    \textbf{Dataset} & \multicolumn{5}{c}{\textbf{COLLAB}}           & \multicolumn{5}{c}{\textbf{Yelp}} & \multicolumn{5}{c}{\textbf{ACT}} \\
    \midrule
    \multirow{2}[2]{*}{\textbf{Model}} & \multirow{2}[2]{*}{\textbf{Clean}} & \multirow{2}[2]{*}{\makecell{\textbf{Structure}\\\textbf{Attack}}} & \multicolumn{3}{c}{\textbf{Feature Attack}} & \multirow{2}[2]{*}{\textbf{Clean}} & \multirow{2}[2]{*}{\makecell{\textbf{Structure}\\\textbf{Attack}}} & \multicolumn{3}{c}{\textbf{Feature Attack}} & \multirow{2}[2]{*}{\textbf{Clean}} & \multirow{2}[2]{*}{\makecell{\textbf{Structure}\\\textbf{Attack}}} & \multicolumn{3}{c}{\textbf{Feature Attack}} \\
\cmidrule{4-6}\cmidrule{9-11}\cmidrule{14-16}          &      &       & $\lambda=\text{0.5}$ & $\lambda=\text{1.0}$ & $\lambda=\text{1.5}$ &       &       & $\lambda=\text{0.5}$ & $\lambda=\text{1.0}$ & $\lambda=\text{1.5}$ &       &       & $\lambda=\text{0.5}$ & $\lambda=\text{1.0}$ & $\lambda=\text{1.5}$ \\
\midrule
    GAE   & 77.15\scalebox{0.75}{±0.5} & 74.04\scalebox{0.75}{±0.8} & 50.59\scalebox{0.75}{±0.8} & 44.66\scalebox{0.75}{±0.8} & 43.12\scalebox{0.75}{±0.8} & 70.67\scalebox{0.75}{±1.1} & 64.45\scalebox{0.75}{±5.0} & 51.05\scalebox{0.75}{±0.6} & 45.41\scalebox{0.75}{±0.6} & 41.56\scalebox{0.75}{±0.9} & 72.31\scalebox{0.75}{±0.5} & 60.27\scalebox{0.75}{±0.4} & 56.56\scalebox{0.75}{±0.5} & 52.52\scalebox{0.75}{±0.6} & 50.36\scalebox{0.75}{±0.9} \\
    VGAE  & 86.47\scalebox{0.75}{±0.0} & 74.95\scalebox{0.75}{±1.2} & 56.75\scalebox{0.75}{±0.6} & 50.39\scalebox{0.75}{±0.7} & 48.68\scalebox{0.75}{±0.7} & 76.54\scalebox{0.75}{±0.5} & 65.33\scalebox{0.75}{±1.4} & 55.53\scalebox{0.75}{±0.7} & 49.88\scalebox{0.75}{±0.8} & 45.08\scalebox{0.75}{±0.6} & 79.18\scalebox{0.75}{±0.5} & 66.29\scalebox{0.75}{±1.3} & 60.67\scalebox{0.75}{±0.7} & 57.39\scalebox{0.75}{±0.8} & 55.27\scalebox{0.75}{±1.0} \\
    GAT   & 88.26\scalebox{0.75}{±0.4} & 77.29\scalebox{0.75}{±1.8} & 58.13\scalebox{0.75}{±0.9} & 51.41\scalebox{0.75}{±0.9} & 49.77\scalebox{0.75}{±0.9} & 77.93\scalebox{0.75}{±0.1} & 69.35\scalebox{0.75}{±1.6} & 56.72\scalebox{0.75}{±0.3} & 52.51\scalebox{0.75}{±0.5} & 46.21\scalebox{0.75}{±0.5} & 85.07\scalebox{0.75}{±0.3} & 77.55\scalebox{0.75}{±1.2} & 66.05\scalebox{0.75}{±0.4} & 61.85\scalebox{0.75}{±0.3} & 59.05\scalebox{0.75}{±0.3} \\
    \midrule
    GCRN & 82.78\scalebox{0.75}{±0.5} & 69.72\scalebox{0.75}{±0.5} & 54.07\scalebox{0.75}{±0.9} & 47.78\scalebox{0.75}{±0.8} & 46.18\scalebox{0.75}{±0.9} & 68.59\scalebox{0.75}{±1.0} & 54.68\scalebox{0.75}{±7.6} & 52.68\scalebox{0.75}{±0.6} & 46.85\scalebox{0.75}{±0.6} & 40.45\scalebox{0.75}{±0.6} & 76.28\scalebox{0.75}{±0.5} & 64.35\scalebox{0.75}{±1.2} & 59.48\scalebox{0.75}{±0.7} & 54.16\scalebox{0.75}{±0.6} & 53.88\scalebox{0.75}{±0.7} \\
    EvolveGCN & 86.62\scalebox{0.75}{±1.0} & 76.15\scalebox{0.75}{±0.9} & 56.82\scalebox{0.75}{±1.2} & 50.33\scalebox{0.75}{±1.0} & 48.55\scalebox{0.75}{±1.0} & 78.21\scalebox{0.75}{±0.0} & 53.82\scalebox{0.75}{±2.0} & 57.91\scalebox{0.75}{±0.5} & 51.82\scalebox{0.75}{±0.3} & 45.32\scalebox{0.75}{±1.0} & 74.55\scalebox{0.75}{±0.3} & 63.17\scalebox{0.75}{±1.0} & 61.02\scalebox{0.75}{±0.5} & 53.34\scalebox{0.75}{±0.5} & 51.62\scalebox{0.75}{±0.7} \\
    DySAT & 88.77\scalebox{0.75}{±0.2} & 76.59\scalebox{0.75}{±0.2} & 58.28\scalebox{0.75}{±0.3} & 51.52\scalebox{0.75}{±0.3} & 49.32\scalebox{0.75}{±0.5} & 78.87\scalebox{0.75}{±0.6} & 66.09\scalebox{0.75}{±1.4} & 58.46\scalebox{0.75}{±0.4} & 52.33\scalebox{0.75}{±0.7} & 46.24\scalebox{0.75}{±0.7} & 78.52\scalebox{0.75}{±0.4} & 66.55\scalebox{0.75}{±1.2} & 61.94\scalebox{0.75}{±0.8} & 56.98\scalebox{0.75}{±0.8} & 54.14\scalebox{0.75}{±0.7} \\
    SpoT-Mamba & 84.34\scalebox{0.75}{±0.4} & 74.39\scalebox{0.75}{±0.2} & 54.76\scalebox{0.75}{±0.8} & 48.64\scalebox{0.75}{±0.9} & 47.25\scalebox{0.75}{±0.7} & 77.01\scalebox{0.75}{±1.0} & 60.56\scalebox{0.75}{±1.2} & 54.72\scalebox{0.75}{±0.8} & 50.11\scalebox{0.75}{±0.8} & 44.95\scalebox{0.75}{±0.8} & 73.29\scalebox{0.75}{±1.0} & 61.27\scalebox{0.75}{±0.9} & 59.92\scalebox{0.75}{±0.7} & 52.19\scalebox{0.75}{±0.8} & 51.33\scalebox{0.75}{±0.9} \\
    \midrule
    RDGSL & 82.29\scalebox{0.75}{±0.5} & 71.36\scalebox{0.75}{±0.9} & 52.33\scalebox{0.75}{±0.5} & 48.50\scalebox{0.75}{±0.7} & 45.21\scalebox{0.75}{±0.6} & 75.92\scalebox{0.75}{±0.6} & 58.30\scalebox{0.75}{±0.9} & 52.29\scalebox{0.75}{±0.5} & 48.66\scalebox{0.75}{±0.4} & 44.59\scalebox{0.75}{±0.5} & 73.15\scalebox{0.75}{±0.6} & 62.45\scalebox{0.75}{±1.0} & 60.14\scalebox{0.75}{±0.6} & 53.05\scalebox{0.75}{±0.5} & 51.07\scalebox{0.75}{±0.5} \\
    TGSL & 84.09\scalebox{0.75}{±0.5} & 73.66\scalebox{0.75}{±1.0} & 55.29\scalebox{0.75}{±0.4} & 51.34\scalebox{0.75}{±0.4} & 50.28\scalebox{0.75}{±0.3} & 76.55\scalebox{0.75}{±0.4} & 73.29\scalebox{0.75}{±1.1} & 60.21\scalebox{0.75}{±0.3} & 51.01\scalebox{0.75}{±0.3} & 49.87\scalebox{0.75}{±0.4} & 80.53\scalebox{0.75}{±0.5} & 70.32\scalebox{0.75}{±0.9} & 67.19\scalebox{0.75}{±0.4} & 60.27\scalebox{0.75}{±0.5} & 58.39\scalebox{0.75}{±0.5} \\
    \midrule
    RGCN  & 88.21\scalebox{0.75}{±0.1} & 78.66\scalebox{0.75}{±0.7} & 61.29\scalebox{0.75}{±0.5} & 54.29\scalebox{0.75}{±0.6} & 52.99\scalebox{0.75}{±0.6} & 77.28\scalebox{0.75}{±0.3} & 74.29\scalebox{0.75}{±0.4} & 59.72\scalebox{0.75}{±0.3} & 52.88\scalebox{0.75}{±0.3} & 50.40\scalebox{0.75}{±0.2} & 87.22\scalebox{0.75}{±0.2} & 82.66\scalebox{0.75}{±0.4} & 68.51\scalebox{0.75}{±0.2} & 62.67\scalebox{0.75}{±0.2} & 61.31\scalebox{0.75}{±0.2} \\
    WinGNN & 90.33\scalebox{0.75}{±0.1} & 82.34\scalebox{0.75}{±0.6} & \underline{64.69\scalebox{0.75}{±0.9}} & 56.87\scalebox{0.75}{±1.1} & 54.44\scalebox{0.75}{±0.6} & 76.46\scalebox{0.75}{±1.0} & 74.59\scalebox{0.75}{±0.8} & 60.45\scalebox{0.75}{±0.4} & 55.80\scalebox{0.75}{±1.0} & 52.73\scalebox{0.75}{±0.8} & 90.12\scalebox{0.75}{±0.4} & 85.36\scalebox{0.75}{±0.4} & 71.60\scalebox{0.75}{±0.9} & 65.40\scalebox{0.75}{±0.3} & 63.32\scalebox{0.75}{±0.8} \\
    DGIB-Bern & 92.17\scalebox{0.75}{±0.2} & 83.58\scalebox{0.75}{±0.1} & 63.54\scalebox{0.75}{±0.9} & 56.92\scalebox{0.75}{±1.0} & \underline{56.24\scalebox{0.75}{±1.0}} & 76.88\scalebox{0.75}{±0.2} & 75.61\scalebox{0.75}{±0.0} & \textbf{63.91\scalebox{0.75}{±0.9}} & \textbf{59.28\scalebox{0.75}{±0.9}} & \underline{54.77\scalebox{0.75}{±1.0}} & 94.49\scalebox{0.75}{±0.2} & 87.75\scalebox{0.75}{±0.1} & 73.05\scalebox{0.75}{±0.9} & 68.49\scalebox{0.75}{±0.9} & \underline{66.27\scalebox{0.75}{±0.9}} \\
    DGIB-Cat  & \underline{92.68\scalebox{0.75}{±0.1}} & \underline{84.16\scalebox{0.75}{±0.1}} & 63.99\scalebox{0.75}{±0.5} & \underline{57.76\scalebox{0.75}{±0.8}} & 55.63\scalebox{0.75}{±1.0} & \underline{79.53\scalebox{0.75}{±0.2}} & \textbf{77.72\scalebox{0.75}{±0.1}} & 61.42\scalebox{0.75}{±0.9} & 55.12\scalebox{0.75}{±0.7} & 51.90\scalebox{0.75}{±0.9} & \underline{94.89\scalebox{0.75}{±0.2}} & \underline{88.27\scalebox{0.75}{±0.2}} & \underline{73.92\scalebox{0.75}{±0.8}} & \underline{68.88\scalebox{0.75}{±0.9}} & 65.99\scalebox{0.75}{±0.7} \\
    \midrule 
    \textbf{\modelname} & \textbf{93.60\scalebox{0.75}{±0.3}} & \textbf{92.60\scalebox{0.75}{±0.3}} & \textbf{68.53\scalebox{0.75}{±1.5}} & \textbf{60.88\scalebox{0.75}{±1.0}} & \textbf{56.95\scalebox{0.75}{±0.8}} & \textbf{81.54\scalebox{0.75}{±0.6}} & \underline{77.40\scalebox{0.75}{±0.7}} & \underline{61.82\scalebox{0.75}{±0.9}} & \underline{57.42\scalebox{0.75}{±0.6}} & \textbf{55.97\scalebox{0.75}{±1.2}} & \textbf{96.67\scalebox{0.75}{±0.3}} & \textbf{96.14\scalebox{0.75}{±0.3}} & \textbf{79.36\scalebox{0.75}{±0.8}} & \textbf{73.76\scalebox{0.75}{±0.7}} & \textbf{70.21\scalebox{0.75}{±0.7}} \\
    \bottomrule
    \end{tabular}%
    }
    \vspace{-0.75em}
    \caption{AUC score (\% ± standard deviation for five runs) of the future link prediction task on real-world datasets against \textbf{non-targeted} (random) adversarial attacks. The best results are shown in \textbf{bold} type and the runner-ups are \underline{underlined}.}
  \label{tab:non-targeted}%
\end{table*}%

\section{Experiment}
\label{sec:exp}
In this section, we conduct extensive experiments on real-world and synthetic dynamic graph datasets to evaluate the effectiveness, robustness, efficiency of \modelname\footnote{\texttt{\url{https://github.com/RingBDStack/DG-Mamba}}.} against multi-type adversarial attacks. Detailed settings and additional results can be found in Appendix~\ref{sec:exp_detail} and Appendix~\ref{sec:more_results}.

\subsection{Experimental Settings}

\subsubsection{Dynamic Graph Datasets.}
\label{sec:data}
We evaluate \modelname~on the challenging future link prediction with three real-world dynamic graph datasets. \ding{172} \textbf{COLLAB} \cite{tang2012cross} is an academic collaboration dataset with papers published in 16 years. \ding{173} \textbf{Yelp}~\cite{sankar2020dysat} contains customer reviews for 24 months. \ding{174} \textbf{ACT}~\cite{kumar2019predicting} describes actions taken by users on a popular MOOC website within 30 days, and each action has a binary label. Statistics of the datasets are concluded in Table~\ref{tab:datasets}.

\subsubsection{Baselines.}
We compare with four categories, 12 baselines.\\
\ding{172}~\textbf{Static GNNs:}
GAE and VGAE \cite{kipf2016variational}, GAT \cite{velivckovic2018graph}.
\ding{173}~\textbf{DGNNs:}
GCRN \cite{seo2018structured}, EvolveGCN \cite{pareja2020evolvegcn}, DySAT \cite{sankar2020dysat}, and SpoT-Mamba \cite{choi2024spot}.
\ding{174}~\textbf{DGSL:}
RDGSL \cite{zhang2023rdgsl}, TGSL \cite{zhang2023time}.
\ding{175}~\textbf{Robust (D)GNNs:}
RGCN \cite{zhu2019robust}, WinGNN \cite{zhu2023wingnn}, and DGIB \cite{yuan2024dynamic}.

\subsubsection{Adversarial Attack Settings.}
We compare baselines and \modelname~under two typical adversarial attack scenarios.

\begin{itemize}[leftmargin=*]
    \item \textbf{Non-targeted:}
    We make synthetic datasets by attacking graph structures and node features, respectively.
    \ding{172} \textbf{Structure Attack:} We randomly remove one out of five types of edges in the training and validation graphs. This removal makes the task more challenging than the real-world situations as the model cannot access any information on the removed edges. \ding{173} \textbf{Feature Attack:} Gaussian noise $\lambda \cdot r \cdot \epsilon$ is added to the node features, where $r$ is the reference amplitude of the original features, and $\epsilon \sim N(\mathbf{0}$, $\mathbf{I})$. Parameter $\lambda \in$ \{0.5, 1.0, 1.5\} controls the degree of the attack.
    \item \textbf{Targeted:}
    We apply the prevailing NETTACK~\cite{zugner2018adversarial}, a targeted adversarial attack library on graphs designed to target nodes by altering their connected edges or node features. We simultaneously consider the evasion and poisoning attack.
    \ding{172}~\textbf{Evasion Attack:} Train on clean data, test on the attacked data. \ding{173} \textbf{Poisoning Attack:} The entire dataset is attacked before model training and testing.
    In both scenarios, we use GAT~\cite{velivckovic2018graph} as the surrogate model. The number of perturbations $n$ is set to \{1, 2, 3\}.
\end{itemize}

\subsubsection{Parameter Settings.}
\label{sec:para_setting}
We set number of layers as 2 for baselines, 1 for \modelname~to avoid overfitting. Latent dimension is set to 128. Baseline hyperparameters follow recommended values from their papers and are fine-tuned for fairness. Configuration files provide values for $\beta_1$, $\beta_2$, $\lambda$, and $\mu$. We optimize with Adam~\cite{kingma2014adam}, selecting the learning rate from \{1e-02, 1e-03, 1e-04, 1e-05\}. The maximum number of epochs is 1,000, with early stopping.

\begin{table*}[t]
  \centering
    \setlength{\tabcolsep}{4.8pt}
    \hspace{-0.17cm}
    \resizebox{1.001\textwidth}{!}{
    \begin{tabular}{llccccccccccc}
    \toprule
    \multirow{2}[2]{*}{{\textbf{Dataset}}} & \multirow{2}[2]{*}{{\textbf{Model}}} & \multirow{2}[2]{*}{{\textbf{Clean}}} & \multicolumn{4}{c}{\textbf{Evasion Attack}} & \multicolumn{4}{c}{\textbf{Poisoning Attack}} \\
    \cmidrule(r){4-7} \cmidrule{8-11}      &       &       & $n = \text{1}$ ($\Delta\% \downarrow$) & $n = \text{2}$ ($\Delta\% \downarrow$) & $n = \text{3}$ ($\Delta\% \downarrow$) & \textbf{Avg.} $\boldsymbol{\Delta\% \downarrow}$ & $n = \text{1}$ ($\Delta\% \downarrow$) & $n = \text{2}$ ($\Delta\% \downarrow$) & $n = \text{3}$ ($\Delta\% \downarrow$) & \textbf{Avg.} $\boldsymbol{\Delta\% \downarrow}$ \\
    \midrule
\multirow{7}[4]{*}{\textbf{COLLAB}} 
           & GAT    & 88.26\scalebox{0.75}{±0.4} & 76.21\scalebox{0.75}{±0.1} (13.7) & 66.56\scalebox{0.75}{±0.1} (24.6) & 57.92\scalebox{0.75}{±0.1} (34.4) & 24.2 & 66.59\scalebox{0.75}{±0.5} (24.6) & 55.31\scalebox{0.75}{±0.6} (37.3) & 51.34\scalebox{0.75}{±0.7} (41.8) & 34.6 \\
           & DySAT  & 88.77\scalebox{0.75}{±0.2} & 77.91\scalebox{0.75}{±0.1} (12.2) & 68.22\scalebox{0.75}{±0.1} (23.1) & 58.82\scalebox{0.75}{±0.1} (33.7) & 23.0 & 69.02\scalebox{0.75}{±0.3} (22.2) & 57.62\scalebox{0.75}{±0.3} (35.1) & 52.76\scalebox{0.75}{±0.3} (40.6) & 32.6 \\
           & SpoT-Mamba & 84.34\scalebox{0.75}{±0.4} & 71.45\scalebox{0.75}{±0.2} (15.3) & 65.88\scalebox{0.75}{±0.2} (21.9) & 52.14\scalebox{0.75}{±0.3} (38.2) & 25.1 & 66.45\scalebox{0.75}{±0.5} (21.2) & 55.36\scalebox{0.75}{±0.9} (34.4) & 53.17\scalebox{0.75}{±0.6} (37.0) & 30.8 \\
           & TGSL & 84.09\scalebox{0.75}{±0.5} & 72.09\scalebox{0.75}{±0.3} (14.3) & 65.30\scalebox{0.75}{±0.2} (22.3) & 52.09\scalebox{0.75}{±0.3} (38.1) & 24.9 & 66.57\scalebox{0.75}{±0.3} (20.8) & 54.21\scalebox{0.75}{±0.2} (35.5) & 55.36\scalebox{0.75}{±0.3} (34.2) & \underline{30.2} \\
           & WinGNN & 90.33\scalebox{0.75}{±0.1} & 79.35\scalebox{0.75}{±0.2} (12.2) & 68.24\scalebox{0.75}{±0.1} (24.5) & 61.07\scalebox{0.75}{±0.3} (32.4) & 23.0 & 71.53\scalebox{0.75}{±0.8} (20.8) & \underline{61.57\scalebox{0.75}{±1.1}} (31.8) & 55.27\scalebox{0.75}{±1.0} (38.8) & 30.5 \\
           & {DGIB-Cat}  & \underline{92.68\scalebox{0.75}{±0.1}} & \underline{81.29\scalebox{0.75}{±0.0}} (12.3) & \underline{71.32\scalebox{0.75}{±0.1}} (23.0) & \underline{62.03\scalebox{0.75}{±0.1}} (33.1) & \underline{22.8} & \underline{72.55\scalebox{0.75}{±0.2}} (21.7) & 60.99\scalebox{0.75}{±0.3} (34.2) & \underline{55.62\scalebox{0.75}{±0.4}} (40.0) & 32.0 \\
           & \textbf{\modelname} & \textbf{93.60\scalebox{0.75}{±0.3}} & \textbf{81.78\scalebox{0.75}{±0.6}} (12.6) & \textbf{80.87\scalebox{0.75}{±0.6}} (13.6) & \textbf{68.75\scalebox{0.75}{±1.3}} (26.5) & \textbf{17.6} & \textbf{79.48\scalebox{0.75}{±0.2}} (15.1) & \textbf{67.45\scalebox{0.75}{±0.1}} (27.9) & \textbf{64.99\scalebox{0.75}{±0.6}} (30.6) & \textbf{24.5} \\
    \midrule
    \multirow{7}[4]{*}{\textbf{Yelp}}  
           & GAT   & 77.93\scalebox{0.75}{±0.1} & 67.96\scalebox{0.75}{±0.1} (12.8) & 59.47\scalebox{0.75}{±0.1} (23.7) & 50.27\scalebox{0.75}{±0.1} (35.5) & 24.0 & 65.34\scalebox{0.75}{±0.5} (16.2) & 54.51\scalebox{0.75}{±0.2} (30.1) & 50.24\scalebox{0.75}{±0.4} (35.5) & 27.2 \\
           & DySAT  & 78.87\scalebox{0.75}{±0.6} & 69.77\scalebox{0.75}{±0.1} (11.5) & 60.66\scalebox{0.75}{±0.1} (23.1) & 52.16\scalebox{0.75}{±0.1} (33.9) & 22.8 & 66.87\scalebox{0.75}{±0.6} (15.2) & 56.31\scalebox{0.75}{±0.3} (28.6) & 50.44\scalebox{0.75}{±0.6} (36.0) & 26.6 \\
           & SpoT-Mamba & 77.01\scalebox{0.75}{±1.0} & 65.25\scalebox{0.75}{±0.2} (15.3) & 54.33\scalebox{0.75}{±0.2} (29.5) & 47.75\scalebox{0.75}{±0.2} (38.0) & 27.6 & 64.39\scalebox{0.75}{±1.0} (16.4) & 55.21\scalebox{0.75}{±0.9} (28.3) & 50.33\scalebox{0.75}{±1.1} (34.6) & 26.4 \\
           & TGSL & 76.55\scalebox{0.75}{±0.4} & 65.03\scalebox{0.75}{±0.3} (15.0) & 54.29\scalebox{0.75}{±0.3} (29.1) &  47.81\scalebox{0.75}{±0.3} (37.5) & 27.2 & 64.08\scalebox{0.75}{±0.8} (16.3) & 56.27\scalebox{0.75}{±0.6} (26.5) & 51.20\scalebox{0.75}{±0.8} (33.1) & 25.3 \\
           & WinGNN & 76.46\scalebox{0.75}{±1.0} & 66.25\scalebox{0.75}{±1.0} (13.4) & 60.22\scalebox{0.75}{±0.9} (21.2) & \underline{51.38\scalebox{0.75}{±0.8}} (32.8) & 22.5 & \underline{67.88\scalebox{0.75}{±0.9}} (11.2) & 56.36\scalebox{0.75}{±0.9} (26.3) & \underline{52.74\scalebox{0.75}{±1.0}} (31.0) & \underline{22.8} \\
           & {DGIB-Cat}  & \underline{79.53\scalebox{0.75}{±0.2}} & \underline{70.17\scalebox{0.75}{±0.0}} (11.8) & \underline{62.25\scalebox{0.75}{±0.1}} (21.7) & \textbf{52.69\scalebox{0.75}{±0.1}} (33.7) & \underline{22.4} & 67.38\scalebox{0.75}{±0.3} (15.3) & \underline{57.02\scalebox{0.75}{±0.2}} (28.3) & 51.39\scalebox{0.75}{±0.2} (35.4) & 26.3 \\
           & \textbf{\modelname} & \textbf{81.54\scalebox{0.75}{±0.6}} & \textbf{70.88\scalebox{0.75}{±0.3}} (13.1) & \textbf{69.77\scalebox{0.75}{±0.5}} (14.4) & 49.93\scalebox{0.75}{±0.6} (38.8) & \textbf{22.1} & \textbf{73.10\scalebox{0.75}{±0.4}} (10.4) & \textbf{64.65\scalebox{0.75}{±0.1}} (20.7) & \textbf{54.67\scalebox{0.75}{±0.3}} (33.0) & \textbf{21.3} \\
    \midrule
    \multirow{7}[4]{*}{\textbf{ACT}}  
           & GAT    & 85.07\scalebox{0.75}{±0.3} & 75.14\scalebox{0.75}{±0.1} (11.7) & 67.25\scalebox{0.75}{±0.1} (20.9) & 59.75\scalebox{0.75}{±0.1} (29.8) & 20.8 & 71.26\scalebox{0.75}{±0.9} (16.2) & 61.43\scalebox{0.75}{±1.1} (27.8) & 57.35\scalebox{0.75}{±1.1} (32.6) & 25.5 \\
           & DySAT  & 78.52\scalebox{0.75}{±0.4} & 70.64\scalebox{0.75}{±0.1} (10.0) & 63.35\scalebox{0.75}{±0.0} (19.3) & 56.36\scalebox{0.75}{±0.0} (28.2) & 19.2 & 66.21\scalebox{0.75}{±0.9} (15.7) & 56.28\scalebox{0.75}{±0.9} (28.3) & 53.45\scalebox{0.75}{±1.1} (31.9) & 25.3 \\
           & SpoT-Mamba & 73.29\scalebox{0.75}{±1.0} & 65.64\scalebox{0.75}{±1.1} (10.4) & 61.99\scalebox{0.75}{±0.9} (15.4) & 51.08\scalebox{0.75}{±0.8} (30.3) & 18.7 & 62.89\scalebox{0.75}{±0.9} (14.9) & 58.04\scalebox{0.75}{±1.3} (20.8) & 51.04\scalebox{0.75}{±1.2} (30.4) & \underline{22.0} \\
           & TGSL & 80.53\scalebox{0.75}{±0.5} & 72.26\scalebox{0.75}{±0.3} (12.8) & 67.34\scalebox{0.75}{±0.3} (16.4) & 61.55\scalebox{0.75}{±0.3} (23.6) & \underline{17.6} & 68.10\scalebox{0.75}{±0.9} (15.4) & 61.07\scalebox{0.75}{±1.0} (24.2) & 59.39\scalebox{0.75}{±1.0} (26.3) & \underline{22.0} \\
           & WinGNN & 90.12\scalebox{0.75}{±0.4} & 80.16\scalebox{0.75}{±0.4} (11.1) & 72.50\scalebox{0.75}{±0.3} (19.6) & 63.21\scalebox{0.75}{±0.4} (29.9) & 20.2 & \underline{81.26\scalebox{0.75}{±0.9}} \;\;(9.8) & 67.33\scalebox{0.75}{±1.1} (25.3) & 61.25\scalebox{0.75}{±1.0} (32.0) & 22.4 \\
           & {DGIB-Cat}  & \underline{94.89\scalebox{0.75}{±0.2}} & \underline{84.98\scalebox{0.75}{±0.1}} (10.4) & \underline{76.78\scalebox{0.75}{±0.1}} (19.1) & \textbf{67.69\scalebox{0.75}{±0.1}} (28.7) & 19.4 & 80.16\scalebox{0.75}{±0.4} (15.5) & \underline{68.71\scalebox{0.75}{±0.5}} (27.6) & \underline{64.38\scalebox{0.75}{±0.6}} (32.2) & 25.1\\
           & \textbf{\modelname} & \textbf{96.67\scalebox{0.75}{±0.3}} & \textbf{86.62\scalebox{0.75}{±0.1}} (10.4) & \textbf{85.58\scalebox{0.75}{±0.1}} (11.5) & \underline{67.12\scalebox{0.75}{±0.5}} (30.6) & \textbf{17.5} & \textbf{85.53\scalebox{0.75}{±0.6}} (11.5) & \textbf{75.62\scalebox{0.75}{±0.2}} (21.8) & \textbf{65.65\scalebox{0.75}{±0.5}} (32.1) & \textbf{21.8} \\
    \bottomrule
    \end{tabular}%
   }
    \vspace{-0.75em}
     \caption{AUC score (\% ± standard deviation for five runs) of the future link prediction task on real-world datasets against \textbf{targeted} adversarial attacks. The best results are shown in \textbf{bold} type and the runner-ups are \underline{underlined}. ``$\Delta\%$'' indicates the relative performance decrease after targeted adversarial attacks compared to that on the clean datasets.}
  \label{tab:targeted}%
\vspace{-1em}
\end{table*}%
\subsection{Against Non-Targeted Adversarial Attacks}
\label{sec:exp_non_targeted}
In this section, we evaluate the model performance on future link prediction and its robustness to non-targeted (random) adversarial attacks on structures and node features. We report results using AUC (\%) scores from five runs in Table~\ref{tab:non-targeted}.

\subsubsection{Analysis.}
In most cases, \modelname~outperforms other baselines significantly.
Static GNNs are not well-suited for dynamic scenarios, struggling to adapt when structures and features evolve. Dynamic GNNs underperform due to their insufficient handling of complex coupling dynamics. DGSL baselines exhibit sensitivity to noise caused by lacking modeling of the underlying predictive patterns, leading to sharp drops in performance under high-intensity feature attacks, especially in datasets with strong temporal relations. Though DGIB slightly surpasses \modelname~in a few cases, it generally fails due to drawbacks brought by its strong Markov condition assumption, which greatly damages capturing of the long-range dependencies for strengthening robustness.

\subsection{Against Targeted Adversarial Attacks}
\label{sec:exp_targeted}
We continue to evaluate with competitive baselines standing out in Table~\ref{tab:non-targeted}, focusing on link prediction performance and defense against targeted adversarial attacks with a relative decrease. Results of AUC (\%) are reported in Table~\ref{tab:targeted}.

\subsubsection{Analysis.}
\modelname~consistently demonstrates strong robustness across all datasets compared to other competitive baselines, showing the lowest average percentage decrease under both evasion and poisoning attacks. While baselines like WinGNN and DGIB also demonstrate relative robustness, they are more affected by these sophisticated attacks, particularly in the ACT dataset, which appears to be the most challenging for most baselines with larger drops in the AUC scores across the board. Larger AUC decreases witnessed in baselines like TGSL, DySAT, and especially SpoT-Mamba generally exhibits the highest vulnerability among the baselines considered, further highlighting the challenge of defending against targeted adversarial attacks in the real world.

\begin{figure*}[!t]
    \begin{minipage}{0.51\textwidth}
        \centering
        \includegraphics[height=3.05cm]{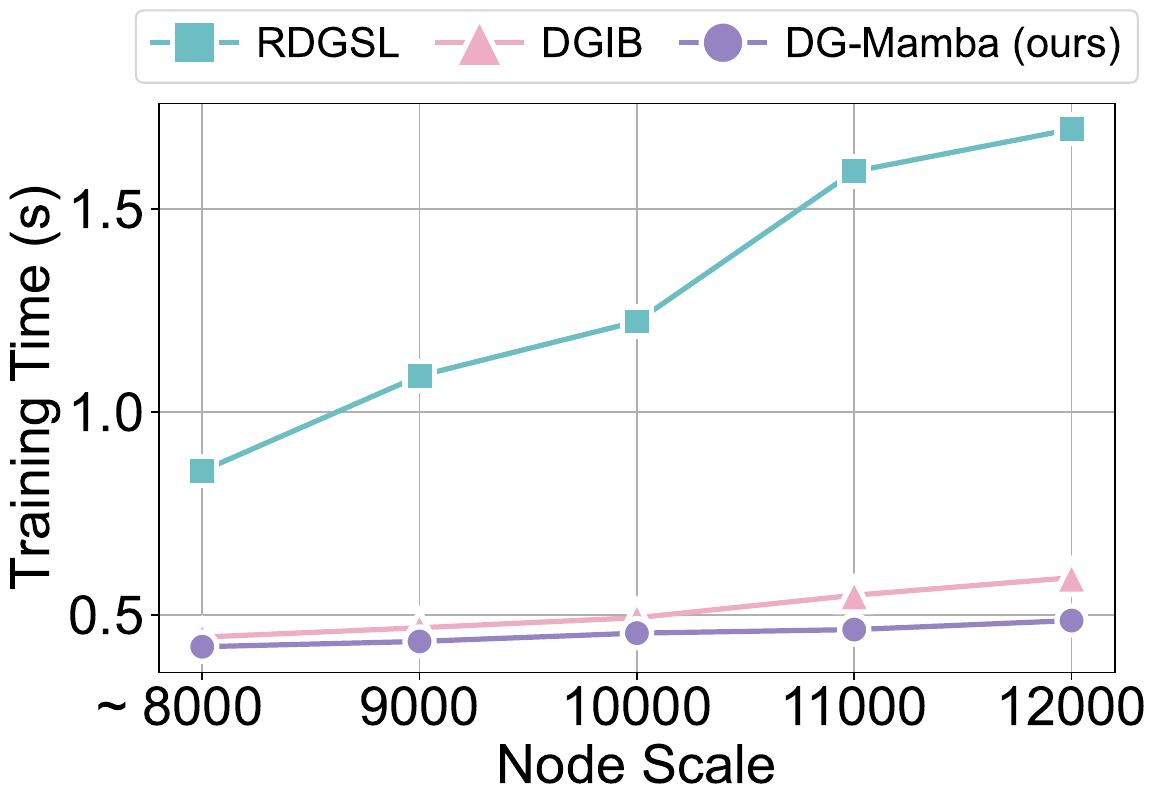}
        \hspace{-0.5em}
        \includegraphics[height=3.05cm]{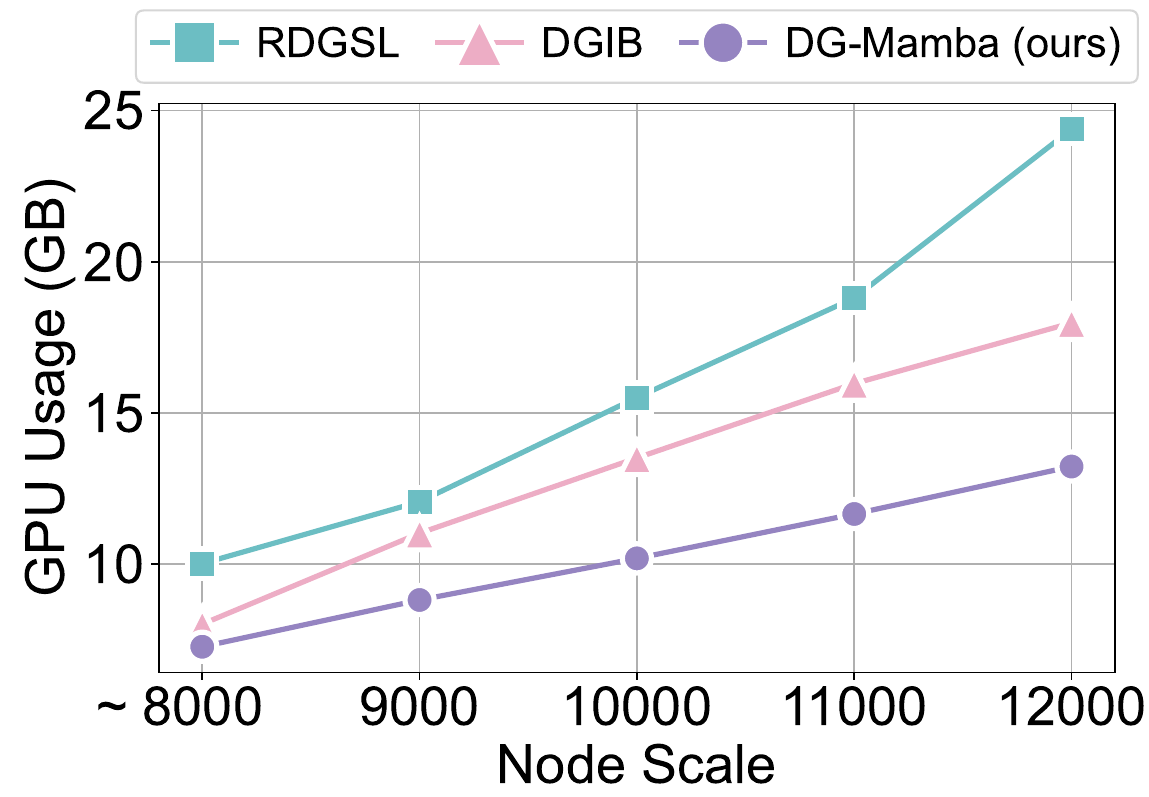}
        \vspace{-0.5em}
        \caption{Node scaling efficiency evaluation on Yelp.}
        \label{fig:effi_node_yelp}
    \end{minipage}
    \hspace{-1.1em}
    \begin{minipage}{0.51\textwidth}
        \centering
        \includegraphics[height=3.05cm]{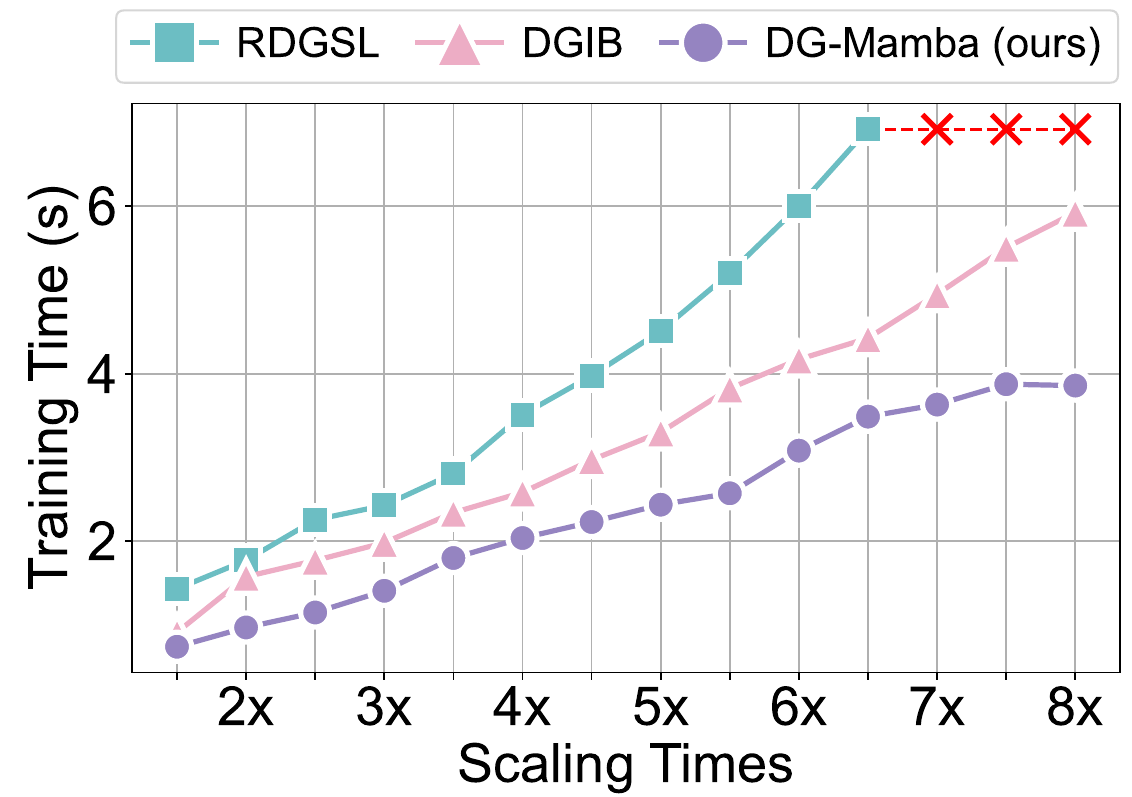}
        \hspace{-0.3em}
        \includegraphics[height=3.05cm]{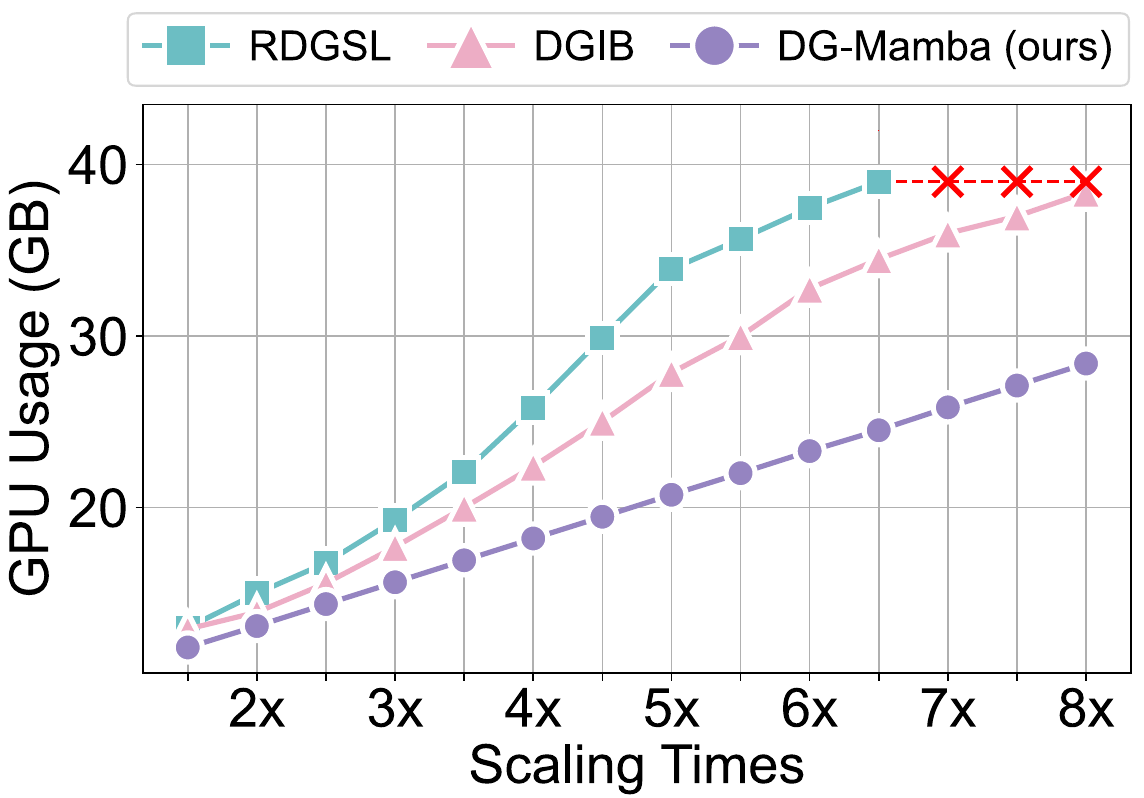}
        \vspace{-0.5em}
        \caption{Sequence scaling efficiency evaluation on Yelp.}
        \label{fig:effi_length_yelp}
    \end{minipage}
\end{figure*}

\begin{figure*}[!t]
    \begin{minipage}{0.33\textwidth}
        \centering
        \includegraphics[height=4.75cm]{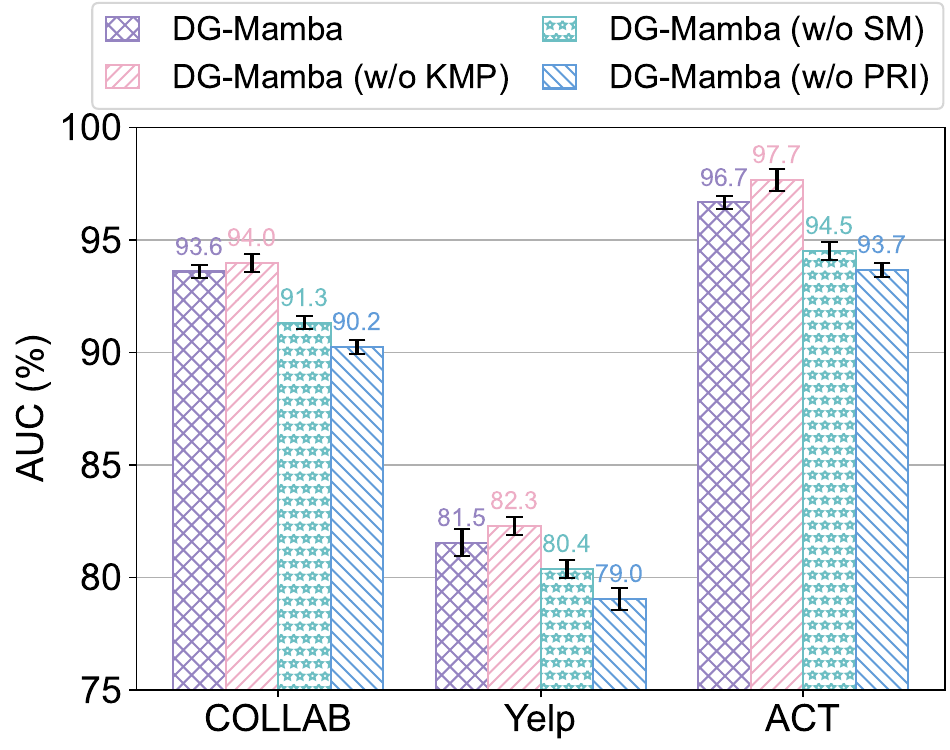}
        \vspace{-1.5em}
        \caption{Ablation study.}
        \label{fig:abla_main}
    \end{minipage}
    \hspace{0.4em}
    \begin{minipage}{0.33\textwidth}
        \centering
        \includegraphics[height=4.75cm]{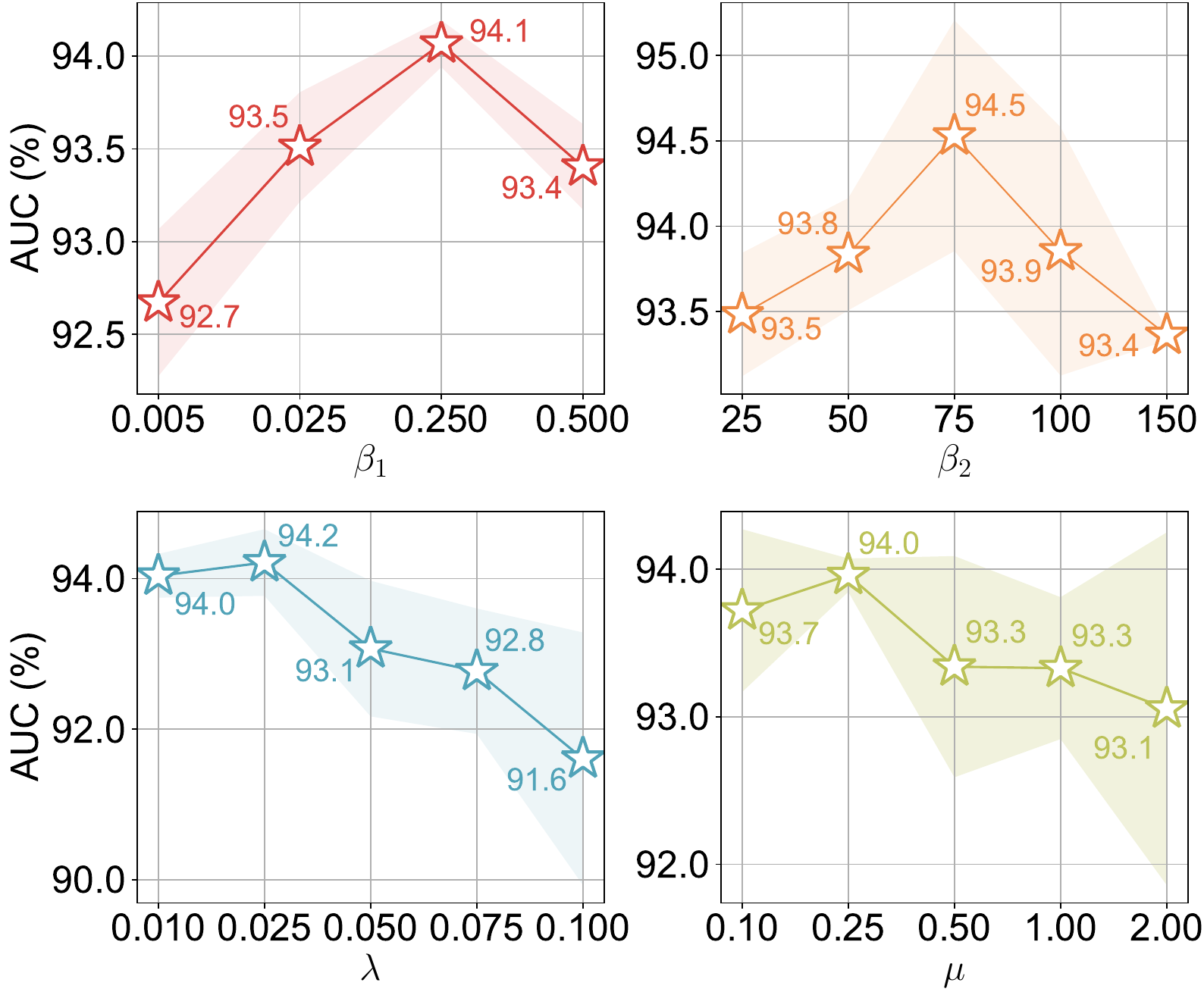}
        \vspace{-0.5em}
        \caption{Hyperparameter analysis.}
        \label{fig:hyper_main}
    \end{minipage}
    \begin{minipage}{0.33\textwidth}
        \centering
        \includegraphics[height=4.75cm]{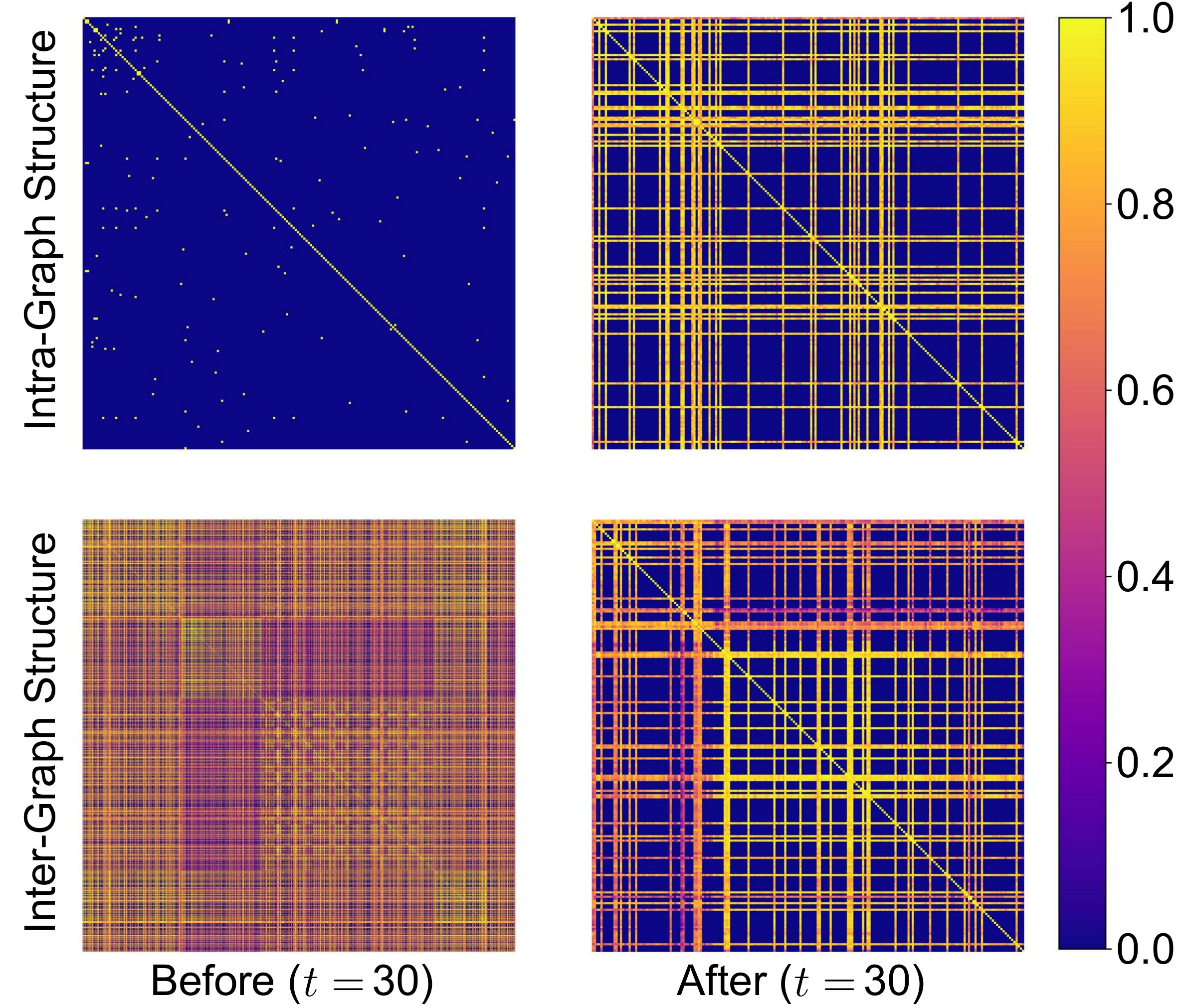}
        \vspace{-0.5em}
        \caption{Learned structure visualization.}
        \label{fig:vis}
    \end{minipage}
\vspace{-0.4em}
\end{figure*}

\subsection{Scaling Efficiency Analysis}
\label{sec:scale}
To evaluate efficiency of \modelname, we generate synthetic datasets by manipulating the node scale and sequence length based on the datasets introduced in Section~\ref{sec:data}. We plot the training time per epoch and GPU usage at peak time on Yelp in Figure~\ref{fig:effi_node_yelp} and Figure~\ref{fig:effi_length_yelp}, where the res cross indicates OOM (Out-Of-Memory). Additional results in Appendix~\ref{sec:scale_add}.

\subsubsection{Analysis.}
The results highlight superior efficiency of \modelname~in both the node scale and sequence length scaling scenarios, where its training time and GPU usage scale up near linearly, which is consistent with the theoretical analysis conclusion. For node scaling, compared with RDGSL, \modelname~reduced training time and GPU usage up to 71.3\% and 45.8\%, respectively. For sequence length scaling, compared with DGIB-Cat, \modelname~reduced training time and GPU usage up to 38.4\%, and 28.9\%, respectively. Beyond 6.5 times scaling, while RDGSL and DGIB both fail stuking by OOM, \modelname~still manages to operate within GPU limits even at the highest scaling factor tested, demonstrating its efficiency in long-span dynamic graphs.

\subsection{Ablation Study}
\label{sec:exp_abla}
We analyze the effectiveness of the three variants:
\begin{itemize}[leftmargin=*]
    \item \textbf{\modelname~(\textit{w/o KMP}):} We replace the efficient spatio-temporal kernelized message-passing in Section~\ref{sec:kernel} with the vanilla attention-based message-passing mechanism.
    \item \textbf{\modelname~(\textit{w/o SM}):} We remove the long-range dependencies selective modeling proposed in Section~\ref{sec:ssm}.
    \item \textbf{\modelname~(\textit{w/o PRI}):} We remove Principle of Relevant Information for DGSL regularizing term in Section~\ref{sec:pri}.
\end{itemize}
Results on clean datasets are shown in Figure~\ref{fig:abla_main}. Results for evasion and poisoning datasets are shown in Figure~\ref{fig:abla_add}.

\subsubsection{Analysis.}
Overall, the \modelname~outperforms the other two variants, \ie~``\textit{w/o SM}'' and ``\textit{w/o PRI}'', which validates the indispensable effectiveness of the long-range dependencies selective modeling mechanism and the PRI for DGSL regularize. We claim the exceeding performance of \modelname~(\textit{w/o KMP}) is within our expectation as the kernelized message-passing sacrifices effectiveness for efficiency. 

\subsection{Hyperparameter Sensitivity Analysis}
\label{sec:exp_hyper}
We perform evaluations on the sensitvity of important hyperparameters on COLLAB in Figure~\ref{fig:hyper_main}, where $\beta_1$ and $\beta_2$ control the importance of the distortion in PRI, $\mu$ and $\lambda$ play the trade-off role between node embeddings and loss terms. Results demonstrate the performance is sensitive to different values of the hyperparameters and contains a reasonable range. Further analysis is provided in Appendix~\ref{sec:hyper_more}.

\subsection{Visualization of Learned Dynamic Structures}
\label{sec:exp_vis}
We visualize edge weights for intra- and inter-graph structures of ACT before and after training in Figure~\ref{fig:vis}. Results demonstrate \modelname~can effectively emphasize on key structure patterns for prediction as well as denoising irrelevant features, which contributes to improving robustness.

\section{Conclusion}
In this paper, we present a robust and efficient DGSL framework named \modelname~with linear time complexity.
The kernelized message-passing behaves efficiently with state-discretized SSM. Learned structures are regularized with the proposed PRI for DGSL. Long-range dependencies and underlying predictive patterns are uncovered to strengthen robustness.
Extensive experiments demonstrate its superiority over 12 state-of-the-art baselines against adversarial attacks.

\section*{Acknowledgments}
The corresponding author is Jianxin Li. Authors of this paper are supported by the National Natural Science Foundation of China through grants No.623B2010, No.62225202, and No.62302023, and the Fundamental Research Funds for the Central Universities. We extend our sincere thanks to all authors for their valuable efforts and contributions.

\bibliography{aaai25}

\clearpage

\newpage
\appendix
\counterwithin{table}{section}
\counterwithin{figure}{section}
\counterwithin{equation}{section}

\section{Algorithms and Complexity Analysis}
\label{sec:alg}
The overall training process of \modelname~is illustrated in Algorithm~\ref{alg:alg_overall}, where Algorithm~\ref{alg:alg_overall} calls $\operatorname{DGSS}$ in Algorithm~\ref{alg:alg_selective}.
\vspace{-0.5cm}
\begin{algorithm}[h]
    \caption{Overall training pipeline of \modelname.}
    \label{alg:alg_overall}
    \textbf{Input}: Dynamic graph $\mathcal{G}^{1:T}$; Node features $\textbf{X}^{1:T+1}$; Labels\\\textcolor{white}{\textbf{Input}:} $\mathbf{Y}^{1:T}$ of the link occurrence; Model $f_{{\boldsymbol{\theta}}} = w\circ g$.\\
    \textbf{Parameter}: Number of training epochs $E$; Number of layers\\\textcolor{white}{\textbf{Parameter}:} $L$; Hyperparameters $\beta_1$, $\beta_1$, $\gamma$, $\lambda$ and $\mu$.\\
    \textbf{Output}: Refined dynamic graph $\hat{\mathcal{G}}^{1:T}$; Optimized parameter\\\textcolor{white}{\textbf{Output}:} $f_{{\boldsymbol{\theta}}}^\star = w\circ g$; Predicted label $\hat{\mathbf{Y}}^{T+1}$ of the link occur-\\\textcolor{white}{\textbf{Output}:} rence at time step $T+1$.
    
    \begin{algorithmic}[1]
        \STATE Initialize weights and model parameters $\boldsymbol{\theta}$ randomly;
        \STATE Initialize: $\mathbf{Z}^{1:T+1(0)} \gets \mathbf{X}^{1:T+1}$;
        \STATE Relative time encoding: $\mathbf{Z}^{1:T+1(0)}\!\gets\!\!\operatorname{RTE}(\mathbf{Z}^{1:T+1(0)})$;
        \FOR{epochs $e=1, 2, \cdots, E$}
        \FOR{layer $l=1, 2, \cdots, L$}
        \STATE Kernelized message-passing: $\mathbf{Z}_\text{MP}\gets$ Eq.~\eqref{eq:update_3};
        \STATE Average pooling: $\overline{\mathbf{Z}}_\text{MP}\gets\operatorname{AvgPool}(\mathbf{Z}_\text{MP})$;
        \STATE Structure query: $\hat{\mathbf{A}}_\text{intra}^{1:T}, \hat{\mathbf{A}}_\text{inter}^{1:T}\gets$ Eq.~\eqref{eq:intra_structure}, \eqref{eq:inter_structure};
        \STATE Long-range dependencies selective modeling:\\ $\mathbf{Z}_\text{Seq}\gets\operatorname{DGSS}(\overline{\mathbf{Z}}_\text{MP}, \hat{\mathbf{A}}_\text{inter}^{1:T})$ (see Algorithm~\ref{alg:alg_selective});
        \STATE Node feature update: $\hat{\mathbf{Z}}\gets$ Eq.~\eqref{eq:merge};
        \ENDFOR
        \STATE Predict: $\hat{\mathbf{Y}}^{T+1} = g(\hat{\mathbf{Z}}^{T+1(L)})$;
        \STATE $\mathcal{L}_\text{LP}\gets\operatorname{CE}(\mathbf{Y}^{T+1},\hat{\mathbf{Y}}^{T+1})$; $\mathcal{L}_\text{PRI}(\hat{\mathcal{G}}^{1:T})\gets$Eq.~\eqref{eq:structure_pri};
        \STATE Calculate the overall loss: $\mathcal{L}\!\gets\!\mathcal{L}_{\text{LP}} + \mu\cdot\mathcal{L}_\text{PRI}(\hat{\mathcal{G}}^{1:T})$;
        \STATE Update $\boldsymbol{\theta}$ by minimizing $\mathcal{L}$ and back-propagation.
        \ENDFOR
    \end{algorithmic}
\end{algorithm}

\vspace{-0.5cm}
\begin{algorithm}[!h]
    \caption{Dynamic Graph Selective Scan ($\operatorname{DGSS}$).}
    \label{alg:alg_selective}
    \textbf{Input}: The average-pooled representation $\overline{\mathbf{Z}}_\text{MP}\!\in\!\mathbb{R}^{B\times T\times N}$;\\
    \textcolor{white}{\textbf{Input}:} The inter-graph structures $\hat{\mathbf{A}}_\text{inter}^{1:T} \in \mathbb{R}^{B\times T\times N \times N}$.\\
    \textbf{Parameter}: The batch size $B$; The dynamic graph length $T$;\\\textcolor{white}{\textbf{Parameter}:} The number of latent states $N$; The hidden fea-\\\textcolor{white}{\textbf{Parameter}:} ture dimension $D$; The weight matrix $\mathbf{W}$.\\
    \textbf{Output}: The temporal semantics and long-range dependen-\\ \textcolor{white}{\textbf{Output}:} cies emerged $\mathbf{Z}_\text{Seq}\in\mathbb{R}^{B\times T\times N}$.
    
    \begin{algorithmic}[1]
        \STATE Randomly initialize $\boldsymbol{\mathcal{A}} \in \mathbb{R}^{N\times D}$;
        \STATE $\boldsymbol{\mathcal{B}}, \boldsymbol{\mathcal{C}} \in \mathbb{R}^{B\times T \times D} \gets \operatorname{Linear}_{D}(\overline{\mathbf{Z}}_\text{MP})$;
        \STATE $\boldsymbol{\Delta} \in \mathbb{R}^{B\times T \times N} \gets \operatorname{softplus}(\operatorname{Linear}_{N}(\overline{\mathbf{Z}}_\text{MP}))$;
        \STATE $\overline{\boldsymbol{\Delta}} \in \mathbb{R}^{B\times T \times N} \gets \operatorname{unsqueeze}_N(\boldsymbol{\Delta})\cdot \mathbf{W} \hat{\mathbf{A}}^{1:T}_\text{inter}$;
        \STATE $\overline{\boldsymbol{\mathcal{A}}} \in \mathbb{R}^{B\times T\times N\times D} \gets \exp(\overline{\boldsymbol{\Delta}}\boldsymbol{\mathcal{A}})$;
        \STATE $\overline{\boldsymbol{\mathcal{B}}} \in \mathbb{R}^{B\times T\times N\times D} \gets (\overline{\boldsymbol{\Delta}} \boldsymbol{\mathcal{A}})^{-1}(\exp (\overline{\boldsymbol{\Delta}}\boldsymbol{\mathcal{A}})-\boldsymbol{I})\overline{\boldsymbol{\Delta}}\boldsymbol{\mathcal{B}}$;
        \STATE Initialize latent state $\mathbf{H}\in\mathbb{R}^{B\times T\times N\times D}$ as zeros;
        \STATE Initialize the output $\mathbf{Z}_\text{Seq}\in\mathbb{R}^{B\times T\times N}$ as empty set;
        \FOR{$t$ {from} $1$ {to} $T-1$}
        \STATE \quad$\mathbf{H}_t = \overline{\boldsymbol{\mathcal{A}}}_t \mathbf{H}_{t-1} + \overline{\boldsymbol{\mathcal{B}}} (\overline{\mathbf{Z}}_\text{MP})_t$;
        \STATE \quad$(\mathbf{Z}_\text{Seq})_t = \boldsymbol{\mathcal{C}}_t\mathbf{H}_t$;
        \ENDFOR
        \STATE \textbf{return} $\mathbf{Z}_\text{Seq}\in\mathbb{R}^{B\times T\times N}$.
    \end{algorithmic}
\end{algorithm}

\noindent\textbf{Computational Complexity Analysis.} For brevity, denote $|\mathcal{V}|$ as the average number of nodes, $|\mathcal{E}|$ as the average number of edges, and $T$ denotes the graph length. Let $d$ be the dimension of the input node features, and $D_0$ be the dimension of the learned node embeddings after kernelized spatio-temporal message-passing, and $D$ be the last dimension of the latent state $\mathbf{H}$. We analyze the computational complexity of each part during training in \modelname~as follows.
\begin{itemize}[leftmargin=*]
    \item Linear input feature projection layer: $\mathcal{O}(T|\mathcal{V}|dD_0)$.
    \item Relative time encoding ($\operatorname{RTE}(\cdot)$) layer: $\mathcal{O}(T|\mathcal{V}|D_0^2)$.
    \item Kernelized message-passing layer: $\mathcal{O}(2LT(|\mathcal{V}|+|\mathcal{E}|)D_0)$, where $L$ denotes the number of convolution layers.
    \item Long-range dependencies selective modeling: $\mathcal{O}(TD)$.
    \item Robust relevant structure regularizing: $\mathcal{O}(2T(|\mathcal{V}|+|\mathcal{E}|))$.
    \item Link predictor (MLP), feature aggregations, and activation functions: constant complexity brought about by addition operations with respect to feature dimensions (ignored).
\end{itemize}
The overall computational complexity of \modelname~is,
\begin{align*}
        \mathcal{O}(T|\mathcal{V}|dD_0)+\mathcal{O}(T|\mathcal{V}|D_0^2)+ \mathcal{O}(2LT(|\mathcal{V}|+|\mathcal{E}|)D_0)\notag\\+\mathcal{O}(TD)+\mathcal{O}(2T(|\mathcal{V}|+|\mathcal{E}|)).
\end{align*}
As the dimensions $d$, $D$, $D_0$, and the number of convolution layers $L$ are relatively small compared to the average number of nodes, edges, and graph length, the overall computational complexity can then be approximately reduced to,
\begin{align}
     &\mathcal{O}(T|\mathcal{V}|)+\mathcal{O}(T(|\mathcal{V}|+|\mathcal{E}|))+\mathcal{O}(T) \\ =\; &\mathcal{O}(T(|\mathcal{V}|+|\mathcal{E}|)).
\end{align}
To summarize, the overall computational complexity of the proposed \modelname~is \textbf{linear} concerning the average number of nodes, edges, and graph length, making it significantly more efficient than state-of-the-art DGNNs, especially when the original graphs are less dense. We also conducted empirical studies on training time, testing time, and peak memory consumption when scaling the dynamic graphs up to 8 times the original inputs. The results, provided in Section~\ref{sec:scale} and Appendix~\ref{sec:scale_add}, further demonstrate the superior efficiency of \modelname~compared to the competitive baselines. Additionally, based on our experimental experience, \modelname~can be trained and tested with the hardware configurations (including memory requirements) listed in Appendix~\ref{sec:config}.
\section{Proofs}
\label{sec:proof}
In this section, we provide proofs for the approximation error bounds of the Gumbel-Softmax kernel in Section~\ref{sec:kernel} and loss equivalence for the edge-level constraints in Section~\ref{sec:pri}.

\subsection{Proof of the Gumbel-Softmax Kernel Approximation Error Bound}
\label{sec:proof_1}
Section~\ref{sec:kernel} proposed the kernelized message-passing mechanism to update node embeddings and learn spatio-temporal edge weights efficiently, \ie,
\begin{equation}
    \mathbf{z}_u^{t(l+1)} = \frac{\phi(\mathbf{W}\mathbf{z}_u^{t(l)})^\top\sum_v\phi(\mathbf{W}\mathbf{z}_v^{\cdot(l)})\mathbf{W}\mathbf{z}_v^{\cdot(l)\top}}{\phi(\mathbf{W}\mathbf{z}_u^{t(l)})^\top\sum_m\phi(\mathbf{W}\mathbf{z}_m^{\cdot(l)})},
\end{equation}
where $\phi(\cdot)$ is the approximation for Softmax kernel $k(\cdot,\cdot)$ and is implemented by the Positive Random Features (PRF).
\subsubsection{Gumbel-Softmax Kernel Approximation.}
To enable differentiable optimization over discrete dynamic graphs, we first incorporate Eq.~\eqref{eq:update_2} with the categorical parameterization trick with the Gumbel-Softmax~\cite{jang2022categorical,wu2022nodeformer}, \ie~(we omit the layer superscripts, the activation function $\sigma$, and the limits of summations always for nodes in the $\mathcal{N}(u)^{t-1:t}$ for brevity),
\begin{align}
    \mathbf{z}_u^{t} &= \sum_v \frac{\exp(((\mathbf{W}\mathbf{z}_u^{t})^\top(\mathbf{W}\mathbf{z}_v^{\cdot})+g_v)/\tau)\mathbf{W}\mathbf{z}_v^{\cdot}}{\sum_m \exp(((\mathbf{W}\mathbf{z}_u^{t})^\top(\mathbf{W}\mathbf{z}_m^{\cdot})+g_m)/\tau)}\notag\\
    &= \sum_v \frac{k(\mathbf{W}\mathbf{z}_u^{t}/\sqrt{\tau}, \mathbf{W}\mathbf{z}_v^{\cdot}/\sqrt{\tau})\exp(g_v/\tau)\mathbf{W}\mathbf{z}_v^{\cdot}}{\sum_m k(\mathbf{W}\mathbf{z}_u^{t}/\sqrt{\tau}, \mathbf{W}\mathbf{z}_m^{\cdot}/\sqrt{\tau})\exp(g_m/\tau})\notag\\
    &= \sum_v \frac{\phi(\mathbf{W}\mathbf{z}_u^{t}/\sqrt{\tau})^\top\phi(\mathbf{W}\mathbf{z}_v^{\cdot}/\sqrt{\tau})\exp(g_v/\tau)\mathbf{W}\mathbf{z}_v^{\cdot}}{\sum_m \phi(\mathbf{W}\mathbf{z}_u^{t}/\sqrt{\tau})^\top \phi(\mathbf{W}\mathbf{z}_m^{\cdot}/\sqrt{\tau})\exp(g_m/\tau)}\notag\\
    &= \frac{\phi(\mathbf{W}\mathbf{z}_u^{t}/\sqrt{\tau})^\top\sum_v\exp(g_v/\tau)\phi(\mathbf{W}\mathbf{z}_v^{\cdot}/\sqrt{\tau})\mathbf{W}\mathbf{z}_v^{\cdot\top}}{\phi(\mathbf{W}\mathbf{z}_u^{t}/\sqrt{\tau})^\top \sum_m\exp(g_m/\tau)\phi(\mathbf{W}\mathbf{z}_m^{\cdot}/\sqrt{\tau})},
    \label{eq:gumbel}
\end{align}
where the element $g$ is independently and identically sampled from the Gumbel distribution $G(0,1)$, which first samples $u$ from the uniform distribution $U(0,1)$, and then obtain $g=-\log(-\log(u))$. $\tau$ is a temperature coefficient.

Eq.~\eqref{eq:gumbel} provides the continuous relaxation for sampling neighbor nodes of node $u$ from the categorical distribution $\operatorname{Cat}(\boldsymbol{\pi}_u)$, where $\boldsymbol{\pi}_u$ is determined by the edge weights $\hat{\alpha}_{uv}$. This guarantees a differentiable optimization over the discrete inter- and intra-graph structures. The temperature parameter $\tau$ regulates the proximity to hard discrete samples: smaller $\tau$ results in a distribution closer to a one-hot categorical distribution, while larger $\tau$ yields a distribution closer to a uniform distribution across each category. Similar to Eq.~\eqref{eq:update_3}, the two summations in Eq.~\eqref{eq:gumbel} greatly contribute to decreasing the quadratic complexity as they can be computed once and stored for each $u$. Also, kernel $k(\cdot,\cdot)$ can be estimated by the same Positive Random Features (PRF) for the Gumbel-Softmax approximation.

\subsubsection{Approximation Error Bound.}
We next present the error bound of approximating the Gumbel-Softmax kernel $k(\cdot,\cdot)$ with the Gumbel Softmax Positive Random Features $\phi(\cdot)$.
\begin{Prop}[Gumbel-Softmax Kernel Approximation Error Bound]
\label{prop:kernel_bound}
Suppose $\|\mathbf{W}\mathbf{z}_u^t\|_2$ and $\|\mathbf{W}\mathbf{z}_v^t\|_2$ are bounded by some $r$, then for any positive $\epsilon$, the approximation error $\Delta = |\phi(\mathbf{W}\mathbf{z}_u^{t}/\sqrt{\tau})^\top\phi(\mathbf{W}\mathbf{z}_v^{t}/\sqrt{\tau}) - k(\mathbf{W}\mathbf{z}_u^{t}/\sqrt{\tau}, \mathbf{W}\mathbf{z}_v^{t}/\sqrt{\tau})|$ satisfies,
\begin{equation}
    \mathbb{P}(\Delta < \epsilon) \ge 1-\frac{\exp(\frac{6r}{\tau})}{m\epsilon^2},
\end{equation}
where $m$ denotes the projection dimension in the Reproducing Kernel Hilbert Space $\mathcal{H}$.
\end{Prop}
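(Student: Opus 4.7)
The plan is to recognize the claim as a Chebyshev-style concentration bound for an unbiased random-feature estimator, and to derive it in three short steps. Writing $\tilde{x} = \mathbf{W}\mathbf{z}_u^t/\sqrt{\tau}$ and $\tilde{y} = \mathbf{W}\mathbf{z}_v^t/\sqrt{\tau}$, I would first use the PRF definition in Eq.~\eqref{eq:prf} to express
$$\phi(\tilde{x})^\top \phi(\tilde{y}) = \frac{1}{m}\sum_{i=1}^{m} f_i, \qquad f_i = \exp\!\left(\omega_i^\top(\tilde{x}+\tilde{y}) - \tfrac{1}{2}\bigl(\|\tilde{x}\|^2 + \|\tilde{y}\|^2\bigr)\right),$$
with $\omega_i \sim \mathcal{N}(0, I_d)$ i.i.d. This presents the estimator as an empirical mean of $m$ bounded-MGF random variables.

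Second, I would establish unbiasedness by a one-line moment generating function calculation for a standard Gaussian, $\mathbb{E}[\exp(\omega^\top z)] = \exp(\|z\|^2/2)$, applied with $z = \tilde{x}+\tilde{y}$: after collecting the deterministic factor $\exp(-(\|\tilde{x}\|^2+\|\tilde{y}\|^2)/2)$, the cross term $\exp(\tilde{x}^\top \tilde{y}) = k(\tilde{x}, \tilde{y})$ drops out. Therefore $\Delta$ is exactly the deviation of a sample mean from its expectation, and Chebyshev's inequality immediately yields $\mathbb{P}(\Delta \ge \epsilon) \le \operatorname{Var}(f_1)/(m\epsilon^2)$.

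Third, I would bound $\operatorname{Var}(f_1)$ by $\mathbb{E}[f_1^2]$ and apply the same Gaussian MGF to the doubled argument $2\omega^\top(\tilde{x}+\tilde{y})$, producing $\mathbb{E}[f_1^2] = \exp(\|\tilde{x}\|^2 + \|\tilde{y}\|^2 + 4\tilde{x}^\top \tilde{y})$. Combining Cauchy--Schwarz with the hypothesis $\|\mathbf{W}\mathbf{z}_u^t\|_2, \|\mathbf{W}\mathbf{z}_v^t\|_2 \le r$ (and the $1/\sqrt{\tau}$ rescaling) bounds the exponent by $6r/\tau$ in the convention the authors use for $r$, so $\operatorname{Var}(f_1) \le \exp(6r/\tau)$. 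Plugging into Chebyshev and taking the complement yields the claimed $\mathbb{P}(\Delta < \epsilon) \ge 1 - \exp(6r/\tau)/(m\epsilon^2)$.

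The main obstacle I anticipate is the bookkeeping of the $1/\sqrt{\tau}$ rescaling inside the exponents: every Gaussian-MGF step squares its linear argument, so an error in tracking the rescaled norms propagates directly into the exponent of the final bound. Once that rescaling is handled carefully, unbiasedness, the variance identity, Chebyshev's inequality, and the worst-case norm substitution are all routine and mechanical.
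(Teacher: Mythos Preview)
Your proposal is correct and follows essentially the same route as the paper: unbiasedness of the PRF estimator, a variance bound, Chebyshev's inequality, and then the norm hypotheses to cap the exponent at $6r/\tau$. The only cosmetic difference is that the paper imports the mean and variance of $\phi(\tilde x)^\top\phi(\tilde y)$ as a cited lemma (Lemma~\ref{lemma:mean_and_var}) and then drops the $(1-\exp(-\|\tilde x+\tilde y\|^2))\le 1$ factor, whereas you derive them directly from the Gaussian MGF and bound $\operatorname{Var}(f_1)\le\mathbb{E}[f_1^2]$; both land on the same exponent $\|\tilde x\|^2+\|\tilde y\|^2+4\tilde x^\top\tilde y$, and your caveat about the authors' convention on $r$ versus $r^2$ is well taken.
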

\begin{proof}
    We first introduce the following lemma introduced in \citet{choromanski2020rethinking} and \cite{wu2022nodeformer}.
    \begin{Lemma}[\citet{choromanski2020rethinking, wu2022nodeformer}]
    \label{lemma:mean_and_var}
    Given the Positive Random Feature defined in Eq.~\eqref{eq:prf}, where its Softmax kernel approximation is,
    \begin{align}
        &\exp(\mathbf{x},\mathbf{y})\doteq \phi(\mathbf{x})^\top\phi(\mathbf{y})\notag\\
        &=\frac{1}{m}\sum_{i=1}^m\left[ \exp\left(\boldsymbol{\omega}_i^\top\mathbf{x}-\frac{\|\mathbf{x}\|_2^2}{2}\right)\exp\left(\boldsymbol{\omega}_i^\top\mathbf{y}-\frac{\|\mathbf{y}\|_2^2}{2}\right) \right],
    \end{align}
    has the quantified mean and variance, \ie,
    \begin{align}
        &\mathbb{E}(\phi(\mathbf{x})^\top\phi(\mathbf{y})) = \exp(\mathbf{x}^\top\mathbf{y}),\label{eq:mean}\\
        &\mathbb{V}(\phi(\mathbf{x})^\top\phi(\mathbf{y})) = \frac{1}{m} \exp(\|\mathbf{x}+\mathbf{y}\|_2^2)\exp^2(\mathbf{x}^\top\mathbf{y})\notag\\
        &\color{white}{\mathbb{V}(\phi(\mathbf{x})^\top\phi(\mathbf{y})) = \;}\color{black}(1-\exp(-\|\mathbf{x}+\mathbf{y}\|_2^2)).\label{eq:var}
    \end{align}
    \end{Lemma}
    Lemma~\ref{lemma:mean_and_var} demonstrates that the Positive Random Feature can achieve unbiased and deterministic approximation for the Softmax kernel. We next utilize Lemma~\ref{lemma:mean_and_var} to derive the probability inequality with Chebyshv's Inequality~\cite{feller1991introduction} that let $\mathbf{X}$ be a random variable with finite mean $\mu$ and non-zero variance $\sigma^2$, then for any positive $\epsilon$,
    \begin{equation}
        \mathbb{P}(|\mathbf{X}-\mu|<\epsilon)\ge 1-\frac{\sigma^2}{\epsilon^2}.
    \label{eq:cheb}
    \end{equation}
    We integrate Eq.~\eqref{eq:cheb} with Eq.~\eqref{eq:mean} and Eq.~\eqref{eq:var}, \ie,
    \begin{align}
        &\mathbb{P}(|\phi(\mathbf{x})^\top\phi(\mathbf{y})-\exp(\mathbf{x}^\top\mathbf{y})|<\epsilon)\notag\\&\ge 1-\frac{\exp(\|\mathbf{x}+\mathbf{y}\|_2^2)\exp^2(\mathbf{x}^\top\mathbf{y})(1-\exp(-\|\mathbf{x}+\mathbf{y}\|_2^2))}{m\epsilon^2}\notag\\
        &= 1-\frac{\exp(\|\mathbf{x}+\mathbf{y}\|_2^2)\exp^2(\mathbf{x}^\top\mathbf{y})}{m\epsilon^2}.
    \end{align}
    Further, we replace $\mathbf{x}$ as $\mathbf{W}\mathbf{z}_u^{t}/\sqrt{\tau}$, and $\mathbf{y}$ as $\mathbf{W}\mathbf{z}_v^{t}/\sqrt{\tau}$, we have,
    \begin{align}
        \mathbb{P}(\Delta < \epsilon) \ge 1- \frac{\exp(\|\frac{\mathbf{W}\mathbf{z}_u^{t}+\mathbf{W}\mathbf{z}_v^{t}}{\sqrt{\tau}}\|_2^2+2\frac{(\mathbf{W}\mathbf{z}_u^{t})^\top\mathbf{W}\mathbf{z}_v^{t}}{\tau})}{m\epsilon^2}.
    \end{align}
    Since $\|\mathbf{W}\mathbf{z}_u^t\|_2$ and $\|\mathbf{W}\mathbf{z}_v^t\|_2$ are bounded by $r$, such that,
    \begin{equation}
        \left\|\frac{\mathbf{W}\mathbf{z}_u^{t}+\mathbf{W}\mathbf{z}_v^{t}}{\sqrt{\tau}}\right\|_2^2 \le \frac{4r}{\tau},\; 2\frac{(\mathbf{W}\mathbf{z}_u^{t})^\top\mathbf{W}\mathbf{z}_v^{t}}{\tau} \le \frac{2r}{\tau}.
    \end{equation}
   Then we achieve,
    \begin{equation}
        \mathbb{P}(\Delta < \epsilon) \ge 1-\frac{\exp(\frac{6r}{\tau})}{m\epsilon^2},
    \label{eq:bound_2}
    \end{equation}
    which indicates that the approximation error is upper bound by a quantified probability. We conclude the proof.
\end{proof}
\subsubsection{Empirical Analysis.}
$\epsilon$ is any positive real number, which means the inequality holds true when $\epsilon\to$ 0. In practice, we prefer setting $\tau\to$ 0 as smaller $\tau$ results in a distribution closer to the one-hot categorical distribution. $m$ is some constants that can be ignored. Considering that the growth rate of the exponential function is much faster than that of the quadratic function, as $\epsilon$ and $\tau$ simultaneously approach 0, the entire fraction of the RHS in Eq.~\eqref{eq:bound_2} approaches 0, which means the probability $\mathbb{P}(\Delta < \epsilon)$ approaches 1. In other words, the probability that the Gumbel-Softmax kernel's approximation error by Positive Random Features approaches zero is exceptionally high, further validating its effectiveness and superiority.

\subsubsection{Efficient Structure Query.}
As Eq.~\eqref{eq:update_3} combines both the edge reweighting and message-passing process in a unified and implicit manner, we can still explicitly obtain the optimized edge weights $\hat{\alpha}_{uv}^{t-1:t}$ by efficiently querying the approximated kernels of any node pairs~\cite{wu2022nodeformer}. For node $u$ in $\mathcal{V}^t$, nodes $v$ and $m$ in $\mathcal{N}(u)^{t-1:t}$,
\begin{align}
    \hat{\alpha}_{uv}^{t-1:t(l)}&=\frac{\exp(\sigma((\mathbf{W}\mathbf{z}_u^{t(l)})^\top(\mathbf{W}\mathbf{z}_v^{\cdot(l)})))}{\sum_m \exp(\sigma((\mathbf{W}\mathbf{z}_u^{t(l)})^\top(\mathbf{W}\mathbf{z}_m^{\cdot(l)})))}\notag\\
    &\doteq \frac{\phi(\mathbf{W}\mathbf{z}_u^{t(l)})^\top\phi(\mathbf{W}\mathbf{z}_v^{\cdot(l)})}{\phi(\mathbf{W}\mathbf{z}_u^{t(l)})^\top\sum_m\phi(\mathbf{W}\mathbf{z}_m^{t(l)})}.
\end{align}
Similar to the approximation in Eq~.\ref{eq:update_3}, the two summations can be computed once and stored for each $u$. For intra-graph structure query, we only need to query the existing edges of the input dynamic graphs, which consumes $\mathcal{O}(T(|\mathcal{V}|+|\mathcal{E}|))$ time complexity (for Eq.~\eqref{eq:intra_structure} and Eq.~\eqref{eq:L_edge}). For inter-graph structure, as there does not exist original inter-graph structures in the input graphs, we initialize the cross-graph structures by calculating the cosine similarity between graphs with the initialized node embeddings before training and select the top-$k$ edges where $k$ equals $|\mathcal{E}|$. Such that, the inter-graph structure query also consumes $\mathcal{O}(T(|\mathcal{V}|+|\mathcal{E}|))$ time complexity. The overall computational complexity for the structure query is $\mathcal{O}(2T(|\mathcal{V}|+|\mathcal{E}|))$without increasing the total computational burden.

\subsection{Proof of Loss Equivalence for Edge-Level Constraints}
\label{sec:proof_2}
In this section, we provide proof of the loss equivalence for the edge-level constraints. Particularly, we approximately decompose the overall PRI for DGSL objective (Eq.~\eqref{eq:pri}) into respective spatial-wise and temporal-wise regularizing with the learned intra- and inter-graph structures (Eq.~\eqref{eq:structure_pri}). Additionally, to regularize intra-graph structures, we transform the divergence term $\mathcal{D}(\hat{\mathbf{A}}^{1:T}_\text{intra}\|\mathbf{A}^{1:T}_\text{intra})$ into the edge-level constraints $\mathcal{L}_\text{edge}$ (Eq.~\eqref{eq:L_intra}).

\subsubsection{Loss Equivalence.}
Next, we introduce the proof of the loss equivalence for edge-level constraints transformation.
\begin{Prop}[Edge-Level Constraints Loss Equivalance]
    Given the learned intra-graph structures $\hat{\mathbf{A}^{1:T}_\text{intra}}$ and the original structures $\mathbf{A}^{1:T}_\text{intra}$, the divergence loss $\mathcal{D}(\hat{\mathbf{A}}^{1:T}_\text{intra}\|\mathbf{A}^{1:T}_\text{intra})$ is equivalent to the edge-level constraints $\mathcal{L}_\text{edge}$.
\end{Prop}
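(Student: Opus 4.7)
The plan is to interpret each row of the intra-graph adjacency as a categorical distribution over a node's neighbors and show that the matrix-level divergence decomposes into $\mathcal{L}_\text{edge}$ plus a term that does not depend on the learnable parameters, establishing gradient-level equivalence.

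First I would assign, for every snapshot $t$ and anchor $u\in\mathcal{V}^t$, the empirical neighbor distribution $p^t(v\mid u)=\mathbf{1}[(u,v)\in\mathcal{E}^t]/d(u)$ induced by $\mathbf{A}^{t}_\text{intra}$, and the learned distribution $q^t(v\mid u)=\hat{\alpha}^t_{uv}$ induced by $\hat{\mathbf{A}}^{t}_\text{intra}$. Both are valid probability mass functions: $q^t$ is normalized by construction of the kernelized softmax in Eq.~\eqref{eq:strcuture_learning} (and its PRF approximation derived in Appendix~\ref{sec:proof_1}), while $p^t$ is uniform over the $d(u)$ observed neighbors. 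I would then instantiate the abstract divergence in Eq.~\eqref{eq:pri} as the node- and snapshot-aggregated KL
\begin{equation*}
\mathcal{D}(\hat{\mathbf{A}}^{1:T}_\text{intra}\,\|\,\mathbf{A}^{1:T}_\text{intra}) \;\triangleq\; \frac{1}{NT}\sum_{t=1}^{T}\sum_{u\in\mathcal{V}^t}\operatorname{KL}\!\left(p^t(\cdot\mid u)\,\|\,q^t(\cdot\mid u)\right),
\end{equation*}
which is the natural edge-supervised reading of the discrepancy between the learned and the observed adjacency matrices.

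Next I would expand each KL using the identity $\operatorname{KL}(p\|q) = -H(p) - \mathbb{E}_p[\log q]$, which for the uniform empirical neighbor distribution gives
\begin{equation*}
\operatorname{KL}(p^t(\cdot\mid u)\,\|\,q^t(\cdot\mid u)) \;=\; -\log d(u) \;-\; \sum_{v\in\mathcal{N}(u)^t}\frac{1}{d(u)}\log\hat{\alpha}^t_{uv}.
\end{equation*}
The first term depends only on the observed degree sequence and is constant with respect to $\boldsymbol{\theta}$, so it vanishes under differentiation. Summing the second term over $u$ converts the per-node sum into a sum over edges $(u,v)\in\mathcal{E}^t$, and applying the $1/(NT)$ normalization reproduces exactly the expression for $\mathcal{L}_\text{edge}$ in Eq.~\eqref{eq:L_edge}. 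Hence $\mathcal{D}(\hat{\mathbf{A}}^{1:T}_\text{intra}\,\|\,\mathbf{A}^{1:T}_\text{intra}) = \mathcal{L}_\text{edge} + C$ for a data-dependent but parameter-independent constant $C$, which is the desired equivalence.

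The main obstacle I anticipate is reconciling the argument order $\mathcal{D}(\hat{\mathbf{A}}\,\|\,\mathbf{A})$ written in Eq.~\eqref{eq:pri} with the direction of KL used above: a strict forward KL from the dense learned distribution $q^t$ to the sparse empirical $p^t$ would diverge whenever $\hat{\alpha}^t_{uv}>0$ outside $\mathcal{N}(u)^t$, since $p^t$ assigns zero mass there. I would resolve this by reading $\mathcal{D}$ as a generic divergence functional between the two adjacency-induced distributions and instantiating it in the well-defined direction that places the empirical $p^t$ on the left; this is the standard maximum-likelihood surrogate for edge reconstruction and is the only choice that yields a finite, differentiable objective. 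A minor secondary check is to verify that $\{\hat{\alpha}^t_{uv}\}_{v\in\mathcal{N}(u)^t}$ is properly normalized under the kernel approximation so that $\log\hat{\alpha}^t_{uv}$ is a valid log-probability; this follows directly from the explicit edge-weight query formula of Appendix~\ref{sec:proof_1}, completing the equivalence.
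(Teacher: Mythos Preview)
Your proposal is correct and in fact tighter than the paper's own argument, but it takes a genuinely different route. The paper treats the two adjacency matrices as a single \emph{joint} distribution over all ordered pairs, setting $\mathbb{P}_{uv}\propto\hat{\alpha}^{t}_{uv}$ and $\mathbb{Q}_{uv}\propto\alpha^{t}_{uv}$, and computes $\mathcal{D}_{\mathrm{KL}}(\mathbb{P}\|\mathbb{Q})$ with the \emph{learned} distribution on the left. After simplification it obtains $\sum_{u,v}\mathbb{P}_{uv}\log(\hat{\alpha}^{t}_{uv}/\alpha^{t}_{uv})$ and then argues only at the level of optimization goals: since $\alpha^{t}_{uv}=1$ on observed edges, minimizing the KL amounts to pushing $\hat{\alpha}^{t}_{uv}$ toward~$1$, which is the same qualitative direction as minimizing $\mathcal{L}_\text{edge}$. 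This is an equivalence of minimizers/monotone behaviour, not an equality of losses or gradients.

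Your decomposition is different in two ways. First, you work row-wise with the conditional $p^t(\cdot\mid u)$ rather than a single joint over $(u,v)$; this is what makes the $1/d(u)$ weight in $\mathcal{L}_\text{edge}$ appear naturally rather than being matched heuristically. Second, you place the empirical distribution on the left of the KL, which both avoids the $\log 0$ pathology you flagged and reduces the cross term exactly to $-\sum_{v\in\mathcal{N}(u)}\tfrac{1}{d(u)}\log\hat{\alpha}^{t}_{uv}$, yielding $\mathcal{D}=\mathcal{L}_\text{edge}+C$ with $C$ parameter-independent. That is a strictly stronger statement (gradient-level equivalence) than what the paper proves. The price you pay is that you must reinterpret the argument order in $\mathcal{D}(\hat{\mathbf{A}}\|\mathbf{A})$; the paper sidesteps this by keeping the written order but settling for the weaker ``same optimization goal'' conclusion.
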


\begin{proof}
    For brevity, we denote the original graph structures as $\mathbf{A}^{1:T}$, and the learned structures as $\hat{\mathbf{A}}^{1:T}$. Considering the structures of a single graph snapshot, the target is converted to prove $\mathcal{D}(\hat{\mathbf{A}}^t\|\mathbf{A}^{t})$ is equivalent to edge-level constraints,
    \begin{equation}
        \mathcal{L}_\text{edge}^t = -\frac{1}{N}\sum_{u,v\in\mathcal{E}^t} \frac{1}{d(u)}\log \hat{\alpha}^t_{uv}.
    \end{equation}
    We first define the probability of the two adjacency matrices,
    \begin{equation}
        \mathbb{P}_{uv} = \frac{\hat{\alpha}^{t}_{uv}}{\sum_{u,v}\hat{\alpha}^{t}_{uv}},\quad
        \mathbb{Q}_{uv} = \frac{{\alpha}^{t}_{uv}}{\sum_{u,v}{\alpha}^{t}_{uv}}.
    \end{equation}
    Next, the KL-divergence is implemented to the divergence,
    \begin{equation}
        \color{white}{xxxx}\color{black}\mathcal{D}_\text{KL}(\mathbb{P}\|\mathbb{Q}) = \sum_{u,v}\mathbb{P}_{uv}\log\frac{\mathbb{P}_{uv}}{\mathbb{Q}_{uv}}.\;\;\color{white}{xx}\color{black}
    \end{equation}
    The optimization goal is to minimize the KL-divergence between $\hat{\alpha}^t$ and $\alpha^{t}$, \ie, we expect there exists a limited deviation between the learned structures and the original ones,
    \begin{equation}
        \mathcal{D}_\text{KL}(\mathbb{P}\|\mathbb{Q}) = \sum_{u,v} \frac{\hat{\alpha}^{t}_{uv}}{\sum_{u,v}\hat{\alpha}^{t}_{uv}} \log \frac{\frac{\hat{\alpha}^{t}_{uv}}{\sum_{u,v}\hat{\alpha}^{t}_{uv}}}{\frac{{\alpha}^{t}_{uv}}{\sum_{u,v}{\alpha}^{t}_{uv}}},
    \end{equation}
    which can be simplified as,
    \allowdisplaybreaks
    \begin{align}
        &\mathcal{D}_\text{KL}(\mathbb{P}\|\mathbb{Q}) = \sum_{u,v}\frac{\hat{\alpha}^{t}_{uv}}{\sum_{k,l}\hat{\alpha}^{t}_{kl}}\log \frac{\hat{\alpha}^{t}_{uv}\sum_{k,l}\alpha^{t}_{kl}}{\alpha^{t}_{uv}\sum_{k,l}\hat{\alpha}^{t}_{kl}}\notag\\
        &=\sum_{u,v}\frac{\hat{\alpha}^{t}_{uv}}{\sum_{k,l}\hat{\alpha}^{t}_{kl}}\Big( \log\hat{\alpha}^{t}_{uv}+\log\sum_{k,l}\alpha^{t}_{kl}\notag-\log\alpha^{t}_{uv}\notag\\&\quad\quad\quad\quad\quad\quad\quad\quad\quad\quad\quad\quad\quad\quad\quad-\log\sum_{k,l}\hat{\alpha}^{t}_{kl} \Big)\notag\\
        &=\sum_{u,v}\frac{\hat{\alpha}^{t}_{uv}}{\sum_{k,l}\hat{\alpha}^{t}_{kl}}\log\hat{\alpha}^{t}_{uv}+\log\sum_{k,l}\alpha^{t}_{kl}\sum_{u,v}\frac{\hat{\alpha}^{t}_{uv}}{\sum_{k,l}\hat{\alpha}^{t}_{kl}}\notag\\
        &\quad-\sum_{u,v}\frac{\hat{\alpha}^{t}_{uv}}{\sum_{k,l}\hat{\alpha}^{t}_{kl}}\log\alpha^{t}_{uv}-\log\sum_{k,l}\hat{\alpha}^{t}_{kl}\sum_{u,v}\frac{\hat{\alpha}^{t}_{uv}}{\sum_{k,l}\hat{\alpha}^{t}_{kl}},\label{eq:kl_temp}
    \end{align}
    where, 
    \begin{equation}
        \log\sum_{k,l}\alpha^{t}_{kl}=\log\sum_{k,l}\hat{\alpha}^{t}_{kl}=0,
    \end{equation}
    Such that, Eq.~\eqref{eq:kl_temp} can be further simplified to,
    \begin{align}
        \mathcal{D}_\text{KL}(\mathbb{P}\|\mathbb{Q}) &= \sum_{u,v}\frac{\hat{\alpha}^{t}_{uv}}{\sum_{k,l}\hat{\alpha}^{t}_{kl}}\left(\log\hat{\alpha}^{t}_{uv}-\log\alpha^{t}_{uv}\right)\notag\\
        &=\sum_{u,v}\frac{\hat{\alpha}^{t}_{uv}}{\sum_{k,l}\hat{\alpha}^{t}_{kl}} \log\frac{\hat{\alpha}^{t}_{uv}}{\alpha^{t}_{uv}}.
        \label{eq:kl-divergence}
    \end{align}
    Since the KL-divergence is non-negative, its minimum value of zero occurs when $\hat{\alpha}^{t}_{uv}/\alpha^{t}_{uv}$ in Eq.~\eqref{eq:kl-divergence} equals 1. Given that $\alpha^{t}_{uv}$ represents binary input edge weights (fixed at 1), and $\hat{\alpha}^{t}_{uv}$ denotes the learned weights ranging between 0 and 1, minimizing the KL-divergence is equivalent to maximizing $\hat{\alpha}^{t}_{uv}$. This objective shares the same optimization goal with that of $\mathcal{L}_\text{edge}^t$, which is designed to minimize $\mathcal{L}_\text{edge}^t$ by maximizing $\hat{\alpha}^{t}_{uv}$. By generalizing the conclusion from $\mathbf{A}^t$ to $\mathbf{A}^{1:T}$, we conclude the proof.
\end{proof}

\section{Experiment Details}
\label{sec:exp_detail}
In this section, we provide additional experiment details.
\subsection{Datasets Details}
We use three real-world datasets to evaluate the effectiveness of \modelname~on the challenging future link prediction task. 
\begin{itemize}[leftmargin=*]
    \item \textbf{COLLAB}\footnote{\url{https://www.aminer.cn/collaboration}}~\cite{tang2012cross}\textbf{:} This academic collaboration dataset encompasses publications from 1990 to 2006 and consists of 16 graph snapshots detailing dynamic citation networks among authors. Nodes and edges represent authors and their co-authorships, respectively. Each edge possesses attributes based on the co-authored publication's field, including ``Data Mining'', ``Database'', ``Medical Informatics'', ``Theory'', and ``Visualization''. We employed the word2vec~\cite{mikolov2013efficient} to generate 32-dimensional node features from paper abstracts.
    \item \textbf{Yelp}\footnote{\url{https://www.yelp.com/dataset}}~\cite{sankar2020dysat}\textbf{:} This dataset includes customer reviews on businesses from a crowd-sourced local business review and social networking platform, spanning from January 2019 to December 2020 in 24 graph snapshots. In this network, nodes correspond to customers or businesses, while edges are indicative of review activities. The edges are classified into five attributes based on business categories: ``Pizza'', ``American (New) Food'', ``Coffee \& Tea'', ``Sushi Bars'', and ``Fast Food''. We applied word2vec~\cite{mikolov2013efficient} to extract 32-dimensional node features from the reviews.
    \item \textbf{ACT}\footnote{\url{https://snap.stanford.edu/data/act-mooc.html}}~\cite{kumar2019predicting}\textbf{:} This dataset captures student interactions within a popular MOOC platform over 30 graph snapshots within a single month. Nodes represent students or action targets, and edges signify student actions. We utilized K-Means to categorize action features into five distinct groups. Subsequently, we mapped these categorized features onto each student or action target, enhancing the original 4-dimensional features to 32 dimensions using a linear transformation.
\end{itemize}

Statistics of the three datasets are concluded in Table~\ref{tab:datasets}. Each dataset exhibits varying time spans and degrees of temporal granularity spanning 16 years, 24 months, and 30 days, thereby encapsulating a broad spectrum of real-world scenarios. Among these, COLLAB poses the greatest challenge for the future link prediction task. Its complexity is attributed to the extended time span and the coarsest temporal granularity. In addition, the considerable variation in link properties significantly challenges the model's robustness.

\begin{table}[!t]
\centering
\resizebox{\linewidth}{!}{
    \begin{tabular}{lrrrrr}
    \toprule
    \textbf{Dataset} & \textbf{\# Node} & \textbf{\# Link}  & \textbf{\makecell[r]{\# Link\\Type}} & \textbf{\makecell[r]{Length\\(Split)}} & \textbf{\makecell[r]{Temporal\\Granularity}} \\
    \midrule
    COLLAB & 23,035 & 151,790  & 5 & 16 (10/1/5)   & year \\
    Yelp  & 13,095 & 65,375  & 5 & 24 (15/1/8) & month \\
    ACT   & 20,408 & 202,339  & 5 & 30 (20/2/8)  & day\\
    \bottomrule
    \end{tabular}%
}
\caption{Statistics of the dynamic graph datasets.}
\label{tab:datasets}
\end{table}

\subsection{Baseline Details}
In this section, we introduce the baselines incorporated in the experiments. Hyperparameters of all baselines follow the recommended values from their papers and are fine-tuned for fairness. 
\begin{itemize}[leftmargin=*]
    \item \textbf{Static GNNs:}
    For static GNNs, we adapt them to the dynamic scenarios by stacking GNN layers with the vanilla LSTMs~\cite{hochreiter1997long}.
    \begin{itemize}[label=$\circ$]
        \item \textbf{GAE}~\cite{kipf2016variational}\textbf{:} employs Graph Convolutional Networks (GCNs)~\cite{kipf2022semi} as the encoder to learn node embeddings and reconstructs adjacency matrices via inner product. It is widely used for unsupervised node clustering and link prediction.
        \item \textbf{VGAE}~\cite{kipf2016variational}\textbf{:} extending GAE by adding the variational framework for generating probabilistic node embeddings, enhancing modeling of uncertainties in graph-structured data.
        \item \textbf{GAT}~\cite{velivckovic2018graph}\textbf{:} adapts attention mechanisms to weigh the significance of neighboring nodes dynamically, improving performance in node and graph classification tasks, particularly for graphs with irregular structures.
    \end{itemize}
    \item \textbf{Dynamic GNNs (DGNNs):}
    \begin{itemize}[label=$\circ$]
        \item \textbf{GCRN}~\cite{seo2018structured}\textbf{:} combines GCNs with recurrent neural architectures to model dynamic graphs effectively. This approach leverages the spatial and temporal features inherent in graph data, making it ideal for time-series predictions on graph-structured data.
        \item \textbf{EvolveGCN}~\cite{pareja2020evolvegcn}\textbf{:} introduces an evolutionary mechanism to GCNs, allowing node representations to evolve over time. EvolveGCN utilizes an RNN-based approach to update GCN weights dynamically, adapting to changes in the graph structure and enhancing performance in dynamic scenarios.
        \item \textbf{DySAT}~\cite{sankar2020dysat}\textbf{:} employs the self-attention mechanism across both the spatial and temporal dimensions, learning node representations that capture dependencies over time. DySAT is particularly effective for scenarios where node interactions frequently change.
        \item \textbf{SpoT-Mamba}~\cite{choi2024spot}\footnote{It is noteworthy that SpoT-Mamba is initially proposed for sequence prediction and is not capable of handling dynamic graphs with multi-dimensional node features, which falls outside the scope of our research. However, as it incorporates the highly efficient and advantageous Mamba~\cite{gu2023mamba}, and represents the cutting-edge related work, we have included it in our baselines. To mitigate differences between the research scope, we have replaced the multi-dimensional initial features of nodes in our dataset with the results from SpoT-Mamba's Walk Sequence Embedding.}\textbf{:} leveraging the state space model Mamba~\cite{gu2023mamba} for its prowess in capturing long-range dependencies, SpoT-Mamba is a novel framework for spatio-temporal graph forecasting. It generates node embeddings through node-specific walk sequences and employs temporal scans to capture long-range spatio-temporal dependencies.
    \end{itemize}
    \item \textbf{Dynamic Graph Structure Learning (DGSL):}
    \begin{itemize}[label=$\circ$]
        \item \textbf{TGSL}~\cite{zhang2023time}\textbf{:} enhances Temporal Graph Networks (TGNs)~\cite{rossi2020temporal} by predicting and incorporating time-sensitive edges to address the challenges of noisy and incomplete temporal graphs. TGSL optimizes graph structures end-to-end for improved performance on temporal link prediction tasks.
        \item \textbf{RDGSL}~\cite{zhang2023rdgsl}\textbf{:} combats noise in dynamic graphs through the dynamic graph filter and the temporal embedding learner. RDGSL dynamically addresses noise, focusing on denoising and ignoring disruptive interactions, leading to substantial performance gains in classification tasks on evolving graphs.
    \end{itemize}
    \item \textbf{Robust (D)GNNs:}
    \begin{itemize}[label=$\circ$]
        \item \textbf{RGCN}~\cite{zhu2019robust}\textbf{:} enhances robustness of GCNs against potential adversarial attacks using Gaussian distributions as node representations and introducing a variance-based attention mechanism. RGCN helps absorb adversarial effects, improving node classification accuracy significantly on benchmark graphs compared to GCNs. We adapt RGCN to the dynamic scenarios by stacking GNN layers with the vanilla LSTMs~\cite{hochreiter1997long}.
        \item \textbf{WinGNN}~\cite{zhu2023wingnn}\textbf{:} integrates a meta-learning strategy with a random gradient aggregation mechanism to model dynamics in graph neural networks efficiently. By eliminating the need for additional temporal encoders and utilizing a randomized sliding-window for gradient updates, WinGNN reduces parameter size and improves robustness, demonstrating superior performance across multiple dynamic network datasets.
        \item \textbf{DGIB}~\cite{yuan2024dynamic}\textbf{:} applies the Information Bottleneck to enhance the robustness of dynamic graph neural networks. DGIB promotes minimal, sufficient, and consensual representations to combat adversarial attacks, optimizing information flow in dynamic settings and showing remarkable robustness in link prediction tasks.
    \end{itemize}
\end{itemize}

\subsection{Experiment Setting Details}
\label{sec:exp_settings_add}
In this section, we introduce detailed experiment settings.

\subsubsection{Detailed Settings for Section~\ref{sec:exp_non_targeted}.}
We construct synthetic datasets by independently applying random perturbations to graph structures and node features. For the structural attack, each of the three real-world datasets is divided into multiple partial dynamic graphs according to their edge properties. We randomly exclude edges with a specific attribute, and the remaining edges are sequentially split into training, validation, and testing sets. Notably, edges with unseen attributes are only available during the testing phase, making the task more realistic and challenging. Furthermore, all attribute-related features are removed before training. For the feature attack, random Gaussian noise is added to each dimension of the node features, scaled according to the reference amplitude of the original features.

\subsubsection{Detailed Settings for Section~\ref{sec:exp_targeted}.}
We evaluate \modelname~under two common adversarial attack scenarios: evasion attack and poisoning attack. In the evasion attack setting, \modelname~is trained on the original clean datasets, and its link prediction performance is then evaluated on the attacked test datasets. Here, the model parameters are updated only once. For the poisoning attack, \modelname~is trained directly on the attacked training datasets, with three separate perturbation levels, and performance is reported on the corresponding test datasets. The model parameters for each of these three attacks are independently optimized. The relative performance decrease is calculated by dividing the difference between the AUC on the poisoned dataset and that on the clean dataset, by the AUC on the clean dataset. The average relative performance decrease is calculated by the average of three relative performance decreases.

\subsubsection{Detailed Settings for Section~\ref{sec:scale}.}
We compare the efficiency of \modelname~with RDGSL~\cite{zhang2023rdgsl} and DGIB-Cat~\cite{yuan2024dynamic}. For the node scaling settings, we remove a certain number of nodes from the original clean datasets, along with their connected edges. We report the training time per epoch (s). Note that, the training time per epoch does not affect the total training time to convergence for we utilize the early stopping mechanism, which halts the training after 50 consecutive epochs without improvement on the validation set, even though the training was initially set to run for 1,000 epochs. For the sequence length scaling settings, we record the peak GPU memory usage (GB) during training. The hardware environment used for training is detailed in Appendix~\ref{sec:config}.

\subsubsection{Detailed Settings for Section~\ref{sec:exp_abla}.}
We make three variants for ablation studies. For \textbf{\modelname~(\textit{w/o KMP})}, we replace the spatio-temporal kernelized message-passing mechanism proposed in Section~\ref{sec:kernel} by attention-based message-passing networks, \ie, GATs~\cite{velivckovic2018graph} for spatial-wise aggregation, and LSTMs~\cite{hochreiter1997long} for temporal convolution. \textbf{For \modelname~(\textit{w/o SM})}, we remove the long-range dependencies selective modeling proposed in Section~\ref{sec:ssm}, \ie, we update node embeddings in Eq.~\eqref{eq:merge} with only $\mathbf{Z}_\text{MP}$. For \textbf{\modelname~(\textit{w/o PRI})}, we remove the PRI for DGSL regularizing term proposed in Section~\ref{sec:pri} by calculating the overall loss in Eq.~\eqref{eq:loss_all} with only the cross-entropy for link prediction term.

\subsubsection{Detailed Settings for Section~\ref{sec:exp_hyper}.}
We conducted a sensitivity analysis on four key hyperparameters. The value ranges for each hyperparameter are empirically selected, and the analysis is performed while keeping other hyperparameters and parameters fixed. The results are visualized with error bars to assess the impact of each hyperparameter on the task performance.

\subsubsection{Detailed Settings for Section~\ref{sec:exp_vis}.}
Given the large scale of the dynamic graph datasets used for evaluation, directly visualizing the dynamic graph structure is challenging and may not effectively convey useful information. Therefore, we indirectly analyze the learned spatio-temporal structural changes by visualizing the intra- and inter-graph structure weights before and after training. Note that, in the original dataset, there are no explicit edges between graph snapshots for the inter-graph structure. To mitigate this gap, we calculate the cosine similarity between the original embeddings of nodes from two secutive graph snapshots to represent the inter-graph structure weights before training.
\section{Additional Experiment Results}
\label{sec:more_results}
In this section, we provide additional experiment results.

\begin{figure*}[!t]
    \begin{minipage}{0.51\textwidth}
        \centering
        \includegraphics[height=3.05cm]{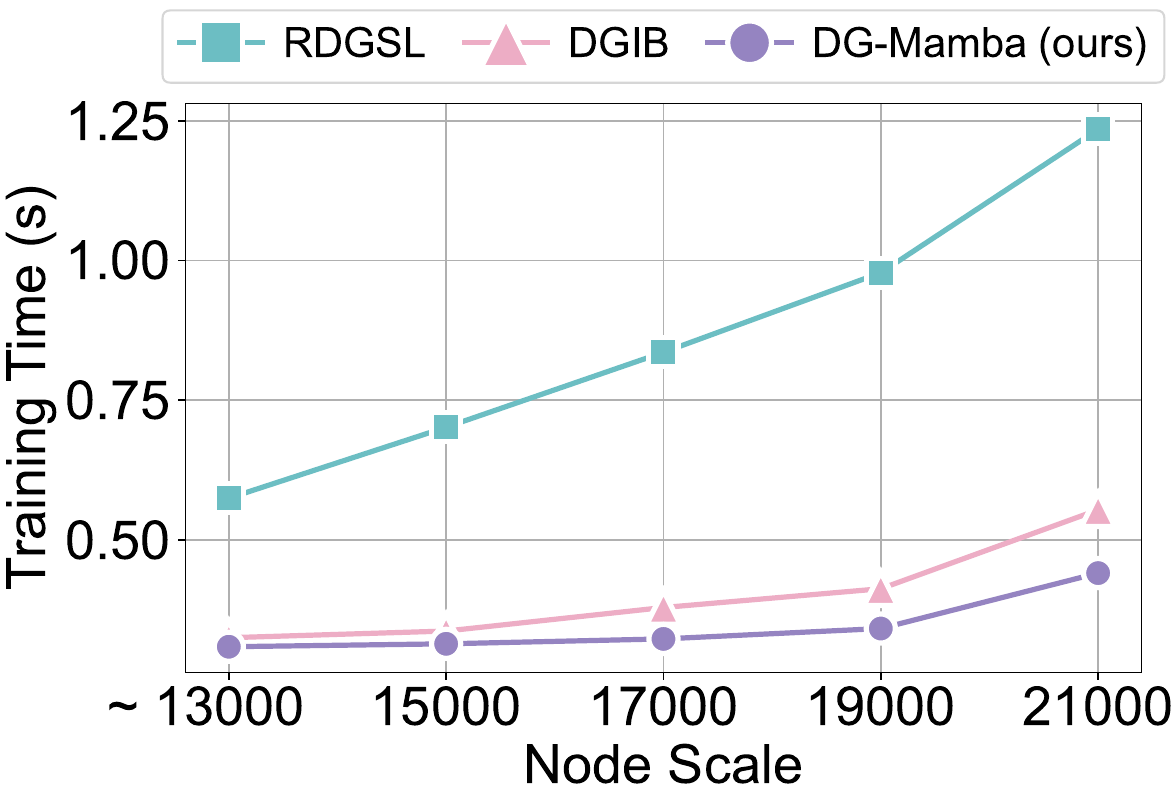}
        \hspace{-0.5em}
        \includegraphics[height=3.05cm]{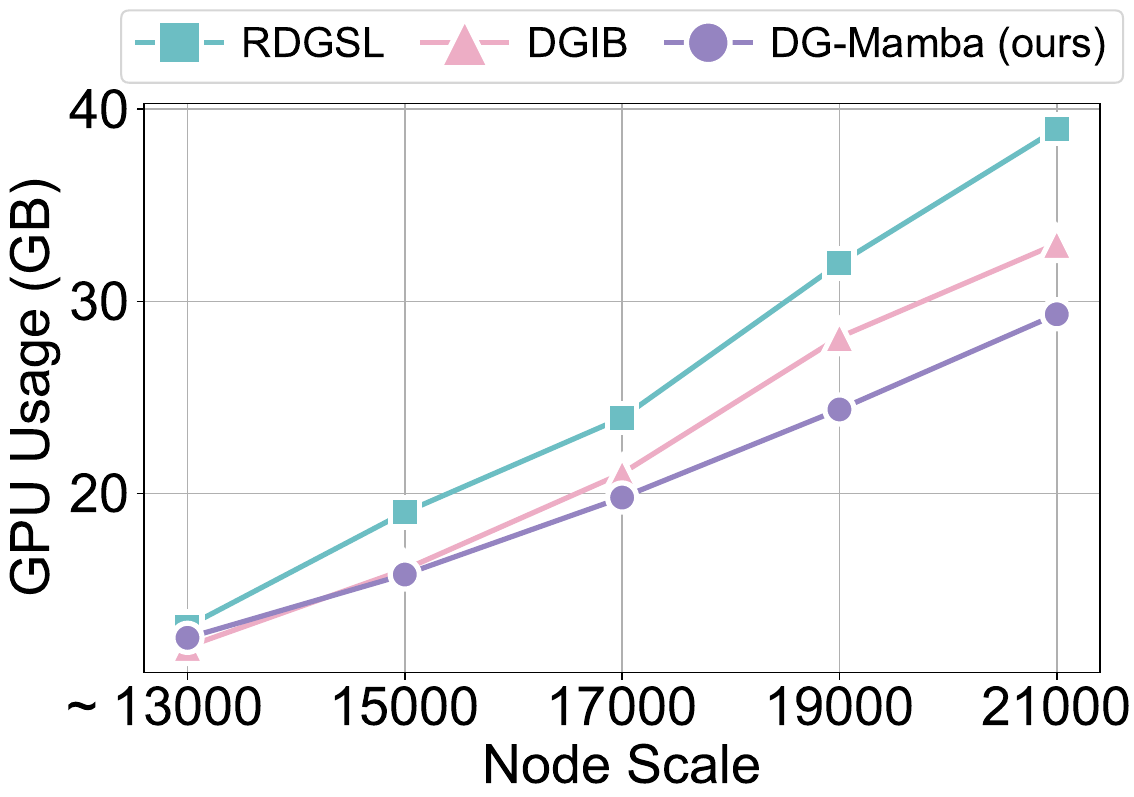}
        \caption{Node scaling efficiency evaluation on COLLAB.}
        \label{fig:effi_node_collab}
    \end{minipage}
    \hspace{-0.9em}
    \begin{minipage}{0.51\textwidth}
        \centering
        \includegraphics[height=3.05cm]{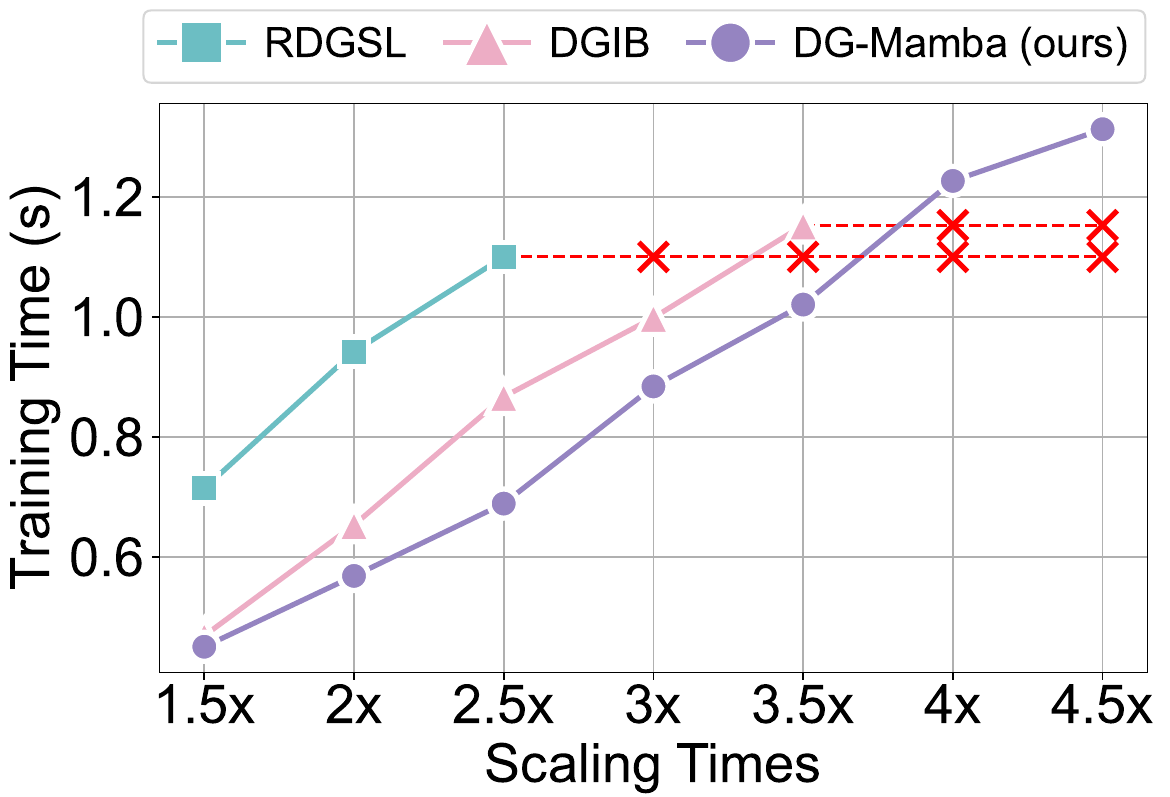}
        \hspace{-0.7em}
        \includegraphics[height=3.05cm]{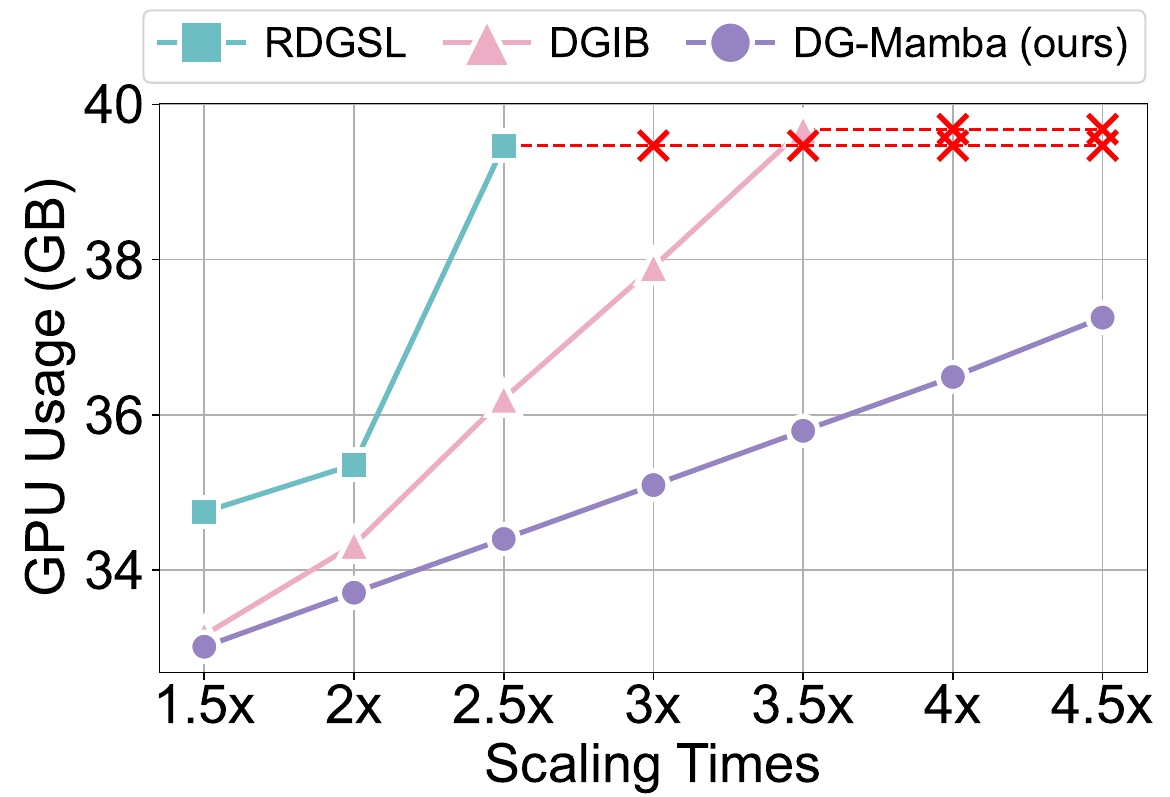}
        \caption{Sequence scaling efficiency evaluation on COLLAB.}
        \label{fig:effi_length_collab}
    \end{minipage}
\end{figure*}

\begin{figure*}[!t]
    \begin{minipage}{0.51\textwidth}
        \centering
        \includegraphics[height=3.05cm]{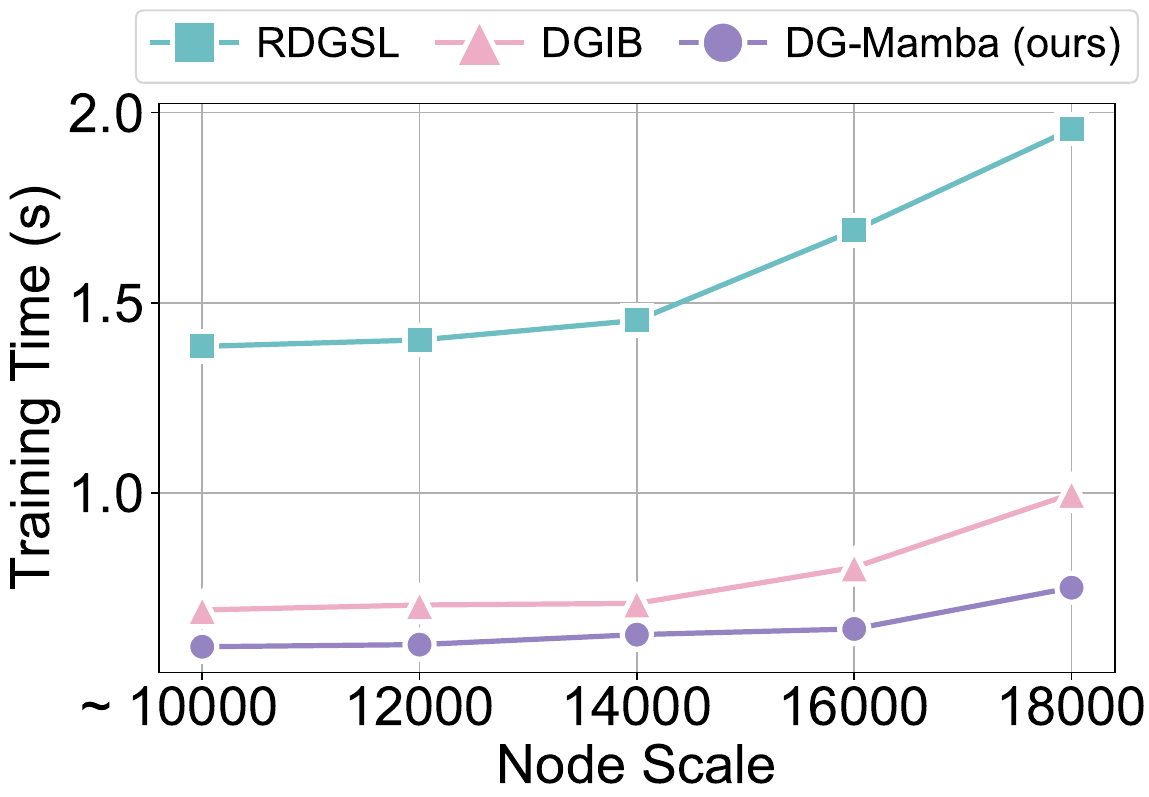}
        \hspace{-0.4em}
        \includegraphics[height=3.05cm]{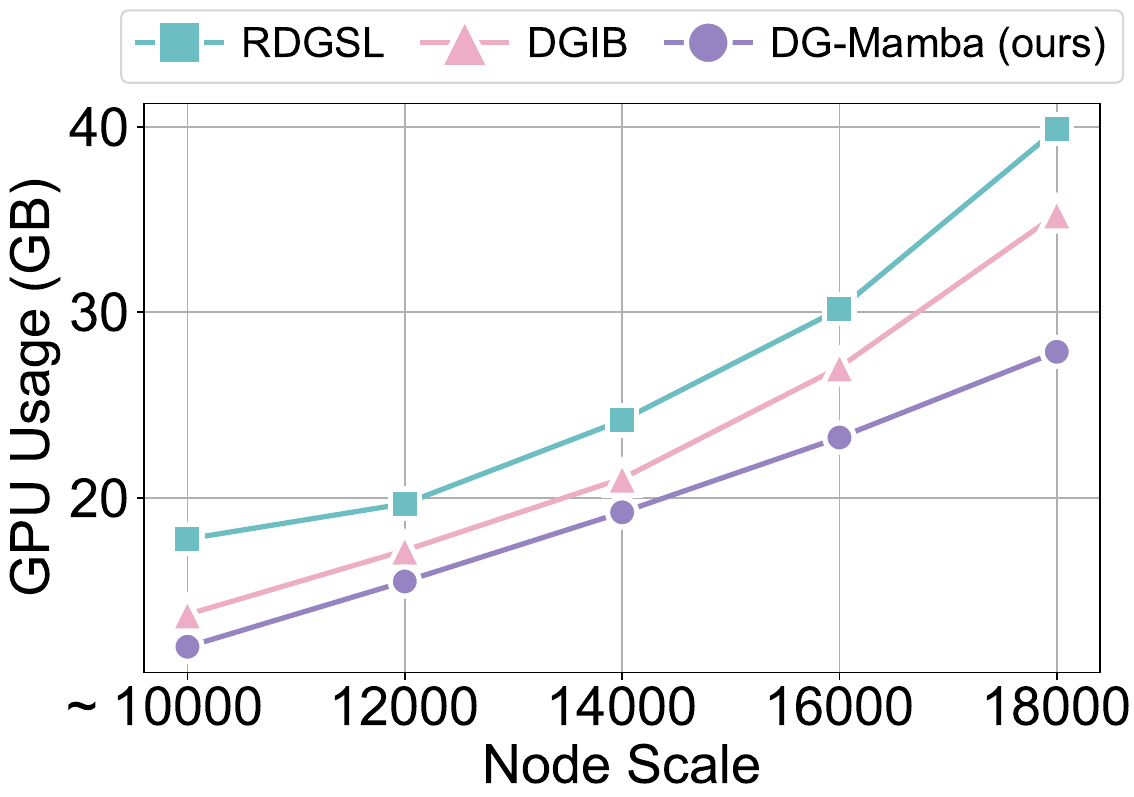}
        \caption{Node scaling efficiency evaluation on ACT.}
        \label{fig:effi_node_act}
    \end{minipage}
    \hspace{-0.9em}
    \begin{minipage}{0.51\textwidth}
        \centering
        \includegraphics[height=3.05cm]{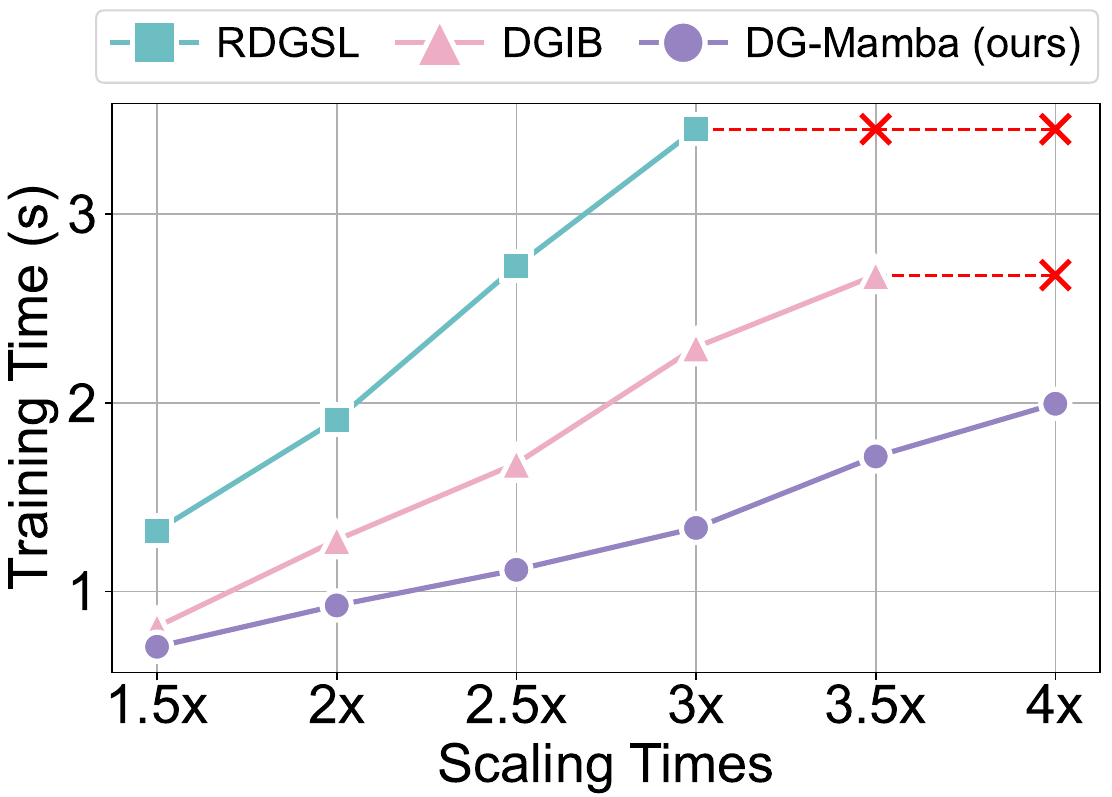}
        \hspace{-0.1em}
        \includegraphics[height=3.05cm]{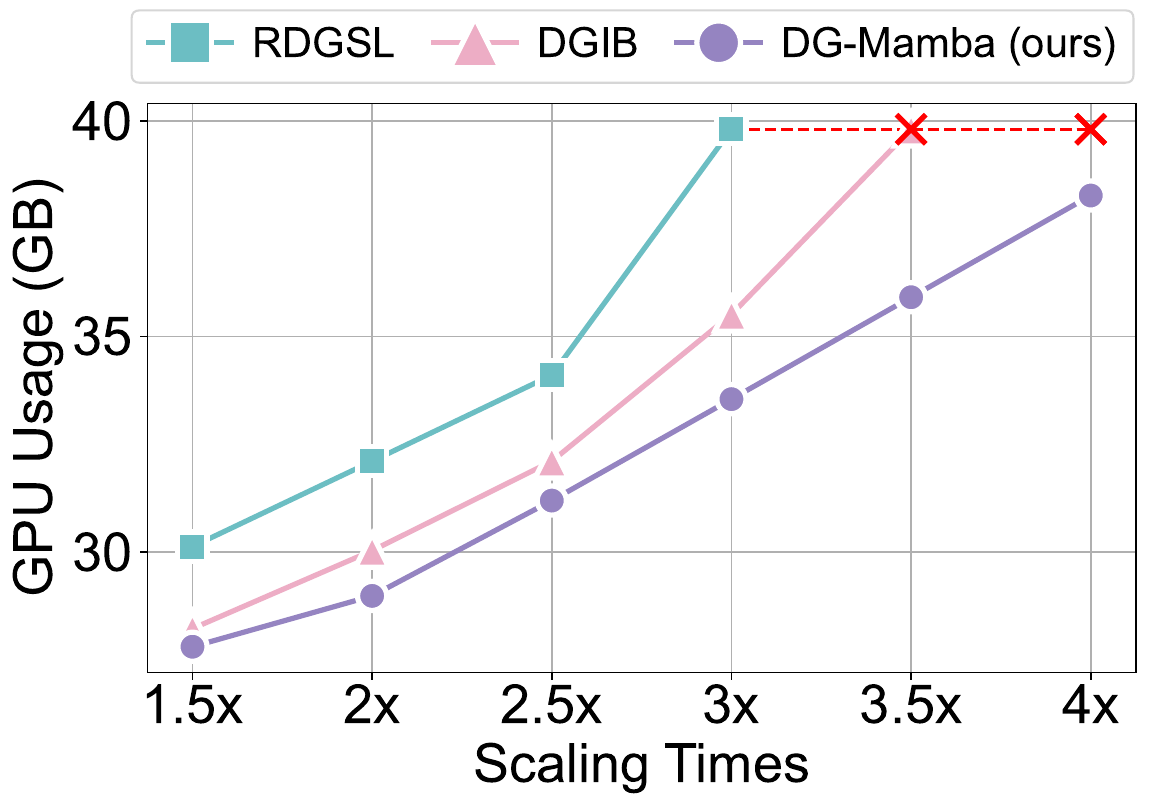}
        \caption{Sequence scaling efficiency evaluation on ACT.}
        \label{fig:effi_length_act}
    \end{minipage}
\end{figure*}

\begin{figure*}[!t]
\centering
\subfigure[COLLAB.]{
    \begin{minipage}[t]{0.33\linewidth}
    \centering
    \includegraphics[width=\linewidth]{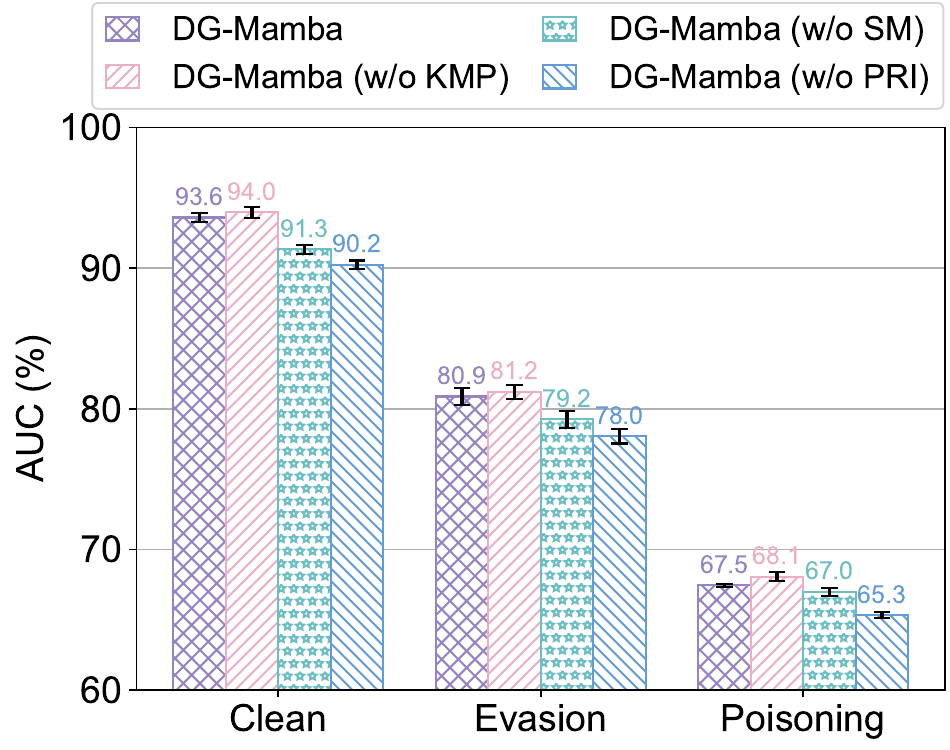}
    \label{fig:abla_collab_add}
    \vspace{-2cm}
    \end{minipage}
}
\hfill
\hspace{-1.2em}
\subfigure[Yelp.]{
    \begin{minipage}[t]{0.33\linewidth }
    \centering
    \includegraphics[width=\linewidth]{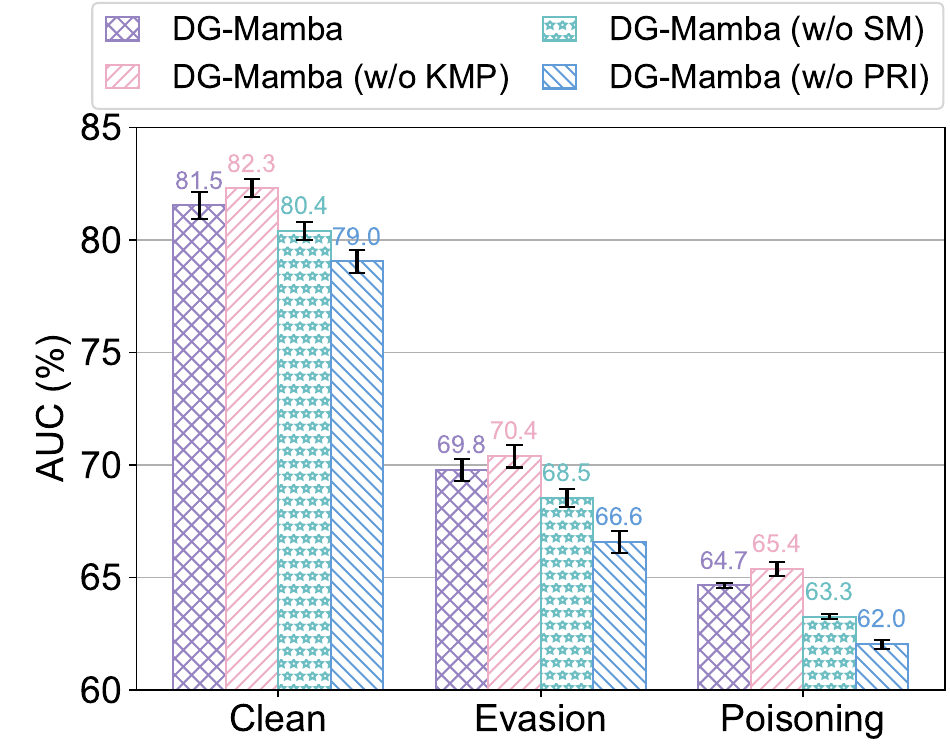}
    \label{fig:abla_yelp_add}
    \vspace{-2cm}
    \end{minipage}
}
\hfill
\hspace{-1.2em}
\subfigure[ACT.]{
    \begin{minipage}[t]{0.33\linewidth }
    \centering
    \includegraphics[width=\linewidth]{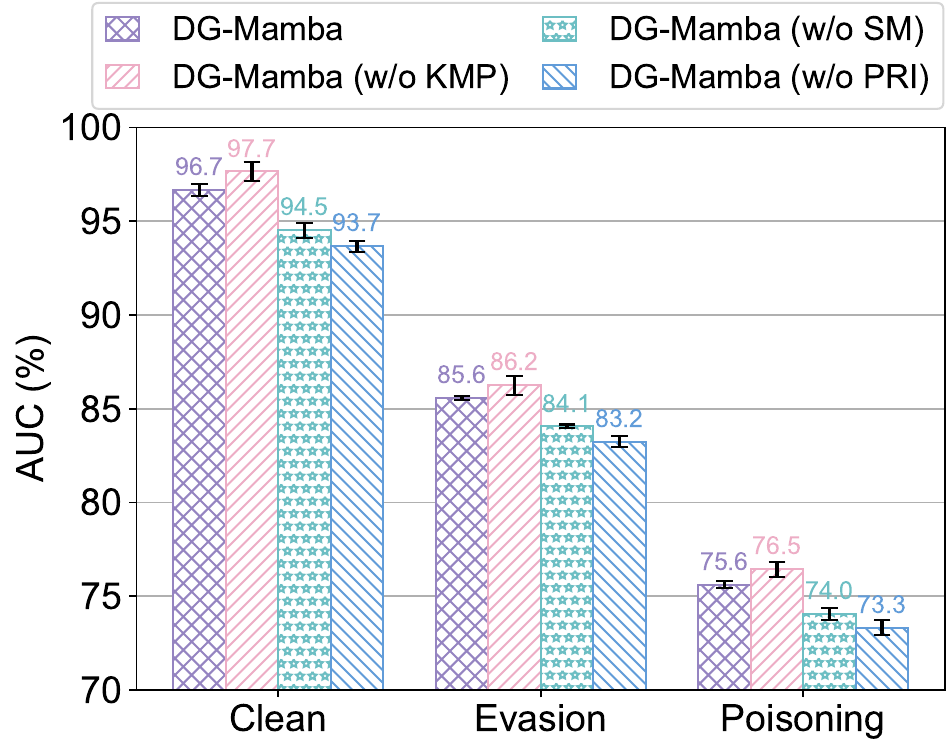}
    \label{fig:abla_act_add}
    \vspace{-2cm}
    \end{minipage}
}
\caption{Ablation studies on the clean, evasion, and poisoning datasets.}
\label{fig:abla_add}
\end{figure*}

\subsection{Additional Results: Scaling Efficiency Analysis}
\label{sec:scale_add}
We demonstrate additional results of scaling efficiency analysis for datasets COLLAB and ACT in Figure~\ref{fig:effi_node_collab} to Figure~\ref{fig:effi_length_act}, respectively. We provide additional analysis here.
\subsubsection{Analysis.}
The scaling efficiency analysis experiments focus on evaluating the efficiency of the \modelname~in terms of node scaling and sequence length scaling.
\begin{itemize}[leftmargin=*]
    \item \textbf{Node Scaling.} \modelname~demonstrates superior scalability. As node number increases, \modelname~shows near-linear growth in training time and GPU usage compared to other models. This indicates \modelname~handles larger graphs more efficiently, reducing the computational burden. Specifically, compared with RDGSL, \modelname~reduces training time by up to 65.1\% and GPU usage\footnote{We omit scenarios that cause OOM (Out-of-Memory).} by 24.8\% in COLLAB, and reduces training time by up to 62.0\% and GPU usage by 32.7\% in ACT.
    \item \textbf{Sequence Length Scaling.} \modelname~shows near-linear scalability. Compared to DGIB, \modelname~achieves a reduction in training time of up to 39.7\% and GPU usage by 12.8\% in COLLAB, and achieves a reduction in training time of up to 61.3\% and GPU usage by 15.7\% in ACT. Importantly, \modelname~continues to operate within GPU limits even when other baselines, \ie, RDGSL and DGIB, fail due to Out-Of-Memory (OOM) errors, particularly beyond a 6.5x scaling factor.
\end{itemize}

\begin{figure*}[!t]
    \centering
    \includegraphics[width=1\linewidth]{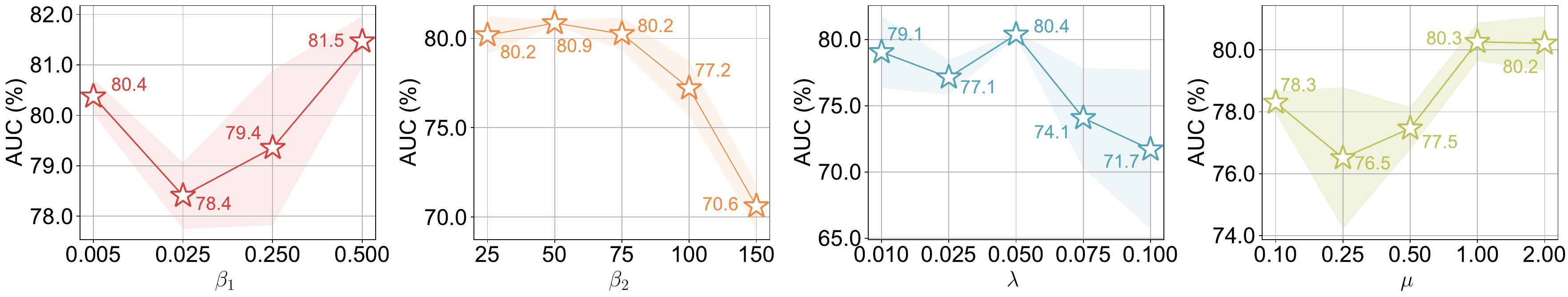}
    \caption{Hyperparameter sensitivity analysis on Yelp.}
    \label{fig:hyper_yelp}
\end{figure*}

\begin{figure*}[!t]
    \centering
    \includegraphics[width=1\linewidth]{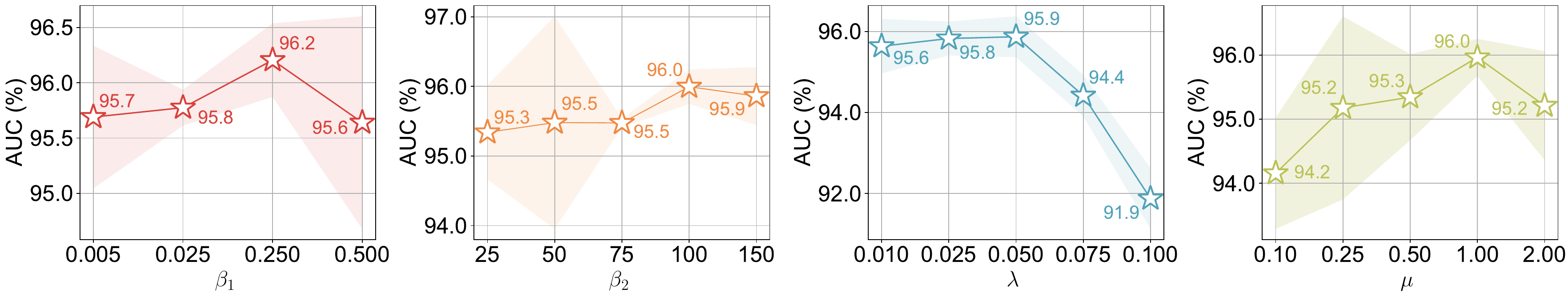}
    \caption{Hyperparameter sensitivity analysis on ACT.}
    \label{fig:hyper_act}
\end{figure*}

\subsection{Additional Results: Ablation Study}
We illustrate additional results of ablation studies on the clean, evasion, and poisoning datasets of COLLAB, Yelp, and ACT in Figure~\ref{fig:abla_add}. We provide additional analysis here.
\subsubsection{Analysis.}
\modelname~consistently performs well across all datasets and scenarios (clean, evasion, and poisoning). However, it is noteworthy that under clean conditions, removing the Kernelized Message-Passing (\textit{KMP}) results in a slight performance improvement. This suggests that \textit{KMP}, while enhancing efficiency, introduces some approximation that can slightly reduce the task performance.
\begin{itemize}[leftmargin=*]
    \item  \textbf{\modelname~(\textit{w/o KMP}).} Removing \textit{KMP} actually results in a slight increase in AUC across all datasets, as seen in the clean dataset scenarios where the AUC improves. For example, in the COLLAB dataset, the AUC increases from 93.6\% to 94.0\% after removing \textit{KMP}. This improvement occurs because \textit{KMP} is an approximation mechanism designed to boost efficiency at the potential cost of a small drop in precision. While \textit{KMP} is essential for handling larger datasets and improving scalability, its removal can enhance task performance in some cases.
    \item \textbf{\modelname~(\textit{w/o SM}).} Removing \textit{SM} causes a moderate performance drop. The impact is particularly evident in the COLLAB and ACT datasets, where the AUC drops by around 1\% to 3\% under all conditions. The results highlight that \textit{SM} plays a vital role in capturing long-range dependencies and improving model robustness, particularly in complex and adversarial environments.
    \item \textbf{\modelname~(\textit{w/o PRI}). }The absence of \textit{PRI} results in the most significant performance degradation across all scenarios, especially in the ACT dataset where AUC decreases by 2\% to 4\%. This suggests that \textit{PRI} for DGSL regularization is critical for ensuring that \modelname~focuses on the most relevant and least redundant structural information, enhancing both efficiency and robustness against adversarial attacks.
\end{itemize}
In conclusion, while \textit{KMP} enhances efficiency, its removal can lead to slight improvements in task performance, highlighting the trade-off between efficiency and precision. However, \textit{SM} and \textit{PRI} are indispensable for maintaining satisfying robustness and link prediction performance.

\begin{figure*}[!t]
    \begin{minipage}{0.51\textwidth}
        \centering
        \includegraphics[height=6.27cm]{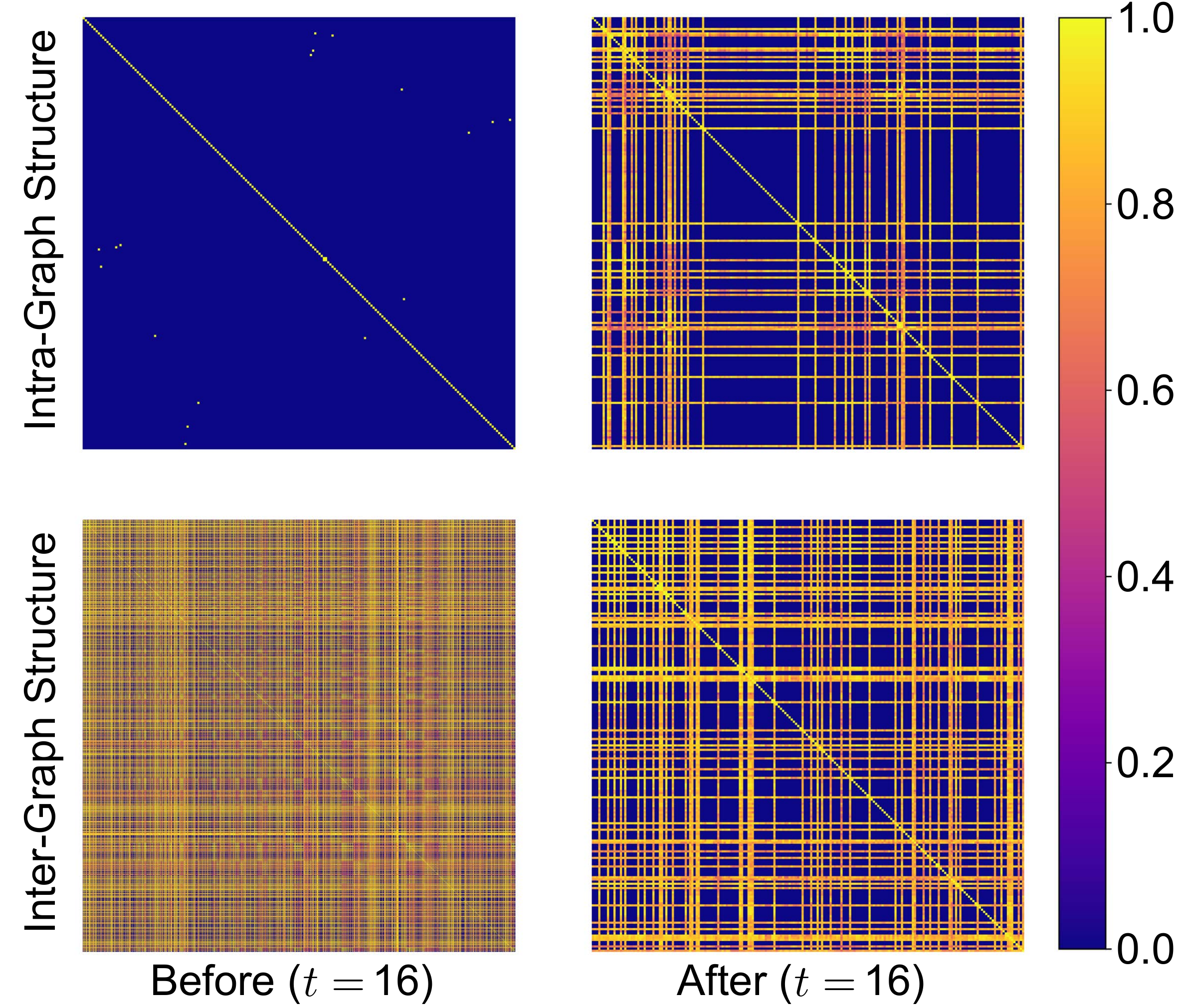}
        \caption{Structure visualization on COLLAB.}
        \label{fig:vis_collab}
    \end{minipage}
    \hspace{-0.9em}
    \begin{minipage}{0.51\textwidth}
        \centering
        \includegraphics[height=6.27cm]{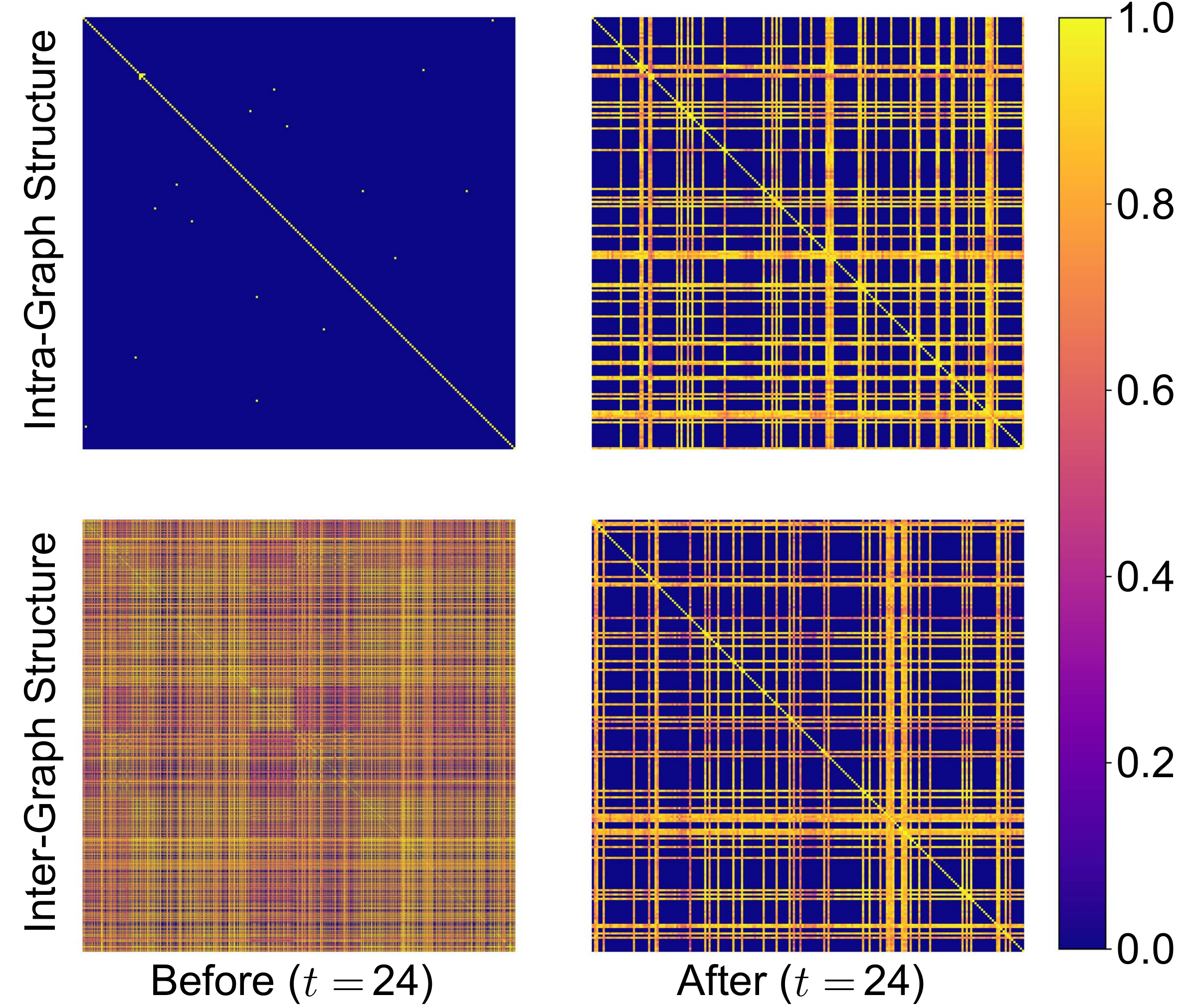}
        \caption{Structure visualization on Yelp.}
        \label{fig:vis_yelp}
    \end{minipage}
\end{figure*}

\subsection{Additional Results: Hyperparameter Sensitivity Analysis}
\label{sec:hyper_more}
We provide additional results of hyperparameter sensitivity analysis on Yelp and ACT in Figure~\ref{fig:hyper_yelp} and Figure~\ref{fig:hyper_act}, respectively. We provide additional analysis here.
\subsubsection{Analysis.}
Several key observations can be made.
\begin{itemize}[leftmargin=*]
    \item $\boldsymbol{\beta_1}$\textbf{: } controls the distortion in intra-graph structures. On both datasets, increasing $\beta_1$ generally improves performance up to a certain point. The optimal range for $\beta_1$ is around 0.250 to 0.500, where the AUC peaks. This suggests that moderate distortion helps in capturing more relevant structural information, but too much distortion can begin to harm performance.
    \item $\boldsymbol{\beta_2}$\textbf{: } controls the distortion in inter-graph structures. The analysis shows that while moderate values of $\beta_2$ (around 50 to 75) maintain good performance, higher values lead to a noticeable drop, especially on Yelp. This indicates that while some level of distortion can help in modeling temporal dependencies, too much can disrupt the temporal coherence, degrading the effectiveness.
    \item $\boldsymbol{\lambda}$\textbf{: } controls the trade-off between node embeddings and loss terms. The optimal performance is observed at $\lambda=$ 0.050 on both datasets. Higher values cause a significant drop in AUC, suggesting that \modelname~becomes overfitted to either the embeddings or the loss terms if $\lambda$ is not balanced correctly.
    \item $\boldsymbol{\mu}$\textbf{: } is the Lagrangian hyperparameter of PRI for DGSL. Increasing $\mu$ generally improves performance, with the best results at $\mu=$ 1.0. Beyond this point, the performance stabilizes or slightly decreases, indicating that while regularization is crucial, excessive regularization can slightly reduce the flexibility and performance of \modelname.
\end{itemize}
Overall, the sensitivity analysis highlights the importance of hyperparameter tuning to balance the robustness and task performance effectively. The results emphasize that while regularization and distortion are essential for capturing relevant structural features and ensuring robustness, there is a fine line where too much will degrade the performance.

\subsection{Additional Results: Visualization of Learned Structures}
We show additional results of visualization of the learned intra- and inter-graph structures on COLLAB and Yelp in Figure~\ref{fig:vis_collab} and Figure~\ref{fig:vis_yelp}, respectively. We provide additional analysis here.
\subsubsection{Analysis.}
The following observations can be made.
\begin{itemize}[leftmargin=*]
    \item \textbf{Intra-Graph Structures. }Before training, the intra-graph structures are relatively sparse, indicating that the initial connections between nodes are limited and potentially weak in capturing the underlying predictive patterns. After training, through dynamic graph structure learning, these intra-graph connections are significantly strengthened, particularly in areas that represent underlying predictive patterns and spatio-temporally invariant structures. This enhancement suggests that \modelname~has effectively identified and reinforced key structures that are essential for accurate prediction, while preserving important temporal dependencies.
    \item \textbf{Inter-Graph Structures. }Before training, the inter-graph structures are initially dense, reflecting the raw similarity calculations from the features without any sparsification. This density indicates that the initial connections between different graph snapshots are numerous, potentially including a lot of redundant information. After training, the inter-graph structures exhibit a form of sparsification. The dynamic graph structure learning process has pruned away redundant connections and strengthened the critical structures that play a key role in spatio-temporal message-passing. This pruning not only reduces complexity but also enhances the robustness of the learned graph representations by focusing on the most relevant and informative connections.
\end{itemize}

The visualizations effectively demonstrate \modelname's capability to refine both intra- and inter-graph structures, reinforcing essential connections while eliminating redundancies, which contributes to more robust graph representations.

\subsection{Additional Analysis: Against Non-Targeted Adversarial Attacks}
We provide additional insights for the results in Section~\ref{sec:exp_non_targeted}.
\begin{itemize} [leftmargin=*]
    \item \textbf{Superior Performance.} Our \modelname~consistently outperforms other baselines across all datasets. This superiority is especially noticeable in scenarios involving structure attacks and feature attacks at varying intensity levels. For example, on the COLLAB dataset, \modelname~achieves an AUC of 92.60\% under structure attacks, while the next best-performing DGIB-Cat reaches only 84.16\%. Similarly, under feature attacks ($\rho=$ 0.5), \modelname~attains 68.53\% AUC, which significantly outperforms other baselines like RGCN and WinGNN.
    \item \textbf{Compared with Dynamic GNNs and DGSL Baselines.} Dynamic GNNs, like DySAT and EvolveGCN, demonstrate underwhelming performance under adversarial conditions due to their insufficient modeling of complex coupling dynamics in the graph data. Similarly, DGSL baselines like RDGSL and TGSL show sensitivity to noise and adversarial attacks, which results in sharp declines in AUC scores, particularly under high-intensity feature attacks.
    \item \textbf{Compared with Robustness-based Baselines.} While DGIB shows comparable robustness in some cases, they generally fall short when handling long-range dependencies and complex temporal relationships. \modelname's selective dependencies modeling and robust graph structure learning contribute to its better performance and lower sensitivity to adversarial attacks.
\end{itemize}

In conclusion, results demonstrate that \modelname~is superior robust against non-targeted adversarial attacks, outperforming other baseline models significantly. Its advanced structure learning and selective state-space modeling contribute to maintaining high performance under challenging adversarial conditions.

\subsection{Additional Analysis: Against Targeted Adversarial Attacks}
We provide additional insights for the results in Section~\ref{sec:exp_targeted}.
\begin{itemize}[leftmargin=*]
    \item \textbf{Robustness.} \modelname~demonstrates strong robustness across all three datasets under both evasion and poisoning attacks. \modelname~shows the lowest average percentage decrease in AUC scores compared to other competitive baselines, highlighting its resilience against targeted manipulations. For evasion attacks, \modelname's AUC drops from 93.60\% to 81.78\% in COLLAB when the number of perturbations ($n$) is 1, resulting in a relative decrease of 12.6\%. This decrease is less severe than that observed in other baselines. For poisoning attacks, \modelname again outperforms its peers. For example, AUC drop for \modelname~is 13.1\% under evasion attacks with $n =$ 1 in Yelp, while for TGSL, the decrease is much more pronounced.
    \item \textbf{Baseline Comparisons.} Baselines like WinGNN and DGIB-Cat, which also demonstrate relative robustness, still suffer from more significant performance drops compared to \modelname. For example, WinGNN’s AUC decreases by 19.6\% under evasion attacks with $n =$ 2 in ACT, whereas \modelname's decrease is only 11.5\%. It is noteworthy that, ACT poses significant challenges for most baselines, which is characterized by strong temporal relations and complex interactions. Baselines like SpoT-Mamba and TGSL exhibit larger AUC decreases across the board, indicating their higher vulnerability to targeted adversarial attacks. This contrasts with \modelname, which manages to retain a higher level of accuracy even under intense adversarial conditions.
    \item \textbf{Gradual Degradation.} \modelname~shows a more gradual degradation in the performance as the intensity of the adversarial attack increases. This indicates that \modelname~is not only robust to initial perturbations but can also withstand higher levels of targeted manipulation better than other baselines.
\end{itemize}
In summary, results show \modelname~is superior against targeted adversarial attacks, outperforming other state-of-the-art baselines in terms of maintaining accuracy and robustness. This robustness across various datasets and attack scenarios highlights \modelname's potential for real-world applications where security and reliability are paramount.
\section{Implementation Details}
In this section, we provide implementation details.

\subsection{Hardware-Aware Dynamic Graph Selective Scan Implementations}
\label{sec:hardware}
Implementation of the Hardware-Aware Dynamic Graph Selective Scan Algorithm ($\operatorname{DGSS}$) for capturing the long-range dependencies in dynamic graphs necessitates efficient management of the memory and computational demands, particularly when processing large-scale dynamic graphs. Below, we detail the implementations, with a focus on minimizing memory overhead and maximizing computational throughput through hardware-specific techniques.

Specifically, \modelname~reads the $\mathcal{O}(BTD + ND)$ input data of $\boldsymbol{\mathcal{A}}$, $\boldsymbol{\mathcal{B}}$, $\boldsymbol{\mathcal{C}}$, and $\boldsymbol{\Delta}$ from High Bandwidth Memory (HBM), discretizes the inter-graph structures, and processes the intermediate stages sized $\mathcal{O}(BTDN)$ in Static Random-Access Memory (SRAM), before dispatching the final output, sized $\mathcal{O}(BTD)$, back to HBM. The discretization step focuses on isolating the critical components that contribute to long-range dependencies, effectively filtering out noise and less relevant temporal structures. Within the SRAM, a parallel associative scan is executed, computing intermediate states essential for modeling long-range dependencies within the temporal dimension of the dynamic graph. By localizing computation within SRAM, data is processed more rapidly, significantly reducing the need for extensive memory access and thereby enhancing throughput.

This implementation substantially decreases the number of memory read/write operations, reducing I/O costs by a factor of $\mathcal{O}(N)$ and achieving a practical speedup. The fusion of multiple operations into a single kernel not only accelerates processing but also preserves the integrity of long-range dependencies within the dynamic graph structures. For sequences that are extremely long to fit entirely within SRAM, a chunk-based processing strategy is employed. The sequence is divided into smaller chunks, each undergoing the same processing steps independently. Intermediate states from one chunk inform the next, ensuring long-range dependencies are maintained across the entire sequence. During the backward pass, memory demands are further mitigated using recomputation techniques. Instead of storing intermediate states of size $\mathcal{O}(BTDN)$, which could impose memory constraints, these states are recomputed on the fly during the backward computation. This approach significantly reduces the memory required in HBM, avoiding the overhead associated with storing and reloading large datasets.

By integrating these hardware-aware optimizations into the selective modeling process, \modelname~is enhanced in its ability to capture long-range dependencies in dynamic graphs, while ensuring the system remains efficient and scalable for large-scale applications.

\subsection{Implementation Details of \modelname}
\subsubsection{Training.}
The number of training epochs for optimizing our proposed \modelname~and all baselines is set to 1000. We employ an early stopping strategy, terminating the training if there is no improvement in the validation set performance for 50 consecutive epochs. For \modelname, the hyperparameter $\beta_1$ is chosen from \{0.005, 0.025, 0.250, 0.500\}, $\beta_2$ is chosen from \{25, 50, 75, 100, 150\}, $\lambda$ is chosen from \{0.010, 0.025, 0.050, 0.075, 0.100\}, and $\mu$ is chosen from \{0.10, 0.5, 0.50, 1.00, 2.00\}. For other parameters, we utilize the Adam optimizer~\cite{kingma2014adam}, carefully tuning the learning rate and weight decay specific to each dataset. A grid search is performed on the validation set to identify the optimal parameter settings. All parameters are initialized randomly.

\subsubsection{Evaluation.}
\label{sec:evaluation_add}
In line with experimental settings, we randomly split the dynamic graph datasets into training, validation, and testing sets in chronological order. Negative edges are sampled from node pairs that do not have existing edges, with the same negative edges used for both validation and testing across all baseline methods and our model. The number of positive edges is set equal to the number of negative edges. We evaluate performance using the Area Under the ROC Curve (AUC). As our focus is on future link prediction, we use the inner product of a pair of learned node representations to predict link occurrence. Specifically, the link predictor $g(\cdot)$ is implemented as the inner product of hidden embeddings, a common approach in future link prediction tasks. The non-linear activation function $\sigma$ is set to $\mathrm{Sigmoid(\cdot)}$. All experiments are run five times, and we report the average results along with standard deviations.

\subsection{Implementation Details of Baselines}
We provide the baseline methods implementations with respective licenses as follows.
\begin{itemize}[leftmargin=*]
    \item \textbf{GAE}~\cite{kipf2016variational}: \texttt{[MIT License]}\\
    \url{https://github.com/DaehanKim/vgae_pytorch}.
    \item \textbf{VGAE}~\cite{kipf2016variational}: \texttt{[MIT License]}\\
    \url{https://github.com/DaehanKim/vgae_pytorch}.
    \item \textbf{GAT}~\cite{velivckovic2018graph}: \texttt{[MIT License]}\\
    \url{https://github.com/pyg-team/pytorch_geometric}.
    \item \textbf{GCRN}~\cite{seo2018structured}: \texttt{[MIT License]}\\
    \url{https://github.com/youngjoo-epfl/gconvRNN}.
    \item \textbf{EvolveGCN}~\cite{pareja2020evolvegcn}: \texttt{[Apache License]}\\
    \url{https://github.com/IBM/EvolveGCN}.
    \item \textbf{DySAT}~\cite{sankar2020dysat}: \texttt{[Unspecified]}\\
    \url{https://github.com/FeiGSSS/DySAT_pytorch}.
    \item \textbf{SpoT-Mamba}~\cite{choi2024spot}: \texttt{[Unspecified]}\\
    \url{https://github.com/bdi-lab/SpoT-Mamba}.
    \item \textbf{RDGSL}~\cite{zhang2023rdgsl}: \texttt{[Unspecified]}\\
    \url{https://github.com/FDUDSDE/RDGSL}.
    \item \textbf{TGSL}~\cite{zhang2023time}: \texttt{[MIT License]}\\
    \url{https://github.com/ViktorAxelsen/TGSL}.
    \item \textbf{RGCN}~\cite{zhu2019robust}: \texttt{[Unspecified]}\\
    \url{https://github.com/ZW-ZHANG/RobustGCN}.
    \item \textbf{WinGNN}~\cite{zhu2023wingnn}: \texttt{[Unspecified]}\\
    \url{https://github.com/thudm/WinGNN}.
    \item \textbf{DGIB}~\cite{yuan2024dynamic}: \texttt{[MIT License]}\\
    \url{https://github.com/RingBDStack/DGIB}.
\end{itemize}

\subsection{Hardware and Software Configurations}
\label{sec:config}
We conduct the experiments with:
\begin{itemize}[leftmargin=*]
    \item Operating System: Ubuntu 20.04 LTS.
    \item CPU: Intel(R) Xeon(R) Platinum 8358 CPU@2.60GHz with 1TB DDR4 of Memory.
    \item GPU: NVIDIA Tesla A100 SMX4 with 40GB of Memory.
    \item Software: CUDA 10.1, Python 3.8.12, PyTorch\footnote{\url{https://github.com/pytorch/pytorch}} 1.9.1, PyTorch Geometric\footnote{\url{https://github.com/pyg-team/pytorch_geometric}} 2.0.1.
\end{itemize}
\section{Related Work Discussions}
\label{sec:related_work_add}
\subsection{Dynamic Graph Representation Learning}
Dynamic graph learning is prevalent across various domains. Based on the types of temporal information to be encoded, dynamic graphs can be classified as either discrete-time or continuous-time~\cite{kazemi2020representation}. Discrete-time dynamic graphs typically consist of discrete snapshots that capture periodic changes, while lacking connection patterns between graphs. Continuous-time dynamic graphs retain all edges within a single graph, marked with timestamps, which allows for the efficient storage of the entire dynamic graph but requires sophisticated encoding of temporal information.

\textbf{Dynamic Graph Neural Networks (DGNNs)} are widely used to learn dynamic graph representations by inherently modeling both spatial and temporal predictive patterns. Existing DGNNs can be categorized into: \ding{172} \textbf{Stacked DGNNs} apply separate GNNs to process each graph, and feed the GNNs output into deep sequential models. Stacked DGNNs capture dynamics alternately, making them the mainstream. \ding{173} \textbf{Integrated DGNNs} serve as the encoders that integrate GNNs and deep sequential models within a single layer, unifying both spatial and temporal modeling~\cite{han2021dynamic}. The essence of both is to encode the temporal dynamics into learnable node representations for downstream applications.

Compared to the framework in our paper, where we introduce \modelname~with a focus on enhancing both efficiency and robustness, most traditional DGNN methods do not address the computational complexity and robustness simultaneously, especially in the presence of adversarial attacks.

\subsection{Robust Dynamic Graph Learning}
Dynamic graphs often originate from open environments, which naturally contain noise and redundant features irrelevant to the prediction task. This compromises DGNN performance in downstream tasks. Additionally, DGNNs are prone to the over-smoothing phenomenon, making them less robust and vulnerable to perturbations and adversarial attacks~\cite{zhu2023wingnn}.
Compared to robust GNNs for static graphs, which can be generally grouped into adversarial training~\cite{xu2019topology, feng2019graph}, graph denoising~\cite{wu2019adversarial, zhang2020gnnguard}, and certifiable robustness~\cite{wang2021certified, zugner2019certifiable}, there are no DGNNs tailored for efficient robust representation learning, currently.

In contrast, our \modelname~framework explicitly addresses these challenges by formulating the self-supervised Principle of Relevant Information (PRI) to dynamic graph structure learning, which regularizes the learned structures to enhance global robustness, a feature largely absent in existing DGNN literature.

\subsection{Dynamic Graph Structure Learning}
Graph Structure Learning (GSL) has gained significant attention in recent years, aiming to simultaneously learn a denoised graph structure and better graph representations~\cite{zhu2021deep}. It is worth noting that, GSL has also been widely used to enhance the robustness of GNNs. Currently, many studies have successfully explored GSL methods for static graphs. However, for dynamic graphs that incorporate both spatial and temporal structures, Dynamic Graph Structure Learning (DGSL) remains largely under-explored.

SLCNN~\cite{zhang2020spatio} explicitly models spatio-temporal structure with Pseudo 3D networks for traffic forecasting.
ST-LGSL~\cite{tang2022spatio} learns implicit structures with the respective diffusion and gated temporal convolutions.
TGSL~\cite{zhang2023time} enhances temporal structure learning by predicting and selecting time-aware edges with Gumbel-Top-$k$ sampling.
DGLL~\cite{li2023dynamic} forecasts multivariate time series by decomposing association patterns into long-short-term structure dependencies.
RDGSL~\cite{zhang2023rdgsl} introduces a dynamic graph filter to effectively denoise graphs and enhance representation robustness.
DHSL~\cite{zhang2018dynamic} dynamically optimizes hypergraph structures using correlations from both label and feature spaces.
DGIB~\cite{yuan2024dynamic} applies the Information Bottleneck to refine dynamic structures and achieve robustness against adversarial attacks.

Overall, DGSL faces the additional challenge of simultaneously learning structures within individual graphs and optimizing message-passing structures across graph snapshots. The enhanced spatio-temporal coupling structure is expected to improve the robustness of the representations. Another significant challenge is the computational efficiency bottleneck, as existing methods exhibit quadratic complexity in both the spatial and temporal dimensions, rendering them impractical for large-scale and long-sequence dynamic graphs. Our \modelname~introduces a novel kernelized dynamic message-passing operator, which significantly reduces computational complexity, addressing the efficiency bottleneck inherent in existing DGSL methods.

\subsection{State Space Models}
State Space Models (SSMs) have long been a fundamental framework in the field of dynamic system modeling, extensively utilized in areas ranging from control theory~\cite{aoki2013state} to deep learning~\cite{gu2023mamba}. The essence of the SSM is known as linear time-invariant systems that map input sequence to response sequence. To make SSMs differentiable, discrete-time SSMs utilize a step size parameter~\cite{gu2020hippo} and can be computed efficiently. Further, Structured State Space Models (S4) enhance the efficiency and scalability of SSMs by employing reparameterization techniques. Recently, \citet{gu2023mamba} introduced the efficient and powerful Selective SSM (SSSM) called Mamba, which uses recurrent scans and a selection mechanism to act as the attention mechanism. SSMs and their variants show promising performance in efficiently modeling sequential data, while also demonstrating their capabilities in graph learning~\cite{behrouz2024graph, wang2024graph, li2024stg, choi2024spot}. However, while SSMs and their variants can efficiently model sequential data with vector-form features, there is no consensus or in-depth exploration on how to model the non-Euclidean dynamic structures and matrix-form node features for a dynamic graph.

Our \modelname~incorporates the Selective SSM (SSSM) with the discretized inter-graph structures, not only captures long-range dependencies but also addresses the computational challenges, presenting a comprehensive solution for dynamic graph representation learning that surpasses the existing SSM applications.
\section{Limitations}
The limitations of \modelname~include the following aspects. First, while we evaluated the robustness of \modelname~against various adversarial attacks, these scenarios primarily focused on altering graph structures and node features. Other types of adversarial attacks, such as those targeting edge attributes or introducing synthetic nodes, have not been thoroughly explored. Additionally, although \modelname~improves robustness and efficiency, the interpretability of the learned dynamic graph structures still remains challenging. The complexity may obscure the reasons behind certain predictions or the importance of specific graph subgraphs or motifs. Lastly, the proposed \modelname~is currently tailored only for discrete dynamic graphs and has not been validated on continuous dynamic graphs, which is another domain of research scope. In summary, future research could extend the \modelname's applicability, explore a wider range of adversarial attack scenarios, enhance the interpretability of learned structures, and verify its effectiveness on continuous dynamic graphs.

\end{document}